\newtheorem*{rep@theorem}{\rep@title}
\newcommand{\newreptheorem}[2]{%
\newenvironment{rep#1}[1]{%
 \def\rep@title{#2 \ref{##1}}%
 \begin{rep@theorem}}%
 {\end{rep@theorem}}}
\definecolor{royalblue}{rgb}{0.25, 0.41, 0.88}
\theoremstyle{plain}
\theoremstyle{plain}
\newtheorem{theorem}{Theorem}[section]
\newtheorem{proposition}[theorem]{Proposition}
\newtheorem{lemma}[theorem]{Lemma}
\newtheorem{corollary}[theorem]{Corollary}
\theoremstyle{definition}
\newtheorem{definition}[theorem]{Definition}
\newtheorem{assumption}[theorem]{Assumption}
\theoremstyle{remark}
\newtheorem{remark}[theorem]{Remark}
\newcommand{\mrd}{\mathrm{d}}
\newcommand{\blue}{\color{black}}
\newcommand{\var}{\mathrm{Var}}
\newcommand{\RERM}{\widehat{\mathrm{R}}}
\newcommand{\RTERM}{\widehat{\mathrm{R}}_{\gamma}}
\newcommand{\RTrue}{\mathrm{R}}
\newcommand{\RTT}{\overline{\mathrm{R}}_{\gamma}}
\newcommand{\RTET}{\mathrm{R}_{\gamma}}
\newcommand{\SKL}{D_{\mathrm{SKL}}}
\newcommand{\KLr}{\mathrm{KL}}
\newcommand{\LA}{h}
\newcommand{\mbE}{\mathbb{E}}
\newcommand{\nn}{\nonumber}
\icmltitlerunning{Generalization and Robustness of the Tilted Empirical Risk}
\begin{document}

\doparttoc 
\faketableofcontents 
\twocolumn[
\icmltitle{Generalization and Robustness of the Tilted Empirical Risk}




\begin{icmlauthorlist}
\icmlauthor{Gholamali Aminian}{yyy}
\icmlauthor{Amir R. Asadi}{comp}
\icmlauthor{Tian Li}{comp2}
\icmlauthor{Ahmad Beirami}{sch}
\icmlauthor{Gesine Reinert}{yyy,oxs}
\icmlauthor{Samuel N. Cohen}{yyy,oxm}
\end{icmlauthorlist}

\icmlaffiliation{yyy}{The Alan Turing Institute, London, UK}
\icmlaffiliation{comp}{Statistical Laboratory, University of Cambridge, Cambridge, UK}
\icmlaffiliation{comp2}{ Computer Science Department, University of Chicago, USA}
\icmlaffiliation{sch}{Google DeepMind, USA}
\icmlaffiliation{oxm}{Mathematical Institute, University of Oxford, Oxford, UK}
\icmlaffiliation{oxs}{Department of Statistics, University of Oxford, Oxford, UK}

\icmlcorrespondingauthor{Gholamali Aminian}{gaminian@turing.ac.uk}

\icmlkeywords{Tilted Empirical Risk, Generalization Error, Robustness, Excess Risk}

\vskip 0.3in
]



\printAffiliationsAndNotice{}  

\begin{abstract}
The generalization error (risk) of a supervised statistical learning algorithm quantifies its prediction ability on previously unseen data. Inspired by exponential tilting, \citet{li2020tilted} proposed the {\it tilted empirical risk} (TER)  as a non-linear risk metric for machine learning applications such as classification and regression problems. In this work, we examine the generalization error of the tilted empirical risk in the robustness regime under \textit{negative tilt}. Our first contribution is to provide uniform and information-theoretic bounds on the {\it tilted generalization error}, defined as the difference between the population risk and the tilted empirical risk, under negative tilt for unbounded loss function under bounded $(1+\epsilon)$-th moment of loss function for some $\epsilon\in(0,1]$ with a convergence rate of $O(n^{-\epsilon/(1+\epsilon)})$ where $n$ is the number of training samples, revealing a novel application for TER under no distribution shift. Secondly, we study the robustness of the tilted empirical risk with respect to noisy outliers at training time and provide theoretical guarantees under distribution shift for the tilted empirical risk. We empirically corroborate our findings in simple experimental setups where we evaluate our bounds to select the value of tilt in a data-driven manner. 
\end{abstract}

\section{Introduction}\label{sec:intro}

Empirical risk minimization (ERM) is a popular framework in machine learning. The performance of the empirical risk (ER) is affected when the data set is strongly imbalanced  or contains outliers. For these scenarios, 
inspired by the log-sum-exponential operator with applications in multinomial linear regression and naive Bayes classifiers \citep{calafiore2019log,murphy2012machine,williams1998bayesian}, \citet{li2020tilted} propose the tilted empirical risk (TER) for supervised learning applications, such as classification and regression problems. 
\citet{li2020tilted,li2023tilted} showed that tilted empirical risk minimization (TERM) can handle class imbalance, mitigate the effect of outliers, and enable fairness between subgroups. Different applications of TERM 
have been explored, e.g., differential privacy~\citep{lowy2021output}, semantic segmentation~\citep{szabo2021tilted}, noisy label self-correction~\citep{zhou2020robust}, and off-policy learning and evaluation~\citep{behnamnialog}.\footnote{\citet{behnamnialog} use the term log-sum-exponential, which is the same as the tilted empirical risk studied herein.} In this paper, we 
corroborate the empirical success of the TERM framework through  statistical learning theory.

A central concern in statistical learning theory is understanding the efficacy of a learning algorithm when applied to {\em test} data. This evaluation is typically carried out by investigating the {\it generalization error}, which quantifies the disparity between the performance of the algorithm on the training dataset and its performance on previously unseen data, drawn from the same underlying distribution, via a risk function. Understanding the generalization behaviour of learning algorithms is one of the most important objectives in statistical learning theory. Various approaches have been developed for empirical risk~\citep{rodrigues2021information}, including VC dimension-based bounds~\citep{vapnik1999overview}, stability-based bounds~\citep{bousquet2002stability}, PAC Bayesian bounds~\citep{mcallester2003pac}, and 
information-theoretic bounds~\citep{russo2019much,xu2017information}. This paper focuses on the generalization and robustness of the tilted empirical risk (tilted generalization error) of learning algorithms. Our contributions are:

\begin{itemize}
    \item In Section~\ref{sec: ubounded loss gen}, we provide upper and lower bounds on the tilted generalization error under {\em unbounded loss} functions for the negative tilt with a bound via uniform and information-theoretical approaches and establish the convergence rate of $O(n^{-\epsilon/(1+\epsilon)})$ for some $\epsilon\in(0,1]$.
    \item In Section~\ref{sec: robust}, we study the robustness of the tilted empirical risk under distribution shift induced by noise or outliers for unbounded loss functions with bounded $(1+\epsilon)$-th moment assumption, for some $\epsilon\in(0,1]$, and negative tilt, and derive generalization bounds.
    
    \item  In Section~\ref{sec:Experiment}, we provide a data-driven approach to selecting the value of tilt for regression problems, with observed test errors
    corroborating the theoretical bounds. 
    \item In Section~\ref{sec: reg problem unbounded}, we study the KL-regularized TERM problem under unbounded loss function with bounded second moment assumption and provide an upper bound on the expected tilted generalization error with convergence rate $O(n^{-\epsilon})$  under unbounded loss function with bounded $(1+\epsilon)$-th moment for some $\epsilon\in(0,1]$.
\end{itemize}  

\section{Preliminaries}\label{sec:prelim}
\paragraph{Notations:} 
Upper-case letters denote random variables (e.g., $Z$), lower-case letters denote the realizations of random variables (e.g., $z$), and calligraphic letters denote sets (e.g., $\mathcal{Z}$). 
All logarithms are in the natural base. The tilted expectation of a random variable $X$ with tilting $\gamma$ is defined as $\frac{1}{\gamma}\log(\mbE[\exp(\gamma X)])$. The set of 
probability distributions (measures) over a space $\mathcal{X}$ with finite variance is denoted by $\mathcal{P}(\mathcal{X})$.

\textbf{Information measures:} For two probability measures $P$ and $Q$ defined on the space $\mathcal{X}$, such that $P$ is absolutely continuous with respect to $Q$,  the \textit{Kullback-Leibler} (KL) divergence between $P$ and $Q$ is 
$\KLr(P\|Q):=\int_\mathcal{X}\log\left({\mrd P}/{\mrd Q}\right) \mrd P.$  If $Q$ is also absolutely continuous with respect to $P$, then the \textit{symmetrized KL divergence} is  $\SKL(P\|Q):= \KLr(P \| Q) + \KLr(Q\|P).$ 
 The mutual information between two random variables $X$ and $Y$ is defined as the KL divergence between the joint distribution and the product-of-marginal
distribution  $I(X;Y):= \KLr(P_{X,Y}\|P_X\otimes P_{Y}),$ where $P_X\otimes P_{Y}$ is the product of the marginal distributions. The \textit{conditional KL divergence} between $P_{Y|X}$ and $P_Y$ over $P_X$ is $\KLr(P_{Y|X} \| P_Y|P_{X}):=\int_\mathcal{X}\KLr(P_{Y|X=x} \| P_Y)dP_{X}(x).$  The \textit{symmetrized KL information} between $X$ and $Y$ is given by $I_{\mathrm{SKL}}(X;Y):= \SKL(P_{X,Y}\|P_X\otimes P_Y)$; see \citet{aminian2015capacity}. The \textit{total variation distance} between two densities $P$ and $Q$ is defined as $\mathbb{TV}(P,Q):=\int_{\mathcal{X}} |\mrd P-\mrd Q|.$

\subsection{ Problem Formulation}
Let $S = \{Z_i\}_{i=1}^n$ be the training set, where each sample $Z_i=(X_i,Y_i)$ belongs to the instance space $\mathcal{Z}:=\mathcal{X}\times \mathcal{Y}$; here $\mathcal{X}$ is the input (feature) space and $\mathcal{Y}$ is the output (label) space. We assume that $Z_i$ are i.i.d. generated from the same data-generating distribution $\mu$. We also assume $\tilde{Z}\sim \mu$ as an i.i.d. sample with respect to the training set.

A set of hypotheses $\mathcal{H}$ 
has elements $h:\mathcal{X}\mapsto\mathcal{Y}\in\mathcal{H}$. When $\mathcal{H}$ is finite, then its cardinality is denoted by $\mathrm{card}(\mathcal{H})$.
In order to measure the performance of the hypothesis $h$, we use a non-negative loss function $\ell:\mathcal{H} \times \mathcal{Z}  \to \mathbb{R}_0^+$.

We apply different methods to study the performance of our algorithms, including uniform and information-theoretic approaches. In uniform approaches, such as the VC-dimension and the Rademacher complexity approach \citep{vapnik1999overview,bartlett2002rademacher}, the hypothesis space $\mathcal{H}$ is independent of the learning algorithm. Therefore, these methods are algorithm-independent; our results for these methods do not specify the learning algorithms.

\textbf{Learning Algorithms:}  For information-theoretic approaches in supervised learning, following \citet{xu2017information}, we 
consider learning algorithms that are characterized by a Markov kernel (a conditional distribution) $P_{H|S}$. Such a learning algorithm maps a data set $S$ to a hypothesis in $\mathcal{H}$, which is chosen according to $P_{H|S}$. This concept thus includes randomized learning algorithms.

\subsection{ Risk Functions}
 The main quantity we are interested in is the {\it population risk}, defined by
\begin{equation}\label{eq: RTrue}
    \RTrue(\LA,\mu):=\mbE_{\tilde{Z}\sim\mu}[\ell(\LA,\tilde{Z})], \quad h\in\mathcal{H}. \nonumber 
\end{equation}
 As the distribution $\mu$ is unknown, in classical statistical learning, the (true) population risk for $h\in\mathcal{H}$ is estimated by the (linear) {\it empirical risk}  
\begin{equation}\label{eq: RERM}\RERM(\LA,S)=\frac{1}{n}\sum_{i=1}^n\ell(\LA,Z_i).
\end{equation}
The {\it generalization error} for the linear empirical risk 
is given by 
\begin{equation}\label{eq:generror1}
   \mathrm{gen}(h,S) := \RTrue(\LA,\mu)- \RERM(\LA,S). 
\end{equation}
This is the difference between the true risk and the linear empirical risk. The {\it TER}, as a non-linear empirical risk with tilt $\gamma$ \citep{li2020tilted}, a.k.a. log-sum-exponential,  
estimates the population risk by
\begin{equation}\RTERM(\LA,S)=\frac{1}{\gamma}\log\Big(\frac{1}{n}\sum_{i=1}^n \exp\big(\gamma\ell(\LA,Z_i)\big)\Big).
\nonumber
\end{equation} 
The TER is an increasing function in the tilt parameter $\gamma$~\citep[Theorem~1]{li2023tilted}, and as $\gamma\rightarrow 0$, the TER converges to the linear empirical risk in \eqref{eq: RERM}. In this work, we focus on negative tilt $\gamma<0$.
Inspired by \citet{li2020tilted}, the primary objective is to optimize the population risk; the TERM is used in order to help the learning dynamics. 
For our analysis, we decompose the population risk as follows:
\begin{equation} \label{eq: def generalization error} 
 \RTrue(\LA,\mu) =    \underbrace{\RTrue(\LA,\mu)- \RTERM(\LA,S)}_{\text{tilted generalization error}}+ \underbrace{\RTERM(\LA,S),}_{\text{tilted empirical risk}}
\end{equation}
where we define the \textit{tilted generalization error} as
\begin{equation}
    \mathrm{gen}_{\gamma}(h,S):=\RTrue(\LA,\mu)- \RTERM(\LA,S) .\label{eq:tgen}
\end{equation}
In learning theory, for uniform approaches, most works focus on bounding the linear generalization error $\mathrm{gen}(h,S)$ from \eqref{eq:generror1} 
such that under the distribution of the dataset $S$, with probability at least $(1-\delta)$, it holds that for all $h\in\mathcal{H}$,
\begin{equation}\label{eq: tradition gen}
    \begin{split}
       |\mathrm{gen}(h,S)| 
       |\leq g(\delta,n) ,
    \end{split}
\end{equation}
where $g$ is a real function dependent on $\delta\in(0,1)$ and $n$ is the number of data samples. Similarly, for the tilted generalization error from \eqref{eq:tgen}, we are interested in finding a bound $g_t(\delta,n,\gamma)$ such that with probability at least $1- \delta$, under the distribution of $S$,
\begin{equation}\label{eq: gen term}
   \big|\mathrm{gen}_{\gamma}(h,S)\big|\leq g_t(\delta,n,\gamma), 
\end{equation}
where $g_t$ is a real function. We set $h^*(\mu):= \arg \min_{h\in \mathcal{H}} \mathrm{R}(h,\mu)$ and $h_\gamma^*(S):=\arg\min_{h\in \mathcal{H}} \hat{\mathrm{R}}_\gamma(h, S)$. We use the following notations. The excess risk under the tilted empirical risk is 
\begin{equation}\label{eq: excess risk}\mathfrak{E}_{\gamma}(\mu):=\mathrm{R}(h_\gamma^*(S),\mu)-\mathrm{R}(h^*(\mu),\mu). \end{equation} 
The expected TER with respect to the distribution of $S$ is
\begin{equation}\label{eq: RTT}
    \begin{split}
        \RTT(h,\mu^{\otimes n})=\mbE_{\mu^{\otimes n}}[\RTERM(\LA,S)],
    \end{split}
\end{equation}
and the tilted (true) population  risk is
\begin{equation}\label{eq: RTET}
    \begin{split}
        &\RTET(h,\mu^{\otimes n})\\&\quad=\frac{1}{\gamma}\log\Big(\mbE_{\mu^{\otimes n}}\Big[\frac{1}{n}\sum_{i=1}^n\exp(\gamma\ell(h,Z_i))\Big]\Big).
    \end{split}
\end{equation}
Under the i.i.d. assumption, the tilted population risk is equal to an entropic risk function~\citep{howard1972risk}. We also introduce the non-linear generalization error\footnote{We refer to this as non-linear generalization error since a non-linear transformation of the population risk, instead of the population risk, is used. }, as
 \begin{equation}\label{eq: tilted gen}
    \widehat{\mathrm{gen}}_{\gamma}(h,S):= \RTET(h,\mu^{\otimes n})- \RTERM(\LA,S).
\end{equation}
For ease of notation, we consider $\RTET(h,\mu^{\otimes n})=\RTET(h,\mu)$.

\textbf{Information-theoretic Approach:} For the information-theoretic approach, as the hypothesis $H$ is a random variable under a learning algorithm as Markov kernel, i.e., $P_{H|S}$, we take expectations over the hypothesis $H$. We denote the expected true risk, expected empirical risk, expected entropic risk, and expected tilted generalization error by 
\begin{equation}\label{eq: expected risks}
    \begin{split}
     &\RTrue(H, P_{H}\otimes \mu):=\mbE_{P_{H}\otimes \mu}[\ell(H,Z)],\\
&\RTT(H,Q_{H,S}) :=\mbE_{Q_{H,S}}[\RTERM(H,S)],\\
&\RTET(H,Q_{H,S}):= \\&\qquad\frac{1}{\gamma}\log\Big(\mbE_{Q_{H,S}}[\frac{1}{n}\sum_{i=1}^n \exp(\gamma \ell(H,Z_i))]\Big)\\
&\overline{\mathrm{gen}}_{\gamma}(H,S):=\mbE_{P_{H,S}}[\mathrm{gen}_{\gamma}(H,S)],
\end{split}
\end{equation}
where $Q_{H,S}\in\{P_H\otimes \mu^{\otimes n}, P_{H,S} \}$. In addition to bounds of the form \eqref{eq: gen term}, we provide upper bounds on the absolute value of expected tilted generalization error with respect to the joint distribution of $S$ and $H$, of the form
\begin{equation}\label{eq: non-linear gen}\nonumber 
    |\overline{\mathrm{gen}}_{\gamma}(H,S)|\leq g_e(n,\gamma),
\end{equation}
where $g_e$ is a real function. We also introduce the non-linear expected generalization error, which plays an important role in deriving our bounds, as
\begin{equation}
     \widehat{\mathrm{gen}}_{\gamma}(H,S):=  \RTET(H,P_{H}\otimes \mu^{\otimes n})-\RTET(H,P_{H,S}).
\end{equation}
\section{Generalization Bounds for Unbounded Loss Functions}\label{sec: ubounded loss gen}
In this section, we derive upper bounds on the tilted generalization error via uniform and information-theoretical approaches for negative tilt ($\gamma<0$) under bounded $(1+\epsilon)$-th moment of loss function, for some $\epsilon\in(0,1]$, with convergence rate of $O(n^{-\epsilon/(1+\epsilon)})$. Regarding the bounded loss function, we also provide the tilted generalization error bounds via uniform and information-theoretical approaches in Appendix~\ref{app_sec_bounded_case}. Note that, with a bounded loss function, we can exploit
the Lipschitz property of the logarithmic function. However, when dealing with unbounded losses, the loss function lacks the Lipschitz property, making it impossible to apply the same techniques used in the bounded case (see Appendix~\ref{app_sec_bounded_case}).

  Several works have already proposed some solutions to overcome the boundedness assumption under \textit{linear empirical risk}~\citep{haddouche2022pac,alquier2018simpler,holland2019pac} via a PAC-Bayesian approach under the bounded second-moment assumption. Furthermore, an upper bound on generalization error via VC-dimension and growth function is proposed in \citet[Corollary 12]{cortes2019relative} with convergence rate of $O(\log(n)n^{-\epsilon/(1+\epsilon)})$. In contrast, we derive bounds with a  convergence rate of $O(n^{-\epsilon/(1+\epsilon)})$. A more detailed comparison is provided in Section~\ref{sec:related}. 
\subsection{Uniform Bounds}\label{sec:uniform_bound}
 The following assumption is made for the uniform analysis.
\begin{assumption}[Uniform bounded ($1+\epsilon$)-th moment\footnote{Note that we assume that higher order moments, larger than $(1+\epsilon)$, are unbounded.}]\label{ass:HV_uniform}
    There is a constant $\kappa_u\in\mathbb{R}^+$ such that the loss function $(H,Z)\mapsto\ell(H,Z)$ satisfies $\mbE_{ \mu}[\ell^{1+\epsilon}(h,Z)]\leq \kappa_u^{1+\epsilon}$ uniformly for all $h \in \mathcal{H}$ and some $\epsilon\in(0,1]$.
\end{assumption}

The assumption on the $(1+\epsilon)$-th moment, Assumption~\ref{ass:HV_uniform}, is satisfied for example if the loss function is sub-Gaussian or sub-Exponential  under the distribution $\mu$ for all $h\in\mathcal{H}$, see \citet{boucheron2013concentration}. Inspired by the approach of \citet{behnamnialog}, which provides a regret bound based on a tilted operator, we aim to analyze the generalization error of tilted empirical risk within a uniform approach.
All proof details for the results in this section are deferred to Appendix~\ref{app: sec: ubounded loss uniform}.

For uniform bounds of the type \eqref{eq: gen term}, we decompose the tilted generalization error \eqref{eq:tgen} as follows,
\begin{equation}\label{eq: gen decom new}
    \begin{split}
&\mathrm{gen}_{\gamma}(h,S)=   \underbrace{\RTrue(h,\mu)- \RTET(h,\mu^{\otimes n})}_{I_1}+  \widehat{\mathrm{gen}}_{\gamma}(h,S),
    \end{split}
\end{equation}

We first derive an upper bound on term $I_1$ in the following Proposition.
\begin{proposition}\label{Prop:True-TT_unbounded}
    Under Assumption~\ref{ass:HV_uniform}, for $\gamma<0$ and some $\epsilon\in(0,1]$, 
    the difference between the population risk and the tilted population risk satisfies
     \begin{equation}
        \begin{split}
          &0\leq  \RTrue(h,\mu)- \RTET(h,\mu^{\otimes n})\leq |\gamma|^{\epsilon}\kappa_u^{1+\epsilon}.
        \end{split}
    \end{equation}
\end{proposition}

Then, using Bernstein's inequality~\citep{boucheron2013concentration} and  properties of the logarithm, we can provide upper and lower bounds on the tilted generalization error.

\begin{proposition}\label{prop: upper uniform unbounded}
     Given Assumption~\ref{ass:HV_uniform} for some $\epsilon\in(0,1]$, for any fixed $h\in\mathcal{H}$ with probability at least $(1-\delta)$, then the following upper bound holds on the tilted generalization error for $\gamma<0$ and some $\epsilon\in(0,1]$, 
    \begin{equation*}
    \begin{split}
      \mathrm{gen}_{\gamma}(h,S)&\leq \frac{2\exp(|\gamma| \kappa_u)}{|\gamma|}\sqrt{\frac{2^{\epsilon}|\gamma|^{1+\epsilon}\kappa_u^{1+\epsilon}\log(2/\delta)}{n}}\\&+\frac{4\exp(|\gamma| \kappa_u)\log(2/\delta)}{3n|\gamma|}+|\gamma|^{\epsilon}\kappa_u^{1+\epsilon}.
      \end{split}
    \end{equation*}    
\end{proposition}
\begin{proposition}\label{prop: lower uniform unbounded}
        Given Assumption~\ref{ass:HV_uniform} for some $\epsilon\in(0,1]$, there exists a $\zeta\in(0,1)$ such that for $n\geq \frac{(4\gamma^2\kappa_u^2+8/3 \zeta )\log(2/\delta)}{\zeta^2\exp(2\gamma \kappa_u)}$, for any fixed $h\in\mathcal{H}$ with probability at least $(1-\delta)$, and $\gamma<0$, the following lower bound on the tilted generalization error holds,
    \begin{equation*} 
    \begin{split}
    \mathrm{gen}_{\gamma}(h,S)&\geq -\frac{2\exp(|\gamma| \kappa_u)}{(1-\zeta)|\gamma|}\sqrt{\frac{2^{\epsilon}|\gamma|^{1+\epsilon}\kappa_u^{1+\epsilon}\log(2/\delta)}{n}} \\&-\frac{4\exp(|\gamma| \kappa_u)(\log(2/\delta))}{3n|\gamma|(1-\zeta)}. 
        \end{split}
\end{equation*}
\end{proposition}
Combining Proposition~\ref{prop: upper uniform unbounded} and Proposition~\ref{prop: lower uniform unbounded}, we derive an upper bound on the absolute value of the tilted generalization error.
\begin{theorem}\label{thm: Abs gen unbounded}
    Under the same assumptions in Proposition~\ref{prop: lower uniform unbounded} and a finite hypothesis space, then for $n\geq \frac{(4|\gamma|^{1+\epsilon}\kappa_u^{1+\epsilon}+8/3 \zeta )\log(2/\delta)}{\zeta^2\exp(2\gamma \kappa_u)}$, for $\gamma<0$ and with probability at least $(1-\delta)$, the absolute value of the titled generalization error satisfies 
    \begin{align}
&\sup_{h\in\mathcal{H}}|\mathrm{gen}_{\gamma}(h,S)|\\\nn\leq &\frac{2\exp(|\gamma| \kappa_u)}{(1-\zeta)|\gamma|}\sqrt{\frac{2^{\epsilon}|\gamma|^{1+\epsilon}\kappa_u^{1+\epsilon}B(\delta)}{n}} +\frac{4\exp(|\gamma| \kappa_u)B(\delta)}{3n|\gamma|(1-\zeta)}\nn\\& \nn +|\gamma|^{\epsilon}\kappa_u^{1+\epsilon},
             \end{align}
              where $B(\delta)= \log(\mathrm{card}(\mathcal{H}))+\log(2/\delta)$.
\end{theorem}
\begin{remark}\label{rem: unbounded rate}
    For $\gamma \asymp 
    n^{-1/(1+\epsilon)}$, $n\geq \frac{(4\kappa_u^{1+\epsilon}+8/3 \zeta )\log(2/\delta)}{\zeta^2\exp(-2\kappa_u)}$ and $-1<\gamma<0$,
    the upper bound in Theorem~\ref{thm: Abs gen unbounded} gives a theoretical guarantee on the convergence rate of $O(n^{-\epsilon/(1+\epsilon)})$. Using TER with negative tilt can help to derive an upper bound on the absolute value of the tilted generalization error under the bounded $(1+\epsilon)$-th moment assumption. 
\end{remark}

The following Lemma establishes an upper bound on the excess risk of tilted empirical risk, expressed in terms of $\sup_{h\in\mathcal{H}}| \mathrm{gen}_{\gamma}(h,S)|$.
\begin{lemma}\label{lem:excess}
    The excess risk of the tilted empirical risk satisfies,
    \begin{equation}
    \mathfrak{E}_{\gamma}(\mu)\leq 2\sup_{h\in\mathcal{H}}| \mathrm{gen}_{\gamma}(h,S)|.\end{equation}
\end{lemma}
Combining Lemma~\ref{lem:excess} with Theorem~\ref{thm: Abs gen unbounded}, an upper bound on the excess risk of the tilted empirical risk can be derived (See Appendix~\ref{app: sec: ubounded loss uniform}).

The theorems in this section assume that the hypothesis space is finite; this is, for example, the case in classification problems with a finite number of classes. If this assumption is violated, we can apply the growth function technique from \citet{bousquet2003introduction,vapnik1999overview}. In particular, the growth function can be bounded by VC-dimension in binary classification \citep{vapnik1999overview} or Natarajan dimension   \citep{holden1995practical} for multi-class classification scenarios. Note that the VC-dimension and Rademacher complexity bounds are uniform bounds and are independent of the learning algorithms. Furthermore, one can construct an $\epsilon$-net over the hypothesis space, thereby discretizing the space. In this construction, we select a finite subset $ H' \subset \mathbb{R}^m $ such that for every $ h \in H $, there exists a $ h' \in H' $ with $ \|h - h'\| \leq r $. By applying our finite hypothesis result to this discretized set and controlling the approximation error through the Lipschitz property of the loss function, we effectively generalize our bounds to the continuous case. This method is exemplified in \citep{xu2017information}, which demonstrates that “for an uncountable hypothesis space, we can always convert it to a finite one by quantizing the output of the smallest set $ H' $ such that for all $ h \in H $ there is a $ h' \in H' $ with $ \|h - h'\| \leq r $, where the Lipschitz maximal inequality (Lemma 5.7 in \citep{vershynin2010introduction}) is derived using a similar quantization technique.

\subsection{Information-theoretic Bounds}
 In the information-theoretic approach for the unbounded loss function, we relax the uniform assumption, Assumption~\ref{ass:HV_uniform}, as follows,
\begin{assumption}[Bounded $(1+\epsilon)$-th moment]\label{ass: bounded sec moment}
    The learning algorithm $P_{H|S}$, loss function $\ell$, and $\mu$ are such that there is a constant $\kappa_t\in\mathbb{R}^+$ with which the loss function $(H,Z)\mapsto\ell(H,Z)$ satisfies $\max\big(\mbE_{P_{H,Z}}[\ell^{\epsilon}(H,Z)],\mbE_{P_{H}\otimes \mu}[\ell^{1+\epsilon}(H,Z)]\big)\leq \kappa_t^{1+\epsilon}$ for some $\epsilon\in(0,1]$.
\end{assumption}
For our information-theoretical analysis, we use the following decomposition; 
\begin{equation}\label{eq: s decomp exp unbound}
        \begin{split}
        &\overline{\mathrm{gen}}_{\gamma}(H,S)= \underbrace{\RTrue(H, P_{H}\otimes \mu)- \RTET(H,P_{H}\otimes \mu)}_{I_3}\\&\quad+  \widehat{\mathrm{gen}}_{\gamma}(H,S)\\
            &\quad+ \underbrace{\RTET(H,P_{H,S}) -\mbE_{P_{H,S}}[\RTERM(H,S)]}_{I_4}.
        \end{split}
\end{equation}
We can derive bounds on $I_3$ and $I_4$ in a similar approach to Proposition~\ref{Prop:True-TT_unbounded}.

Then, we provide upper and lower bounds on the non-linear expected generalization error of the tilted empirical risk, $\widehat{\mathrm{gen}}_{\gamma}(H,S)$, under negative tilt. These results are helpful in deriving an upper bound on the absolute expected tilted generalization error. 
\begin{proposition}\label{prop: upper via Mutual V2 unbound}
Given Assumption~\ref{ass: bounded sec moment} and assuming that $\frac{2I(H;S)}{2^{\epsilon}|\gamma|^{1+\epsilon} \kappa_t^{1+\epsilon}}\leq n $ holds, then the following inequality holds,
\begin{equation*}
    \begin{split}
          \widehat{\mathrm{gen}}_{\gamma}(H,S)\leq  
            \frac{\exp(|\gamma| \kappa_t)}{|\gamma|} \sqrt{\frac{2^{\epsilon}|\gamma|^{1+\epsilon} \kappa_t^{1+\epsilon} I(H;S)}{n}},
    \end{split}
\end{equation*}
and if $\frac{2I(H;S)}{2^{\epsilon}|\gamma|^{1+\epsilon} \kappa_t^{1+\epsilon}}>n$ holds, then following inequality holds, 
\begin{equation*}
    \begin{split}
       \widehat{\mathrm{gen}}_{\gamma}(H,S)\leq  \frac{\exp(|\gamma| \kappa_t)}{ |\gamma|}\Big( \frac{ I(H;S)}{n}+\frac{2^{\epsilon}|\gamma|^{1+\epsilon} \kappa_t^{1+\epsilon}}{2}\Big).
    \end{split}
\end{equation*}
\end{proposition}

\begin{proposition}\label{prop: lower via Mutual V2 unbound}
Under Assumption~\ref{ass: bounded sec moment} for some $\epsilon\in(0,1]$, assume further that there exists $\zeta\in(0,1)$ such that one of the following cases holds,

\textbf{(a)} $\zeta \leq \frac{2^{\epsilon}|\gamma|^{1+\epsilon} \kappa_t^{1+\epsilon} \exp(|\gamma|\kappa_t)}{2}$ and  $n \geq \frac{2^{\epsilon}|\gamma|^{1+\epsilon} \kappa_t^{1+\epsilon} I(H;S)}{\zeta^2\exp(2\gamma \kappa_t)}$,

\textbf{(b)} $\zeta > \frac{2^{\epsilon}|\gamma|^{1+\epsilon} \kappa_t^{1+\epsilon} \exp(|\gamma|\kappa_t)}{2}$ and $n \geq \frac{2I(H;S)}{2^{\epsilon}|\gamma|^{1+\epsilon} \kappa_t^{1+\epsilon}}$,

\textbf{(c)} $\zeta > \frac{2^{\epsilon}|\gamma|^{1+\epsilon} \kappa_t^{1+\epsilon} \exp(|\gamma|\kappa_t)}{2}$ and $$\frac{2I(H;S)}{2^{\epsilon}|\gamma|^{1+\epsilon} \kappa_t^{1+\epsilon}} > n \geq \frac{2I(H;S)}{2\zeta \exp(\gamma\kappa_t)-2^{\epsilon}|\gamma|^{1+\epsilon} \kappa_t^{1+\epsilon}}.$$ 
    Then, the following lower bounds on non-linear expected generalization error hold,
\begin{equation*}
    \begin{split}
          \widehat{\mathrm{gen}}_{\gamma}(H,S)\geq  \begin{cases}
           \frac{-\exp(|\gamma| \kappa_t)}{|\gamma|(1-\zeta)} \sqrt{\frac{2|\gamma|^{1+\epsilon} \kappa_t^{1+\epsilon} I(H;S)}{n}}, \\
           \quad \quad  \mbox{ \text{if (a) or (b) hold,}}\\
              \frac{-\exp(|\gamma| \kappa_t)}{(1-\zeta) |\gamma|}\Big( \frac{ I(H;S)}{n}+\frac{2^{\epsilon}|\gamma|^{1+\epsilon} \kappa_t^{1+\epsilon}}{2}\Big),  \\
           \quad \quad  \mbox{ \text{if (c) holds}}.
          \end{cases}
    \end{split}
\end{equation*}
\end{proposition}

Using Proposition~\ref{prop: upper via Mutual V2 unbound} and Proposition~\ref{prop: lower via Mutual V2 unbound}, we can obtain upper bound and lower bounds on the expected tilted generalization error, respectively. Then, combining upper and lower bounds on expected tilted generalization error, we can derive the upper bound on the absolute value of the expected tilted generalization error in the following.
\begin{theorem}\label{thm:|gen|_bound_unbounded_case}
Under the same assumptions as in Proposition~\ref{prop: lower via Mutual V2 unbound},
    the following upper bounds on absolute expected tilted generalization error hold,
\begin{equation*}
    \begin{split}
         |\overline{\mathrm{gen}}_{\gamma}(H,S)|\leq  \begin{cases}
           D(\gamma) \sqrt{\frac{2\kappa_t^{\epsilon+1} |\gamma|^{1+\epsilon} I(H;S)}{n}}+C(\gamma), \\\quad\quad\mbox{ \text{if (a) or (b) hold,}}\\
              D(\gamma)\Big( \frac{ I(H;S)}{n}+\frac{2^{\epsilon}|\gamma|^{1+\epsilon} \kappa_t^{1+\epsilon}}{2}\Big)+C(\gamma), \\\quad\quad\mbox{ \text{if (c) holds}}.
          \end{cases}
    \end{split}
\end{equation*}
where $C(\gamma)=|\gamma|^\epsilon\kappa_t^{1+\epsilon}$ and $D(\gamma)=\frac{\exp(|\gamma| \kappa_t)}{|\gamma|(1-\zeta)}$.
\end{theorem}
\textbf{Convergence rate:} Assuming $\gamma=O(n^{-1/(1+\epsilon)})$ and $n\geq \frac{\kappa_t^{1+\epsilon}\exp(2\kappa_t)I(H;S)}{\zeta^2}$, then the upper bound in Theorem~\ref{thm:|gen|_bound_unbounded_case} has the convergence rate $O(n^{-\epsilon/(1+\epsilon)})$. Note that the result in Theorem~\ref{thm:|gen|_bound_unbounded_case} holds for unbounded loss functions, provided that the $(1+\epsilon)$-th moment of the loss function exists for some $\epsilon\in(0,1]$.

The results in this section are non-vacuous for bounded $I(H;S)$. If this assumption is violated, we can apply the individual sample method~\citep{bu2020tightening}, chaining methods \citep{asadi2018chaining}, or conditional mutual information frameworks \citep{steinke2020reasoning} to derive a tighter upper bound for the expected tilted generalization error.

\section{Robustness of TER}\label{sec: robust}
Previous experimental work by \citet{li2020tilted, li2023tilted} has demonstrated that tilted empirical risk with negative tilt ($\gamma<0$) exhibits robustness against noisy samples and outliers during training. In this section, we investigate the robustness of TER under distributional shift scenarios. Inspired by the concept of influence functions \citep{marceau2001robustness,christmann2004robustness,ronchetti2009robust}, we model distributional shift through a distribution $\tilde{\mu}\in\mathcal{P}(\mathcal{Z})$, which represents the effect of outliers or noise in the noisy training dataset $\hat{S}$. 

\textbf{Uniform Approach:} Our robustness analysis of TER is based on the following assumption.
\begin{assumption}[Uniform bounded second moment under $\tilde{\mu}$]\label{ass: bounded sec moment uniform robust}
    There is a constant $\kappa_s\in\mathbb{R}^+$ such that the loss function $(H,Z)\mapsto\ell(H,Z)$ satisfies $\mbE_{ \tilde{\mu}}[\ell^{1+\epsilon}(h,Z)]\leq \kappa_s^{1+\epsilon}$ uniformly for all $h \in \mathcal{H}$.
\end{assumption}
 For our robustness analysis, we 
employ the following decomposition of the tilted generalization error,
\begin{equation}\label{eq_gen_decom_robust}
    \begin{split}
\mathrm{gen}_{\gamma}(h,\hat S)&=   \underbrace{\RTrue(h,\mu)- \RTET(h,\mu)}_{I_1}+\widehat{\mathrm{gen}}_{\gamma}(h,\hat{S})\\&
+  \underbrace{\RTET(h,\mu)-\RTET(h,\tilde{\mu})}_{I_4}.
    \end{split}
\end{equation}
Using the functional derivative \citep{cardaliaguet2019master}, see also Appendix~\ref{app:technical}, we can bound $I_4$ as follows.
\begin{proposition}\label{prop: noise robust}
    Given Assumption~\ref{ass: bounded sec moment uniform robust} and Assumption~\ref{ass:HV_uniform}, then the difference of tilted population risk under, \eqref{eq: RTET}, between $\mu$ and $\tilde{\mu}$ is bounded as follows for all $h\in\mathcal{H}$,
    \begin{align}
        &\frac{1}{|\gamma|}\Big|\log(\mbE_{ Z\sim\mu}[C(h, Z)])-\log(\mbE_{\tilde Z\sim\tilde{\mu}}[C(h,\tilde Z)])\Big|
        \\\nn&\leq \frac{\mathbb{TV}(\mu,\tilde{\mu})}{\gamma^2 }\frac{\Big(\exp(|\gamma|\kappa_u)-\exp(|\gamma|\kappa_s)\Big)}{(\kappa_u-\kappa_s)},
        \end{align}
        where $C(h, Z)=\exp(\gamma\ell(h, Z))$.
\end{proposition}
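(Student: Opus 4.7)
Let $g(\nu):=\mbE_{\tilde Z\sim\nu}[C(h,\tilde Z)] = \mbE_{\tilde Z\sim\nu}[\exp(\gamma\ell(h,\tilde Z))]$, so that the left-hand side is $\frac{1}{|\gamma|}\bigl|\log g(\mu)-\log g(\tilde\mu)\bigr|$. The plan is to interpolate linearly between $\tilde\mu$ and $\mu$ in the space of measures by setting $\mu_t:=(1-t)\tilde\mu+t\mu$ for $t\in[0,1]$, so that $g(\mu_t)$ is linear in $t$ with derivative $g(\mu)-g(\tilde\mu)$. Consequently the fundamental theorem of calculus (equivalently, a linear functional derivative of $F(\nu):=\tfrac{1}{\gamma}\log g(\nu)$ as in \citet{cardaliaguet2019master}) yields the path-integral representation
\begin{equation*}
\frac{1}{\gamma}\bigl(\log g(\mu)-\log g(\tilde\mu)\bigr)=\int_0^1\frac{g(\mu)-g(\tilde\mu)}{\gamma\,g(\mu_t)}\,dt.
\end{equation*}

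Taking absolute values inside the integral reduces the proof to two separate bounds: a bound on $|g(\mu)-g(\tilde\mu)|$ (the numerator) and a uniform lower bound on $g(\mu_t)$ (the denominator). For the numerator, note that the robustness section is concerned with the regime $\gamma<0$, so $0<C(h,z)=\exp(\gamma\ell(h,z))\leq 1$ because $\ell\geq 0$. Thus the test function is bounded by $1$, and the definition of total variation in the preliminaries gives
\begin{equation*}
|g(\mu)-g(\tilde\mu)|=\Bigl|\int C(h,z)\,(\mu-\tilde\mu)(\mrd z)\Bigr|\leq \mathbb{TV}(\mu,\tilde\mu).
\end{equation*}
For the denominator, I would apply Jensen's inequality to the convex function $\exp(\gamma\cdot)$ to get $g(\mu_t)\geq \exp(\gamma\,\mbE_{\mu_t}[\ell(h,Z)])$, and then pass from the second-moment bounds (Assumptions \ref{ass: bounded sec moment uniform} and \ref{ass: bounded sec moment uniform robust}) to first-moment bounds via Cauchy--Schwarz: $\mbE_{\mu}[\ell]\leq\kappa_u$ and $\mbE_{\tilde\mu}[\ell]\leq\kappa_s$. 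Since expectation under $\mu_t$ is linear in $t$, this yields $\mbE_{\mu_t}[\ell]\leq(1-t)\kappa_s+t\kappa_u$, hence (recalling $\gamma<0$ flips the inequality) $g(\mu_t)\geq\exp(-|\gamma|((1-t)\kappa_s+t\kappa_u))$.

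Substituting these two estimates into the path integral and pulling out the factor $\mathbb{TV}(\mu,\tilde\mu)/|\gamma|$ reduces everything to computing
\begin{equation*}
\int_0^1\exp\!\bigl(|\gamma|\bigl((1-t)\kappa_s+t\kappa_u\bigr)\bigr)\,dt
=\frac{\exp(|\gamma|\kappa_u)-\exp(|\gamma|\kappa_s)}{|\gamma|(\kappa_u-\kappa_s)},
\end{equation*}
where the substitution $u=(1-t)\kappa_s+t\kappa_u$ is used. Multiplying by $\mathbb{TV}(\mu,\tilde\mu)/|\gamma|$ yields exactly the stated bound $\mathbb{TV}(\mu,\tilde\mu)/\gamma^2$ times the divided difference. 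The step I expect to require the most care is the Jensen-based lower bound on $g(\mu_t)$: it is essential to use that $\exp$ is convex (not just that $\exp(\gamma\cdot)$ is monotone) so that the inequality goes the right way for any sign of $\gamma$, and to keep track of the sign when converting the first-moment bound into an exponential bound. The rest is book-keeping (linear interpolation in $\nu$, linear functional derivative of $\log g$, and an elementary integral).
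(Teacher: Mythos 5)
Your proposal is correct and follows essentially the same route as the paper's proof: the paper also writes the difference of $\frac{1}{\gamma}\log\mbE_\nu[\exp(\gamma\ell)]$ as a path integral along $\mu_\lambda=\tilde\mu+\lambda(\mu-\tilde\mu)$ via the linear functional derivative, bounds the numerator by $\mathbb{TV}(\mu,\tilde\mu)$ using $0<\exp(\gamma\ell)\le 1$ for $\gamma<0$, lower-bounds the denominator by $\exp(\gamma((1-\lambda)\kappa_s+\lambda\kappa_u))$ via Jensen and the second-moment assumptions, and evaluates the same elementary integral. No substantive differences.
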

Using Proposition~\ref{prop: noise robust}, we can provide upper and lower bounds on the tilted generalization error under distributional shift. Then, combining upper and lower bounds, we can derive an upper bound on the absolute value of the tilted generalization error under distribution shift.

\begin{theorem}\label{thm:robust_uniform}
    Under the same assumptions in Theorem~\ref{thm: Abs gen unbounded} for some $\epsilon\in(0,1]$, then for $n\geq \frac{(4|\gamma|^{1+\epsilon}\kappa_u^{1+\epsilon}+8/3 \zeta )\log(2/\delta)}{\zeta^2\exp(2\gamma \kappa_u)}$ and $\gamma<0$, and with probability at least $(1-\delta)$, the absolute value of the tilted generalization error under distributional shift satisfies
    \begin{align}\label{eq:bound_robust}
            &\sup_{h\in\mathcal{H}}|\mathrm{gen}_{\gamma}(h,\hat{S})|\\\nn&\leq \frac{2\exp(|\gamma| \kappa_s)}{(1-\zeta)|\gamma|}\sqrt{\frac{2^{\epsilon}|\gamma|^{1+\epsilon}\kappa_s^{1+\epsilon}B(\delta)}{n}} +\frac{4\exp(|\gamma| \kappa_s)B(\delta)}{3n|\gamma|(1-\zeta)}\nn\\& \nn +|\gamma|^{\epsilon}\kappa_u^{1+\epsilon}+\frac{\mathbb{TV}(\mu,\tilde{\mu})}{\gamma^2 }\frac{\big(\exp(|\gamma|\kappa_u)-\exp(|\gamma|\kappa_s)\big)}{(\kappa_u-\kappa_s)},
        \end{align}
  where  $B(\delta)= \log(\mathrm{card}(\mathcal{H}))+\log(2/\delta)$.
\end{theorem}
Using Lemma~\ref{lem:excess}, we can derive an upper bound on excess risk under distribution shift.

\textbf{Other Divergences:}
Our approach can be extended to incorporate other divergence measures that quantify distribution shift based on training and testing samples directly, rather than their underlying distributions. This is in line with methods commonly explored in the domain adaptation literature (e.g., \citep{ben2006analysis, zou2024towards, ye2021towards}).  

\textbf{Comparison with ERM:} Theorem~\ref{thm:robust_uniform} gives  an upper bound on the tilted generalization error of TERM under distribution shifts, which is valid for all distribution shifts. Specifically, this upper bound depends on the total variation distance $\mathbb{TV}(\mu, \tilde{\mu})$, which is bounded for any distributions $\mu$ and $\tilde{\mu}$. This robustness is attributed to the negative tilt and 
properties of the exponential function.
In contrast, for ERM, we need to derive an upper bound on the following term,
\begin{equation}
    \mathrm{Dif}(\mu,\tilde{\mu}):=\mathbb{E}_{Z\sim\mu}[\ell(h,Z)]-\mathbb{E}_{\tilde{Z}\sim\tilde{\mu}}[\ell(h,\tilde{Z})].
\end{equation}
For unbounded loss functions, deriving an upper bound on $\mathrm{Dif}(\mu,\tilde{\mu})$ in terms of total variation distance is not feasible. While such bounds can be established using KL-divergence $\KLr(\mu\|\tilde{\mu})$ under specific conditions (e.g., when the loss function is sub-Gaussian under the data-generating distribution), these bounds may become unbounded for certain $\tilde{\mu}$. In this context, TERM with negative tilt emerges as a robust solution, providing theoretical performance guarantees for distribution-shift under unbounded loss functions.

\textbf{Robustness vs Generalization:} The term $\frac{\mathbb{TV}(\mu,\tilde{\mu})}{\gamma^2 }$ represents the distributional shift cost (or robustness) associated with the TER. This cost can be reduced by increasing $|\gamma|$. However, increasing $|\gamma|$ also amplifies other terms in the upper bound of the tilted generalization error. Therefore, there is a trade-off between robustness and generalization, particularly for $\gamma<0$ in the TER. 
\citet{li2020tilted} also empirically observed this trade-off for negative tilt. 

\textbf{Information-theoretic Approach:} Using a similar approach, we can derive an upper bound on expected tilted generalization error under distribution shift. We make the following assumption.
\begin{assumption}\label{ass: bounded sec moment_robust_inf}
    The learning algorithm $P_{H|\hat{S}}$, loss function $\ell$, and $\mu$ are such that there is a constant $\kappa_t\in\mathbb{R}^+$ with which the loss function $(H,Z)\mapsto\ell(H,Z)$ satisfies $\max\big(\mbE_{P_{H,\tilde{Z}}}[\ell^{1+\epsilon}(H,\tilde{Z})],\mbE_{P_{H}\otimes \tilde{\mu}}[\ell^{1+\epsilon}(H,\tilde{Z})]\big)\leq \kappa_{st}^{1+\epsilon}$ for some $\epsilon\in(0,1]$.
\end{assumption}
Using Theorem~\ref{thm:|gen|_bound_unbounded_case} and similar results to Proposition~\ref{prop: noise robust}, we derive an upper bound on absolute expected tilted generalization error under distribution shift.
\begin{theorem}\label{thm:|gen|_bound_unbounded_case_robust}
Under the same assumptions as in Theorem~\ref{thm:|gen|_bound_unbounded_case} and Assumption~\ref{ass: bounded sec moment_robust_inf},
    the following upper bounds on absolute expected tilted generalization error hold under distributional shift, if (a) or (b) hold,
    \begin{equation*}
    \begin{split}
        &|\overline{\mathrm{gen}}_{\gamma}(H,\hat{S})|\leq \frac{\exp(|\gamma| \kappa_{st})}{|\gamma|(1-\zeta)} G\big(I(H;\hat{S})\big)+|\gamma|^\epsilon\kappa\\&+\frac{\mathbb{TV}(\mu,\tilde{\mu})}{\gamma^2 }\frac{\Big(\exp(|\gamma|\kappa_t)-\exp(|\gamma|\kappa_{st})\Big)}{(\kappa_t-\kappa_{st})},
    \end{split}
    \end{equation*}
    where $\kappa=\kappa_t^{1+\epsilon}+\kappa_{st}^{1+\epsilon}$, if (a) or (b) hold we have $G(I(H;\hat S))=\sqrt{\frac{2\kappa_{st}^{\epsilon+1} |\gamma|^{1+\epsilon} I(H;\hat S)}{n}}$ and if (c) holds, we have, $G(I(H;\hat S))=\frac{ I(H;\hat S)}{n}+\frac{2^{\epsilon}|\gamma|^{1+\epsilon} \kappa_{st}^{1+\epsilon}}{2}$.
\end{theorem}
We can observe the same robustness and generalization in Theorem~\ref{thm:|gen|_bound_unbounded_case_robust}.

\section{Data-driven Choice of Tilt}\label{sec:Experiment}
In this section, we provide a data-driven approach for choosing the tilt ($\gamma$) based on theoretical results. It is noteworthy that the upper bound in Theorem~\ref{thm:robust_uniform} can be infinite for $\gamma\rightarrow -\infty$ and $\gamma=0$ and a fixed $n$. Consequently, there must exist a $\gamma\in(-\infty,0)$ that minimizes this upper bound. To illustrate this point, consider the case where $n\rightarrow\infty$; here, the first and second terms in the upper bound, \eqref{eq:bound_robust}, would vanish. Thus, we are led to the following minimization problem:
\begin{equation}\label{eq:gamma_data}
\begin{split}
&\gamma_{\text{data}}:=\mathop{\mathrm{arg} \min}_{\gamma\in(-\infty,0)}\Big[|\gamma|^{\epsilon}\kappa_u^{1+\epsilon}\\&\qquad+\frac{\mathbb{TV}(\mu,\tilde{\mu})}{\gamma^2 }\frac{\big(\exp(|\gamma|\kappa_u)-\exp(|\gamma|\kappa_s)\big)}{(\kappa_u-\kappa_s)}\Big],
\end{split}
\end{equation}
for which a solution $\gamma^*$ exists. 
As $\gamma^\star$ decreases when $\mathbb{TV}(\mu,\tilde{\mu})$ increases, practically, this implies that if the training distribution becomes more adversarial (i.e., further away from the benign test distribution), we would use smaller negative $\gamma$'s to bypass outliers. Therefore, we can consider $\gamma_{\text{data}}$ as a data-driven choice for the TERM problem. We consider a simple experiment to show the effectiveness of data-driven tilt, $\gamma_{\text{data}}$.

In this experiment, we consider the logistic regression setup described in \citet{li2023tilted}, adding a Gaussian or Pareto outlier dataset to the training dataset at different ratios ($\rho$) relative to the training dataset. We evaluate the following values: 
\begin{itemize}[noitemsep,nolistsep,leftmargin=*]
    \item $\gamma^\star$: The optimal tilt based on grid search
    \item $\mathrm{R}(h_{\gamma^{\star}}(\hat{S}),\mu)$: The population risk under the optimal TERM solution where $h_{\gamma^{\star}}(\hat{S}):=\arg \min_{h\in\mathcal{H}}\hat{R}_{\gamma^{\star}}(h,\hat{S})$.
    \item $\mathrm{R}(h_{\text{\tiny ERM}}(\hat{S}),\mu)$: The population risk under the ERM solution where $h_{\text{\tiny ERM}}(\hat{S}):=\arg \min_{h\in\mathcal{H}}\hat{R}(h,\hat{S})$.
    \item $\gamma_{\text{data}}$: The data-driven tilt based on optimization of \eqref{eq:gamma_data}.
    \item $\mathrm{R}(h_{\gamma_{\text{data}}}(\hat S),\mu)$: The population risk under the data-driven tilt solution where $h_{\gamma_{\text{data}}}(\hat{S}):=\arg \min_{h\in\mathcal{H}}\hat{R}_{\gamma_{\text{data}}}(h,\hat{S})$.

\end{itemize}
The training dataset consists of 1,000 samples. The results for Gaussian and Pareto outliers are reported in Table~\ref{tab:Gaussian_outlier} and Table~\ref{tab:pareto_outlier}, respectively. For the Pareto-outlier as a distribution with unbounded second moment, we observe  better performance for the data-driven approach in comparison with ERM. Note that the variance of TERM and data-driven TERM in comparison with ERM has less variance as expected. The 
details of the experiments are provided in Appendix~\ref{app:experiemnt_details}\,. More experiments for Linear regression as proposed in \citep{li2020tilted} is provided in Appendix~\ref{app:experiemnt_details}. 

\begin{table}[htbp!]
\caption{Logistic Regression with Gaussian Outliers: Results averaged over three runs ($n=1,000$ samples), showing mean $\pm$ standard deviation.}

\label{tab:Gaussian_outlier}
\centering
\resizebox{\columnwidth}{!}{
\begin{tabular}{lccccc}
\toprule
$\rho$ & $\gamma^{\star}$ & $\mathrm{R}(h_{\gamma^{\star}}(\hat{S}),\mu)$ & $\mathrm{R}(h_{\text{\tiny ERM}}(\hat{S}),\mu)$ & $\gamma_{\text{data}}$ & $\mathrm{R}(h_{\gamma_{\text{data}}}(\hat S),\mu)$  \\
\midrule
$0.1\%$ & $-0.53 {\scriptstyle \pm 0.000}$ & $0.00 {\scriptstyle \pm 0.000}$ & $0.05 {\scriptstyle \pm 0.001}$ & $-1.40{\scriptstyle \pm 0.000}$ & $0.00 {\scriptstyle \pm 0.000}$  \\
$17.6\%$  & $-2.98{\scriptstyle \pm 0.000}$ & $0.15 {\scriptstyle \pm 0.004}$ & $0.22 {\scriptstyle \pm 0.001}$ & $-4.91{\scriptstyle \pm 0.002}$ & $0.16 {\scriptstyle \pm 0.003}$  \\
$35.0\%$  & $-3.86{\scriptstyle \pm 0.000}$ & $0.16 {\scriptstyle \pm 0.004}$ & $0.30 {\scriptstyle \pm 0.002}$ & $-3.33{\scriptstyle \pm 0.000}$ & $0.20 {\scriptstyle \pm 0.002}$  \\
$52.5\%$ & $-2.10{\scriptstyle \pm 0.000}$ & $0.11 {\scriptstyle \pm 0.001}$ & $0.28 {\scriptstyle \pm 0.002}$ & $-1.93{\scriptstyle \pm 0.000}$ & $0.14 {\scriptstyle \pm 0.001}$  \\
$70.0\%$ & $-1.23{\scriptstyle \pm 0.000}$ & $0.14 {\scriptstyle \pm 0.002}$ & $0.18 {\scriptstyle \pm 0.000}$ & $-2.28{\scriptstyle \pm 0.000}$ & $0.15{\scriptstyle \pm 0.002}$ \\
\bottomrule
\end{tabular}
}
\end{table}

\begin{table}[htbp!]
\caption{Logistic Regression with Pareto Outliers: Results averaged over three runs ($n=1,000$ samples), showing mean $\pm$ standard deviation.}

\label{tab:pareto_outlier}
\centering
\resizebox{\columnwidth}{!}{
\begin{tabular}{lccccc}
\toprule
$\rho$ & $\gamma^{\star}$ & $\mathrm{R}(h_{\gamma^{\star}}(\hat{S}),\mu)$ & $\mathrm{R}(h_{\text{\tiny ERM}}(\hat{S}),\mu)$ & $\gamma_{\text{data}}$ & $\mathrm{R}(h_{\gamma_{\text{data}}}(\hat S),\mu)$  \\
\midrule
$0.1\%$  & $-1.40{\scriptstyle \pm 0.000}$ & $0.00 {\scriptstyle \pm 0.000}$ & $0.03 {\scriptstyle \pm 0.001}$ & $-0.70{\scriptstyle \pm 0.000}$ & $0.01 {\scriptstyle \pm 0.000}$  \\
$17.58\%$ & $-3.33{\scriptstyle \pm 0.000}$ & $0.00 {\scriptstyle \pm 0.003}$ & $0.02 {\scriptstyle \pm 0.000}$ & $-0.88{\scriptstyle \pm 0.000}$ & $0.01 {\scriptstyle \pm 0.000}$ \\
$35.05\%$  & $-1.05{\scriptstyle \pm 0.000}$ & $0.00 {\scriptstyle \pm 0.000}$ & $0.01 {\scriptstyle \pm 0.002}$ & $-0.70{\scriptstyle \pm 0.000}$ & $0.01 {\scriptstyle \pm 0.000}$ \\
$52.53\%$  & $-1.05{\scriptstyle \pm0.000}$ & $0.01 {\scriptstyle \pm 0.000}$ & $0.01 {\scriptstyle \pm 0.002}$ & $-1.06{\scriptstyle \pm 0.000}$ & $0.01 {\scriptstyle \pm 0.001}$ \\
$70.00\%$  & $-0.88{\scriptstyle \pm 0.000}$ & $0.00 {\scriptstyle \pm 0.002}$  & $0.02 {\scriptstyle \pm 0.001}$ & $-0.70{\scriptstyle \pm 0.000}$ & $0.01 {\scriptstyle \pm 0.000}$\\
\bottomrule
\end{tabular}
}
\end{table}


\section{The KL-Regularized TERM Problem for Unbounded Loss Functions}\label{sec: reg problem unbounded}
Our upper bound in Theorem~\ref{thm:|gen|_bound_unbounded_case} on the absolute value of the expected tilted generalization error depends on the mutual information between $H$ and $S$. Therefore, it is of interest to investigate  
an algorithm that minimizes the regularized expected TERM via mutual information.
\begin{equation}\label{eq: regular problem with MI}
P_{H|S}^{\star}=\mathop{\mathrm{arg}\, \inf}_{P_{H|S}}\RTT(H,P_{H,S})+\frac{1}{\alpha}I(H;S),
\end{equation}
where $\alpha$ is the inverse temperature. As discussed by \citet{xu2017information,aminian2023mean}, the regularization problem in \eqref{eq: regular problem with MI} is dependent on the data distribution, $P_S$. Therefore, we relax the problem in \eqref{eq: regular problem with MI} by considering the following regularized version via KL divergence,
\begin{align}\label{eq: regular problem}
 P_{H|S}^{\star}&=\mathop{\mathrm{arg}\, \inf}_{P_{H|S}}\RTT(H,P_{H,S})\\\nn&\quad+\frac{1}{\alpha}\KLr(P_{H|S}\|\pi_H|P_S),
\end{align}
 where $I(H;S)\leq \KLr(P_{H|S}\|\pi_H|P_S)$ and $\pi_H$ is a prior distribution over hypothesis space $\mathcal{H}$. All proof details are deferred to Appendix~\ref{app:sec: reg problem unbounded}.
\begin{proposition}\label{prop: tilted gibbs_unbounded}
    The solution to the expected TERM regularized via KL divergence, \eqref{eq: regular problem}, is the tilted Gibbs Posterior (a.k.a. Gibbs Algorithm),
    \begin{equation}\label{eq: algorithm gibbs tilte app}
        P_{H|S}^{\gamma}:=\frac{\pi_H}{F_{\alpha}(S)} \Big(\frac{1}{n}\sum_{i=1}^n\exp(\gamma \ell(H,Z_i))\Big)^{-\alpha/\gamma},
    \end{equation}
    where $F_{\alpha}(S)$ is a normalization factor.
\end{proposition}
 Note that the Gibbs posterior,
    \begin{equation}\label{eq: Gibbs Algorithm}P_{H|S}^{\alpha}:=\frac{\pi_H}{\tilde{F}_{\alpha}(S)} \exp\Big(-\alpha\Big(\frac{1}{n}\sum_{i=1}^n\ell(H,Z_i)\Big)\Big),\end{equation}
    is the solution to the KL-regularized ERM minimization problem, where $\tilde{F}_{\alpha}(S)$ is the normalization factor. Therefore, the tilted Gibbs posterior is different from the Gibbs posterior, \eqref{eq: Gibbs Algorithm}. It can be shown that for $\gamma\rightarrow 0$, the tilted Gibbs posterior converges to the Gibbs posterior. Therefore, it is interesting to study the expected tilted generalization error of the tilted Gibbs posterior. For this purpose, we give an exact characterization of the difference between the expected TER under the joint and the product of marginal distributions of $H$ and $S$.
    \begin{proposition}\label{prop: Gibbs tilted gen}
      The difference between the expected TER under the joint and product of marginal distributions of $H$ and $S$ can be expressed as,
        \begin{equation}
    \begin{split}
         &\RTT(H,P_{H}\otimes \mu)-\RTT(H,P_{H,S})=\frac{ I_{\mathrm{SKL}}(H;S)}{\alpha}.
    \end{split}
\end{equation}
    \end{proposition}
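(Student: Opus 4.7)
The plan is to exploit the closed form of the tilted Gibbs posterior in \eqref{eq: algorithm gibbs tilte} to rewrite $\RTERM(h,S)$ as a log-likelihood, so that the expected TER turns into expectations of $\log P_{H|S}^{\gamma}$ against two different measures whose difference is precisely a symmetrized KL information.

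\textbf{Step 1: Invert the tilted Gibbs formula.} Starting from Proposition~\ref{prop: tilted gibbs} and taking logarithms,
\begin{equation*}
\log P_{H|S}^{\gamma}(h\mid S) = \log \pi_H(h) - \log F_{\alpha}(S) - \frac{\alpha}{\gamma}\log\Bigl(\tfrac{1}{n}\sum_{i=1}^n \exp(\gamma \ell(h,Z_i))\Bigr),
\end{equation*}
and recognising the last term as $\alpha\,\RTERM(h,S)$, I would solve to obtain
\begin{equation*}
\RTERM(h,S) = \frac{1}{\alpha}\bigl(\log \pi_H(h) - \log F_{\alpha}(S) - \log P_{H|S}^{\gamma}(h\mid S)\bigr).
\end{equation*}

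\textbf{Step 2: Take the two expectations and cancel the marginal terms.} I would then compute $\RTT(H,P_{H,S}) = \mathbb{E}_{P_{H,S}}[\RTERM(H,S)]$ and $\RTT(H,P_H\otimes \mu^{\otimes n}) = \mathbb{E}_{P_H\otimes \mu^{\otimes n}}[\RTERM(H,S)]$ using the identity above. The crucial observation is that the $H$-marginal of both $P_{H,S}$ and $P_H\otimes \mu^{\otimes n}$ is $P_H$, and the $S$-marginal of both is $\mu^{\otimes n}=P_S$. Hence the $\log \pi_H(H)$ and $\log F_{\alpha}(S)$ contributions agree under the two measures and cancel in the difference, leaving
\begin{equation*}
\RTT(H,P_H\otimes \mu^{\otimes n}) - \RTT(H,P_{H,S}) = \frac{1}{\alpha}\Bigl(\mathbb{E}_{P_{H,S}}[\log P_{H|S}^{\gamma}(H\mid S)] - \mathbb{E}_{P_H\otimes P_S}[\log P_{H|S}^{\gamma}(H\mid S)]\Bigr).
\end{equation*}

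\textbf{Step 3: Identify the symmetrized KL information.} Using the standard decomposition $I(H;S) = \mathbb{E}_{P_{H,S}}[\log P_{H|S}] - \mathbb{E}_{P_H}[\log P_H]$ on the first expectation and $\mathrm{KL}(P_H\otimes P_S \,\|\, P_{H,S}) = \mathbb{E}_{P_H}[\log P_H] - \mathbb{E}_{P_H\otimes P_S}[\log P_{H|S}]$ on the second, the two $\mathbb{E}_{P_H}[\log P_H]$ terms cancel and the bracket in Step 2 collapses to $I(H;S) + \mathrm{KL}(P_H\otimes P_S \,\|\, P_{H,S}) = I_{\mathrm{SKL}}(H;S)$, yielding the claim.

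\textbf{Anticipated difficulty.} There is no deep obstacle once the Gibbs form is plugged in; the calculation is essentially an application of the Donsker--Varadhan style variational identity in reverse. The only delicate point is justifying that $P_{H|S}^{\gamma}$ is well defined as a proper conditional density (so that $F_{\alpha}(S)$ is finite and $\log P_{H|S}^{\gamma}$ is integrable against both measures), which follows from the normalising-factor argument already used in Proposition~\ref{prop: tilted gibbs}. Otherwise, the proof reduces to algebraic manipulation of log-densities and the two one-line identities for $I(H;S)$ and $\mathrm{KL}(P_H\otimes P_S\,\|\,P_{H,S})$.
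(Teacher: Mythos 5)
Your proposal is correct and follows essentially the same route as the paper: the paper's one-line proof cites the representation $I_{\mathrm{SKL}}(H;S)=\mbE_{P_{H,S}}[\log P_{H|S}]-\mbE_{P_{H}\otimes P_S}[\log P_{H|S}]$ and substitutes the tilted Gibbs posterior, which is exactly your Steps 1--3 run in reverse. Your explicit derivation in Step 3 even recovers the correct ordering of the two expectations (the paper's displayed identity \eqref{eq: rep sym KL} has them swapped, a sign typo), and the cancellation of the $\log\pi_H$ and $\log F_{\alpha}(S)$ terms via the shared marginals is precisely the mechanism the paper relies on implicitly.
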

We next provide a parametric upper bound on the tilted generalization error of the tilted Gibbs posterior.   
    \begin{theorem}\label{thm: tilted gibbs gen_unbounded}
        Under the same Assumptions, cases (a) and (b) in Theorem~\ref{thm:|gen|_bound_unbounded_case}, the expected tilted generalization error of the tilted Gibbs posterior satisfies
        \begin{equation}
        \begin{split}
           & 0\leq \overline{\mathrm{gen}}_{\gamma}(H,S)\leq \frac{2\alpha\exp(2|\gamma| \kappa_t)\kappa_t^{1+\epsilon}}{(1-\zeta)^2 n|\gamma|^{1-\epsilon}} \\&\quad+\frac{\exp(|\gamma|\kappa_t)\kappa_t^{1+\epsilon}|\gamma|^{1/2+\epsilon}}{(1-\zeta)|\gamma|}\sqrt{\frac{2\alpha}{n}}+2|\gamma|^{\epsilon}\kappa_t^{1+\epsilon}.
                    \end{split}
        \end{equation}
    \end{theorem}
\textbf{Convergence rate:} If $\gamma=O(1/n)$, then we obtain a convergence rate of $O(n^{-\epsilon})$ for the upper bound on the tilted generalization error of the tilted Gibbs posterior.

\textbf{Comparison with the Gibbs posterior:} Our results offer several advantages over the prior work in \citet{aminian2021exact}. While Theorem~3 in their work establishes an $O(1/n)$ upper bound on the expected tilted generalization error of the Gibbs posterior, it requires the strong assumption of sub-Gaussian loss functions. In contrast, we achieve the same $O(1/n)$ convergence rate under the weaker condition of bounded second moments (when $\epsilon=1$) for the tilted Gibbs posterior. Additionally, our analysis extends to loss functions with bounded $(1+\epsilon)$-th moments for some $\epsilon\in(0,1]$, a scenario not addressed in the Gibbs posterior framework.

 \section{Related Works} \label{sec:related}
In this section, we discuss related works on tilted empirical risk minimization and generalization error under an unbounded loss function. More related works for generalization error analysis are discussed in Appendix~\ref{app: related work}.

\textbf{Tilted Empirical Risk Minimization:} The TERM algorithm for machine learning is proposed by \citet{li2020tilted}, 
where good performance of the TERM under outlier and noisy label scenarios for negative tilting ($\gamma<0$), and, under imbalance and fairness constraints, for positive tilting ($\gamma>0$), is demonstrated. \citet{zhang2023robust} study the TERM as a target function to improve the robustness of estimators. The application of TERM in federated learning is also studied,  in \citet{li2023heterogeneity,zhang2022proportional}. 

Inspired by TERM, \citet{wang2023tilted} propose a class of new tilted sparse additive models based on the Tikhonov regularization scheme.  Their results have some limitations.
First, in \citep[Theorem~3.3]{wang2023tilted} the authors derive an upper bound for $\lambda=n^{-\zeta}$ where $\zeta<-1/2$ and $\lambda$ are the regularization parameters in \citep[Eq.4]{wang2023tilted}. This implies $\lambda\rightarrow\infty$ as $n\rightarrow+\infty$, which is impractical.
As the analysis in \citep{wang2023tilted} assumes that both the loss function and its derivative are bounded, 
it can not be applied to the unbounded loss function scenario.
Furthermore, we consider KL regularization, which is different from the Tikhonov regularization scheme with the sparsity-induced $\ell_{1,2}$-norm regularizer as introduced in \citet{wang2023tilted}. Therefore, our 
results do not cover the learning algorithm in  \citet{wang2023tilted}. 
 \citet{lee2020learning} propose an upper bound on the
 entropic risk function generalization error via
 a representation of a coherent risk function and using the Rademacher complexity approach. However, their approach is limited to bounded loss functions.
 
 Tilted risk for off-policy learning and evaluation is proposed in a concurrent work by \citet{behnamnialog} where a regret bound analysis of this estimator under heavy-tailed weighted reward assumption is proposed. More details about similarities and differences with this work are provided in Appendix~\ref{app: related work}.
 
Although rich experiments are given by \citet{li2020tilted} for the TERM algorithm in different applications, the generalization error of the TERM has not yet been addressed for unbounded loss functions with bounded $(1+\epsilon)$-th moment for some $\epsilon\in(0,1]$.

\textbf{Generalization error under unbounded loss functions:} Several studies have investigated the generalization error of linear empirical risk under unbounded loss functions. Some works studied the generalization error under unbounded loss functions via the PAC-Bayesian approach. Losses with heavier tails are studied by \citet{alquier2018simpler} where probability bounds  are developed. Using a different estimator than empirical risk, PAC-Bayes bounds for losses with bounded second and third moments are developed by \citet{holland2019pac}. Notably, their bounds include a term that can increase with the number of samples $n$. \citet{kuzborskij2019efron} and \citet{haddouche2022pac} also provide bounds for losses with a bounded second moment. The bounds in \citet{haddouche2022pac}
rely on a parameter that must be selected before the training data is drawn. Information-theoretic bounds based on the second moment of $\sup_{h\in\mathcal{H}}|\ell(h,Z)-\mathbb{E}[\ell(h,\tilde Z)]|$ are derived in \citet{lugosi2022generalization,lugosi2023online}. In contrast, our second moment assumption  is more relaxed, being based on the expected version with respect to the distribution over the hypothesis set and the data-generating distribution. Furthermore, an upper bound on generalization error via VC-dimension and growth function under bounded $(1+\epsilon)$-th moment for $\epsilon\in(0,1]$ is proposed in \citep[Corollary 12]{cortes2019relative} which is motivated by relative deviation generalization bounds in binary classification. In addition, the final convergence rate for unbounded loss is $O(\log(n)n^{\frac{-\epsilon}{1+\epsilon}})$ based on \citep[Corollary 12]{cortes2019relative}. In contrast, we derive the results for a multi-classification scenario with a convergence rate of $O(n^{\frac{-\epsilon}{1+\epsilon}})$. Our work 
focuses on tilted empirical risk,  which is overlooked in the literature on unbounded loss functions.

 \section{Conclusion } \label{sec:con}

In this paper, we study the tilted empirical risk minimization, as proposed by \citet{li2020tilted}. In particular, we established an upper and lower bound on the tilted generalization error of the tilted empirical risk through uniform and information-theoretic approaches, obtaining theoretical guarantees that the convergence rate is  $O(n^{-\epsilon/(1+\epsilon)})$ under unbounded loss functions for negative tilt provided that $(1+\epsilon)$-th moment of loss function is bounded for some $\epsilon\in(0,1]$.  We also study the tilted generalization error under distribution shift in the training dataset due to noise or outliers, where we discussed the generalization and robustness trade-off. 
We also explore the KL-regularized tilted empirical risk minimization, where the solution involves the tilted Gibbs posterior, and we derive a parametric upper bound on this minimization with a convergence rate of $O(n^{-\epsilon})$ under some conditions and unbounded loss function provided that $(1+\epsilon)$-th moment of loss functions is bounded for some $\epsilon\in(0,1]$.

\section{Future Works}\label{sec:future}
Building on our current results, we highlight several promising directions for future work:

\textbf{Overparameterization Regime:} An interesting avenue is to explore generalization bounds for tilted empirical risk in overparameterized models, particularly in the mean-field regime. Techniques from recent works, such as \citet{aminian2023mean}, may offer valuable insights for extending our results in this setting.

\textbf{Removing Sample Size Assumptions for Unbounded Losses:} For unbounded loss functions, we currently require a lower bound on the number of training samples to establish generalization guarantees. While we relaxed this assumption for the bounded loss case (see Appendix~\ref{app_sec_bounded_case}), extending the analysis to the unbounded case without any sample size constraints would be a valuable extension.

\textbf{Positive Tilt ($\gamma > 0$):} Our analysis focuses on the case of negative tilting ($\gamma < 0$), assuming that the unbounded loss function has a bounded $(1+\epsilon)$-th moment for some $\epsilon \in (0,1]$. Extending the generalization analysis to positive tilting scenarios with unbounded losses remains an open and important direction.

\textbf{Alternative Generalization Frameworks:} Our current bounds rely on uniform  and information-theoretic frameworks. However, other frameworks—such as PAC-Bayesian analysis~\citep{catoni2003pac}, algorithmic stability~\citep{bousquet2002stability}, and Rademacher complexity~\citep{bartlett2002rademacher}—offer complementary perspectives on generalization. We derived results using these frameworks under the bounded loss function assumption in Appendix~\ref{app_sec_bounded_case}. Investigating the applicability of these methods to tilted empirical risk under an unbounded loss function could yield deeper theoretical insights.

\section*{Acknowledgment}
Gholamali Aminian, Gesine Reinert and Samuel N. Cohen acknowledge the support of the UKRI Prosperity Partnership Scheme (FAIR) under EPSRC Grant EP/V056883/1 and the Alan Turing Institute. Gesine Reinert is also supported in part by EPSRC grants EP/W037211/1 and EP/R018472/1. Amir R. Asadi is supported by Leverhulme Trust grant ECF-2023-189 and Isaac Newton Trust grant 23.08(b). The authors thank the anonymous referees and the
area chair for their insightful and constructive comments,
which have led to an improved version of the
paper. For the purpose of Open Access, the authors note that a CC BY public copyright license applies to any Author Accepted Manuscript version arising from this submission.

\section*{Impact Statement}

The goal of this work is to improve our understanding of tilted empirical risk (TER) and to enhance its practical applicability by introducing a data-driven approach for hyperparameter tuning. However, we acknowledge that our current theoretical analysis, which focuses on the negative tilt regime for understanding the robustness of TER, does not capture all aspects of TER—particularly its behavior under positive tilt for biases and imbalance scenarios.


\bibliography{Refs}

\begin{thebibliography}{79}
\providecommand{\natexlab}[1]{#1}
\providecommand{\url}[1]{\texttt{#1}}
\expandafter\ifx\csname urlstyle\endcsname\relax
  \providecommand{\doi}[1]{doi: #1}\else
  \providecommand{\doi}{doi: \begingroup \urlstyle{rm}\Url}\fi

\bibitem[Alquier(2021)]{alquier2021user}
Alquier, P.
\newblock User-friendly introduction to {PAC}-{B}ayes bounds.
\newblock \emph{arXiv preprint arXiv:2110.11216}, 2021.

\bibitem[Alquier \& Guedj(2018)Alquier and Guedj]{alquier2018simpler}
Alquier, P. and Guedj, B.
\newblock Simpler {PAC}-{B}ayesian bounds for hostile data.
\newblock \emph{Machine Learning}, 107\penalty0 (5):\penalty0 887--902, 2018.

\bibitem[Ambroladze et~al.(2007)Ambroladze, Parrado-Hern{\'a}ndez, and Shawe-Taylor]{ambroladze2007tighter}
Ambroladze, A., Parrado-Hern{\'a}ndez, E., and Shawe-Taylor, J.
\newblock Tighter {PAC}-{B}ayes bounds.
\newblock \emph{Advances in neural information processing systems}, 19:\penalty0 9, 2007.

\bibitem[Aminian et~al.(2015)Aminian, Arjmandi, Gohari, Nasiri-Kenari, and Mitra]{aminian2015capacity}
Aminian, G., Arjmandi, H., Gohari, A., Nasiri-Kenari, M., and Mitra, U.
\newblock Capacity of diffusion-based molecular communication networks over {LTI}-{P}oisson channels.
\newblock \emph{IEEE Transactions on Molecular, Biological and Multi-Scale Communications}, 1\penalty0 (2):\penalty0 188--201, 2015.

\bibitem[Aminian et~al.(2020)Aminian, Toni, and Rodrigues]{aminian2020jensen}
Aminian, G., Toni, L., and Rodrigues, M.~R.
\newblock Jensen-{S}hannon information based characterization of the generalization error of learning algorithms.
\newblock In \emph{2020 IEEE Information Theory Workshop (ITW)}. IEEE, 2020.

\bibitem[Aminian et~al.(2021{\natexlab{a}})Aminian, Bu, Toni, Rodrigues, and Wornell]{aminian2021exact}
Aminian, G., Bu, Y., Toni, L., Rodrigues, M., and Wornell, G.
\newblock An exact characterization of the generalization error for the gibbs algorithm.
\newblock \emph{Advances in Neural Information Processing Systems}, 34:\penalty0 8106--8118, 2021{\natexlab{a}}.

\bibitem[Aminian et~al.(2021{\natexlab{b}})Aminian, Toni, and Rodrigues]{aminian2021information}
Aminian, G., Toni, L., and Rodrigues, M.~R.
\newblock Information-theoretic bounds on the moments of the generalization error of learning algorithms.
\newblock In \emph{2021 IEEE International Symposium on Information Theory (ISIT)}, pp.\  682--687. IEEE, 2021{\natexlab{b}}.

\bibitem[Aminian et~al.(2023)Aminian, Cohen, and Szpruch]{aminian2023mean}
Aminian, G., Cohen, S.~N., and Szpruch, {\L}.
\newblock Mean-field analysis of generalization errors.
\newblock \emph{arXiv preprint arXiv:2306.11623}, 2023.

\bibitem[Asadi \& Abbe(2020)Asadi and Abbe]{asadi2020chaining}
Asadi, A.~R. and Abbe, E.
\newblock Chaining meets chain rule: Multilevel entropic regularization and training of neural networks.
\newblock \emph{Journal of Machine Learning Research}, 21\penalty0 (139):\penalty0 1--32, 2020.

\bibitem[Asadi et~al.(2018)Asadi, Abbe, and Verd{\'u}]{asadi2018chaining}
Asadi, A.~R., Abbe, E., and Verd{\'u}, S.
\newblock Chaining mutual information and tightening generalization bounds.
\newblock In \emph{Advances in Neural Information Processing Systems}, pp.\  7234--7243, 2018.

\bibitem[Bartlett \& Mendelson(2002)Bartlett and Mendelson]{bartlett2002rademacher}
Bartlett, P.~L. and Mendelson, S.
\newblock Rademacher and gaussian complexities: Risk bounds and structural results.
\newblock \emph{Journal of Machine Learning Research}, 3\penalty0 (Nov):\penalty0 463--482, 2002.

\bibitem[Bartlett et~al.(1998)Bartlett, Maiorov, and Meir]{DBLP:journals/neco/BartlettMM98}
Bartlett, P.~L., Maiorov, V., and Meir, R.
\newblock Almost linear vc-dimension bounds for piecewise polynomial networks.
\newblock \emph{Neural Comput.}, 10\penalty0 (8):\penalty0 2159--2173, 1998.
\newblock \doi{10.1162/089976698300017016}.
\newblock URL \url{https://doi.org/10.1162/089976698300017016}.

\bibitem[Bartlett et~al.(2019)Bartlett, Harvey, Liaw, and Mehrabian]{bartlett19}
Bartlett, P.~L., Harvey, N., Liaw, C., and Mehrabian, A.
\newblock Nearly-tight vc-dimension and pseudodimension bounds for piecewise linear neural networks.
\newblock \emph{Journal of Machine Learning Research}, 20\penalty0 (63):\penalty0 1--17, 2019.
\newblock URL \url{http://jmlr.org/papers/v20/17-612.html}.

\bibitem[B{\'e}gin et~al.(2016)B{\'e}gin, Germain, Laviolette, and Roy]{begin2016pac}
B{\'e}gin, L., Germain, P., Laviolette, F., and Roy, J.-F.
\newblock {PAC}-{B}ayesian bounds based on the {R}{\'e}nyi divergence.
\newblock In \emph{Artificial Intelligence and Statistics}, pp.\  435--444. PMLR, 2016.

\bibitem[Behnamnia et~al.(2025)Behnamnia, Aminian, Aghaei, Shi, Tan, and Rabiee]{behnamnialog}
Behnamnia, A., Aminian, G., Aghaei, A., Shi, C., Tan, V.~Y., and Rabiee, H.~R.
\newblock Log-sum-exponential estimator for off-policy evaluation and learning.
\newblock \emph{Proceedings of the International Conference on Machine Learning (ICML)}, 2025.

\bibitem[Ben-David et~al.(2006)Ben-David, Blitzer, Crammer, and Pereira]{ben2006analysis}
Ben-David, S., Blitzer, J., Crammer, K., and Pereira, F.
\newblock Analysis of representations for domain adaptation.
\newblock \emph{Advances in neural information processing systems}, 19, 2006.

\bibitem[Boucheron et~al.(2013)Boucheron, Lugosi, and Massart]{boucheron2013concentration}
Boucheron, S., Lugosi, G., and Massart, P.
\newblock \emph{Concentration inequalities: A nonasymptotic theory of independence}.
\newblock Oxford University Press, 2013.

\bibitem[Bousquet \& Elisseeff(2002{\natexlab{a}})Bousquet and Elisseeff]{Bousquet-Elisseeff}
Bousquet, O. and Elisseeff, A.
\newblock Stability and generalization.
\newblock \emph{J. Mach. Learn. Res.}, 2:\penalty0 499–526, March 2002{\natexlab{a}}.
\newblock ISSN 1532-4435.
\newblock \doi{10.1162/153244302760200704}.
\newblock URL \url{https://doi.org/10.1162/153244302760200704}.

\bibitem[Bousquet \& Elisseeff(2002{\natexlab{b}})Bousquet and Elisseeff]{bousquet2002stability}
Bousquet, O. and Elisseeff, A.
\newblock Stability and generalization.
\newblock \emph{Journal of Machine Learning Research}, 2\penalty0 (Mar):\penalty0 499--526, 2002{\natexlab{b}}.

\bibitem[Bousquet et~al.(2003)Bousquet, Boucheron, and Lugosi]{bousquet2003introduction}
Bousquet, O., Boucheron, S., and Lugosi, G.
\newblock Introduction to statistical learning theory.
\newblock In \emph{Summer school on machine learning}, pp.\  169--207. Springer, 2003.

\bibitem[Bousquet et~al.(2020)Bousquet, Klochkov, and Zhivotovskiy]{bousquet/klochkov/zhivotovskiy:2020}
Bousquet, O., Klochkov, Y., and Zhivotovskiy, N.
\newblock Sharper bounds for uniformly stable algorithms.
\newblock In \emph{Conference on Learning Theory}, pp.\  610--626, 2020.

\bibitem[Bu et~al.(2020)Bu, Zou, and Veeravalli]{bu2020tightening}
Bu, Y., Zou, S., and Veeravalli, V.~V.
\newblock Tightening mutual information-based bounds on generalization error.
\newblock \emph{IEEE Journal on Selected Areas in Information Theory}, 1\penalty0 (1):\penalty0 121--130, 2020.

\bibitem[Calafiore et~al.(2019)Calafiore, Gaubert, and Possieri]{calafiore2019log}
Calafiore, G.~C., Gaubert, S., and Possieri, C.
\newblock Log-sum-exp neural networks and posynomial models for convex and log-log-convex data.
\newblock \emph{IEEE transactions on neural networks and learning systems}, 31\penalty0 (3):\penalty0 827--838, 2019.

\bibitem[Cardaliaguet et~al.(2019)Cardaliaguet, Delarue, Lasry, and Lions]{cardaliaguet2019master}
Cardaliaguet, P., Delarue, F., Lasry, J.-M., and Lions, P.-L.
\newblock \emph{The master equation and the convergence problem in mean field games}.
\newblock Princeton University Press, 2019.

\bibitem[Catoni(2003)]{catoni2003pac}
Catoni, O.
\newblock A pac-bayesian approach to adaptive classification.
\newblock \emph{preprint}, 840:\penalty0 2, 2003.

\bibitem[Catoni(2007)]{catoni2007pac}
Catoni, O.
\newblock {PAC}-{B}ayesian supervised classification: the thermodynamics of statistical learning.
\newblock \emph{arXiv preprint arXiv:0712.0248}, 2007.

\bibitem[Chen et~al.(2018)Chen, Jin, and Yu]{chen2018stability}
Chen, Y., Jin, C., and Yu, B.
\newblock Stability and convergence trade-off of iterative optimization algorithms.
\newblock \emph{arXiv preprint arXiv:1804.01619}, 2018.

\bibitem[Christmann \& Steinwart(2004)Christmann and Steinwart]{christmann2004robustness}
Christmann, A. and Steinwart, I.
\newblock On robustness properties of convex risk minimization methods for pattern recognition.
\newblock \emph{The Journal of Machine Learning Research}, 5:\penalty0 1007--1034, 2004.

\bibitem[Cortes et~al.(2019)Cortes, Greenberg, and Mohri]{cortes2019relative}
Cortes, C., Greenberg, S., and Mohri, M.
\newblock Relative deviation learning bounds and generalization with unbounded loss functions.
\newblock \emph{Annals of Mathematics and Artificial Intelligence}, 85:\penalty0 45--70, 2019.

\bibitem[Dvijotham \& Todorov(2012)Dvijotham and Todorov]{dvijotham2012unifying}
Dvijotham, K. and Todorov, E.
\newblock A unifying framework for linearly solvable control.
\newblock \emph{arXiv preprint arXiv:1202.3715}, 2012.

\bibitem[Dziugaite \& Roy(2018)Dziugaite and Roy]{dziugaite2018entropy}
Dziugaite, G.~K. and Roy, D.
\newblock Entropy-{SGD} optimizes the prior of a {PAC}-{B}ayes bound: Generalization properties of entropy-{SGD} and data-dependent priors.
\newblock In \emph{International Conference on Machine Learning}, pp.\  1377--1386. PMLR, 2018.

\bibitem[Golowich et~al.(2018)Golowich, Rakhlin, and Shamir]{Golowich18a}
Golowich, N., Rakhlin, A., and Shamir, O.
\newblock Size-independent sample complexity of neural networks.
\newblock In Bubeck, S., Perchet, V., and Rigollet, P. (eds.), \emph{Proceedings of the 31st Conference On Learning Theory}, volume~75 of \emph{Proceedings of Machine Learning Research}, pp.\  297--299. PMLR, 06--09 Jul 2018.
\newblock URL \url{https://proceedings.mlr.press/v75/golowich18a.html}.

\bibitem[Haddouche \& Guedj(2022)Haddouche and Guedj]{haddouche2022pac}
Haddouche, M. and Guedj, B.
\newblock Pac-bayes generalisation bounds for heavy-tailed losses through supermartingales.
\newblock \emph{arXiv preprint arXiv:2210.00928}, 2022.

\bibitem[Hafez-Kolahi et~al.(2020)Hafez-Kolahi, Golgooni, Kasaei, and Soleymani]{hafez2020conditioning}
Hafez-Kolahi, H., Golgooni, Z., Kasaei, S., and Soleymani, M.
\newblock Conditioning and processing: Techniques to improve information-theoretic generalization bounds.
\newblock \emph{Advances in Neural Information Processing Systems}, 33, 2020.

\bibitem[Hellstr{\"o}m \& Durisi(2020)Hellstr{\"o}m and Durisi]{hellstrom2020generalization}
Hellstr{\"o}m, F. and Durisi, G.
\newblock Generalization bounds via information density and conditional information density.
\newblock \emph{IEEE Journal on Selected Areas in Information Theory}, 2020.

\bibitem[Holden \& Niranjan(1995)Holden and Niranjan]{holden1995practical}
Holden, S.~B. and Niranjan, M.
\newblock On the practical applicability of vc dimension bounds.
\newblock \emph{Neural Computation}, 7\penalty0 (6):\penalty0 1265--1288, 1995.

\bibitem[Holland(2019)]{holland2019pac}
Holland, M.
\newblock Pac-bayes under potentially heavy tails.
\newblock \emph{Advances in Neural Information Processing Systems}, 32, 2019.

\bibitem[Howard \& Matheson(1972)Howard and Matheson]{howard1972risk}
Howard, R.~A. and Matheson, J.~E.
\newblock Risk-sensitive markov decision processes.
\newblock \emph{Management science}, 18\penalty0 (7):\penalty0 356--369, 1972.

\bibitem[Kakade et~al.(2008)Kakade, Sridharan, and Tewari]{kakade2008complexity}
Kakade, S.~M., Sridharan, K., and Tewari, A.
\newblock On the complexity of linear prediction: Risk bounds, margin bounds, and regularization.
\newblock \emph{Advances in neural information processing systems}, 21, 2008.

\bibitem[Kuzborskij \& Szepesv{\'a}ri(2019)Kuzborskij and Szepesv{\'a}ri]{kuzborskij2019efron}
Kuzborskij, I. and Szepesv{\'a}ri, C.
\newblock Efron-stein pac-bayesian inequalities.
\newblock \emph{arXiv preprint arXiv:1909.01931}, 2019.

\bibitem[Lee et~al.(2020)Lee, Park, and Shin]{lee2020learning}
Lee, J., Park, S., and Shin, J.
\newblock Learning bounds for risk-sensitive learning.
\newblock \emph{Advances in Neural Information Processing Systems}, 33:\penalty0 13867--13879, 2020.

\bibitem[Li et~al.(2021)Li, Beirami, Sanjabi, and Smith]{li2020tilted}
Li, T., Beirami, A., Sanjabi, M., and Smith, V.
\newblock Tilted empirical risk minimization.
\newblock In \emph{International Conference on Learning Representations}, 2021.

\bibitem[Li et~al.(2023{\natexlab{a}})Li, Beirami, Sanjabi, and Smith]{li2023tilted}
Li, T., Beirami, A., Sanjabi, M., and Smith, V.
\newblock On tilted losses in machine learning: Theory and applications.
\newblock \emph{Journal of Machine Learning Research}, 24\penalty0 (142):\penalty0 1--79, 2023{\natexlab{a}}.

\bibitem[Li et~al.(2023{\natexlab{b}})Li, Zhao, Chen, and Zheng]{li2023heterogeneity}
Li, X., Zhao, S., Chen, C., and Zheng, Z.
\newblock Heterogeneity-aware fair federated learning.
\newblock \emph{Information Sciences}, 619:\penalty0 968--986, 2023{\natexlab{b}}.

\bibitem[Lowy \& Razaviyayn(2021)Lowy and Razaviyayn]{lowy2021output}
Lowy, A. and Razaviyayn, M.
\newblock Output perturbation for differentially private convex optimization with improved population loss bounds, runtimes and applications to private adversarial training.
\newblock \emph{arXiv preprint arXiv:2102.04704}, 2021.

\bibitem[Lugosi \& Neu(2022)Lugosi and Neu]{lugosi2022generalization}
Lugosi, G. and Neu, G.
\newblock Generalization bounds via convex analysis.
\newblock In \emph{Conference on Learning Theory}, pp.\  3524--3546. PMLR, 2022.

\bibitem[Lugosi \& Neu(2023)Lugosi and Neu]{lugosi2023online}
Lugosi, G. and Neu, G.
\newblock Online-to-pac conversions: Generalization bounds via regret analysis.
\newblock \emph{arXiv preprint arXiv:2305.19674}, 2023.

\bibitem[Marceau \& Rioux(2001)Marceau and Rioux]{marceau2001robustness}
Marceau, {\'E}. and Rioux, J.
\newblock On robustness in risk theory.
\newblock \emph{Insurance: Mathematics and Economics}, 29\penalty0 (2):\penalty0 167--185, 2001.

\bibitem[Massart(2000)]{massart2000some}
Massart, P.
\newblock Some applications of concentration inequalities to statistics.
\newblock In \emph{Annales de la Facult{\'e} des sciences de Toulouse: Math{\'e}matiques}, volume~9, pp.\  245--303, 2000.

\bibitem[McAllester(1999)]{mcallester1999some}
McAllester, D.~A.
\newblock Some {PAC}-{B}ayesian theorems.
\newblock \emph{Machine Learning}, 37\penalty0 (3):\penalty0 355--363, 1999.

\bibitem[McAllester(2003)]{mcallester2003pac}
McAllester, D.~A.
\newblock {PAC}-{B}ayesian stochastic model selection.
\newblock \emph{Machine Learning}, 51\penalty0 (1):\penalty0 5--21, 2003.

\bibitem[Mohri et~al.(2018)Mohri, Rostamizadeh, and Talwalkar]{mohri2018foundations}
Mohri, M., Rostamizadeh, A., and Talwalkar, A.
\newblock \emph{Foundations of machine learning}.
\newblock MIT press, 2018.

\bibitem[Mou et~al.(2017)Mou, Wang, Zhai, and Zheng]{mou2017generalization}
Mou, W., Wang, L., Zhai, X., and Zheng, K.
\newblock Generalization bounds of {SGLD} for non-convex learning: Two theoretical viewpoints.
\newblock \emph{arXiv preprint arXiv:1707.05947}, 2017.

\bibitem[Murphy(2012)]{murphy2012machine}
Murphy, K.~P.
\newblock \emph{Machine learning: a probabilistic perspective}.
\newblock MIT press, 2012.

\bibitem[Polyanskiy \& Wu(2022)Polyanskiy and Wu]{polyanskiy2022information}
Polyanskiy, Y. and Wu, Y.
\newblock Information theory: From coding to learning, 2022.

\bibitem[Resnick(2007)]{resnick2007heavy}
Resnick, S.~I.
\newblock \emph{Heavy-tail phenomena: probabilistic and statistical modeling}, volume~10.
\newblock Springer Science \& Business Media, 2007.

\bibitem[Rivasplata et~al.(2020)Rivasplata, Kuzborskij, Szepesv{\'a}ri, and Shawe-Taylor]{rivasplata2020pac}
Rivasplata, O., Kuzborskij, I., Szepesv{\'a}ri, C., and Shawe-Taylor, J.
\newblock {PAC}-{B}ayes analysis beyond the usual bounds.
\newblock \emph{Advances in Neural Information Processing Systems}, 33:\penalty0 16833--16845, 2020.

\bibitem[Rodrigues \& Eldar(2021)Rodrigues and Eldar]{rodrigues2021information}
Rodrigues, M.~R. and Eldar, Y.~C.
\newblock \emph{Information-theoretic methods in data science}.
\newblock Cambridge University Press, 2021.

\bibitem[Ronchetti \& Huber(2009)Ronchetti and Huber]{ronchetti2009robust}
Ronchetti, E.~M. and Huber, P.~J.
\newblock \emph{Robust statistics}.
\newblock John Wiley \& Sons Hoboken, NJ, USA, 2009.

\bibitem[Russo \& Zou(2019)Russo and Zou]{russo2019much}
Russo, D. and Zou, J.
\newblock How much does your data exploration overfit? controlling bias via information usage.
\newblock \emph{IEEE Transactions on Information Theory}, 66\penalty0 (1):\penalty0 302--323, 2019.

\bibitem[Shalev-Shwartz \& Ben-David(2014)Shalev-Shwartz and Ben-David]{shalev2014understanding}
Shalev-Shwartz, S. and Ben-David, S.
\newblock \emph{Understanding machine learning: From theory to algorithms}.
\newblock Cambridge university press, 2014.

\bibitem[Shawe-Taylor \& Williamson(1997)Shawe-Taylor and Williamson]{shawe1997pac}
Shawe-Taylor, J. and Williamson, R.~C.
\newblock A {PAC} analysis of a {B}ayesian estimator.
\newblock In \emph{Proceedings of the tenth annual conference on Computational learning theory}, pp.\  2--9, 1997.

\bibitem[Steinke \& Zakynthinou(2020)Steinke and Zakynthinou]{steinke2020reasoning}
Steinke, T. and Zakynthinou, L.
\newblock Reasoning about generalization via conditional mutual information.
\newblock In \emph{Conference on Learning Theory}, pp.\  3437--3452, 2020.

\bibitem[Swaminathan \& Joachims(2015)Swaminathan and Joachims]{swaminathan2015batch}
Swaminathan, A. and Joachims, T.
\newblock Batch learning from logged bandit feedback through counterfactual risk minimization.
\newblock \emph{The Journal of Machine Learning Research}, 16\penalty0 (1):\penalty0 1731--1755, 2015.

\bibitem[Szab{\'o} et~al.(2021)Szab{\'o}, Jamali-Rad, and Mannava]{szabo2021tilted}
Szab{\'o}, A., Jamali-Rad, H., and Mannava, S.-D.
\newblock Tilted cross-entropy (tce): Promoting fairness in semantic segmentation.
\newblock In \emph{Proceedings of the IEEE/CVF Conference on Computer Vision and Pattern Recognition}, pp.\  2305--2310, 2021.

\bibitem[Talagrand(1996)]{talagrand1996new}
Talagrand, M.
\newblock New concentration inequalities in product spaces.
\newblock \emph{Inventiones mathematicae}, 126\penalty0 (3):\penalty0 505--563, 1996.

\bibitem[Vapnik(1999)]{vapnik1999overview}
Vapnik, V.~N.
\newblock An overview of statistical learning theory.
\newblock \emph{IEEE transactions on neural networks}, 10\penalty0 (5):\penalty0 988--999, 1999.

\bibitem[Vapnik \& Chervonenkis(1971)Vapnik and Chervonenkis]{vapnik2015uniform}
Vapnik, V.~N. and Chervonenkis, A.~Y.
\newblock On the uniform convergence of relative frequencies of events to their probabilities.
\newblock In \emph{Theory of probability and its applications}, pp.\  11--30. Springer, 1971.

\bibitem[Vershynin(2010)]{vershynin2010introduction}
Vershynin, R.
\newblock Introduction to the non-asymptotic analysis of random matrices.
\newblock \emph{arXiv preprint arXiv:1011.3027}, 2010.

\bibitem[Wang et~al.(2023)Wang, Chen, Liu, He, Gong, Fu, and Tao]{wang2023tilted}
Wang, Y., Chen, H., Liu, W., He, F., Gong, T., Fu, Y., and Tao, D.
\newblock Tilted sparse additive models.
\newblock In \emph{International Conference on Machine Learning}, pp.\  35579--35604. PMLR, 2023.

\bibitem[Williams \& Barber(1998)Williams and Barber]{williams1998bayesian}
Williams, C.~K. and Barber, D.
\newblock Bayesian classification with gaussian processes.
\newblock \emph{IEEE Transactions on pattern analysis and machine intelligence}, 20\penalty0 (12):\penalty0 1342--1351, 1998.

\bibitem[Xu \& Raginsky(2017)Xu and Raginsky]{xu2017information}
Xu, A. and Raginsky, M.
\newblock Information-theoretic analysis of generalization capability of learning algorithms.
\newblock \emph{Advances in Neural Information Processing Systems}, 30, 2017.

\bibitem[Xu \& Mannor(2012)Xu and Mannor]{xu2012robustness}
Xu, H. and Mannor, S.
\newblock Robustness and generalization.
\newblock \emph{Machine learning}, 86\penalty0 (3):\penalty0 391--423, 2012.

\bibitem[Ye et~al.(2021)Ye, Xie, Cai, Li, Li, and Wang]{ye2021towards}
Ye, H., Xie, C., Cai, T., Li, R., Li, Z., and Wang, L.
\newblock Towards a theoretical framework of out-of-distribution generalization.
\newblock \emph{Advances in Neural Information Processing Systems}, 34:\penalty0 23519--23531, 2021.

\bibitem[Zhang et~al.(2022)Zhang, Malekmohammadi, Chen, and Yu]{zhang2022proportional}
Zhang, G., Malekmohammadi, S., Chen, X., and Yu, Y.
\newblock Proportional fairness in federated learning.
\newblock \emph{arXiv preprint arXiv:2202.01666}, 2022.

\bibitem[Zhang(2006)]{zhang2006information}
Zhang, T.
\newblock Information-theoretic upper and lower bounds for statistical estimation.
\newblock \emph{IEEE Transactions on Information Theory}, 52\penalty0 (4):\penalty0 1307--1321, 2006.

\bibitem[Zhang et~al.(2023)Zhang, Wang, Zhu, Chen, Li, and Wu]{zhang2023robust}
Zhang, X., Wang, Y., Zhu, L., Chen, H., Li, H., and Wu, L.
\newblock Robust variable structure discovery based on tilted empirical risk minimization.
\newblock \emph{Applied Intelligence}, 53\penalty0 (14):\penalty0 17865--17886, 2023.

\bibitem[Zhou et~al.(2020)Zhou, Wang, and Bilmes]{zhou2020robust}
Zhou, T., Wang, S., and Bilmes, J.
\newblock Robust curriculum learning: from clean label detection to noisy label self-correction.
\newblock In \emph{International Conference on Learning Representations}, 2020.

\bibitem[Zou et~al.(2024)Zou, Kawaguchi, Liu, Liu, Lee, and Hsu]{zou2024towards}
Zou, Y., Kawaguchi, K., Liu, Y., Liu, J., Lee, M.-L., and Hsu, W.
\newblock Towards robust out-of-distribution generalization bounds via sharpness.
\newblock \emph{arXiv preprint arXiv:2403.06392}, 2024.

\end{thebibliography}
\bibliographystyle{icml2025}

\newpage
\appendix
\onecolumn

\part{Appendix} 
\parttoc 

\clearpage

\section{Summary of Notations}\label{app:overview}

All notations are summarized in Table~\ref{Table: notation}.

\begin{table}[ht]
    \centering
    \caption{Summary of notations in the paper}
   \resizebox{\linewidth}{!}{ \begin{tabular}{cc|cc}
         \toprule
         Notation&  Definition & Notation&  Definition\\
         \midrule
         $S$ & Training set & $Z_i$ & $i$-th sample\\
         $\mathcal{X}$ & Input (feature) space & $\mathcal{Y}$ & Output (label) space\\
         $\mu$ & Data-generating distribution & $\tilde{\mu}$ & Data-generating distribution under distributional shift\\
         $\gamma$ & Tilt parameter & $\KLr(P\|Q)$ & KL-divergence between $P$ and $Q$\\
          $I(X;Y)$  & Mutual information & $\SKL(P\|Q)$ & Symmetrized KL divergence\\
          $\mathbb{TV}(P,Q)$ & Total variation distance & $\kappa_u$ & Bound on second moment\\
        $h$ & Hypothesis & $\mathcal{H}$ & Hypothesis space\\
        $\ell(h,z)$ & Loss function & $R(h,\mu)$ & Population (true) risk\\
        $\RTERM(\LA,S)$ & Tilted empirical risk &  $ \mathrm{gen}_{\gamma}(h,S)$ & Tilted generalization error\\
        $\RTT(h,\mu^{\otimes n})$ & Tilted population risk & $\mathfrak{E}_{\gamma}(\mu)$ & Excess risk\\
    
         \bottomrule
    \end{tabular}}
    \label{Table: notation}
\end{table}

\section{Other Related Works}\label{app: related work}
This section details related works about tilted empirical risk minimization, the Rademacher complexity and stability and PAC-Bayesian bounds. We also compare our work with \citet{behnamnialog} in more details.

\textbf{Generalization Error Analysis:} Different approaches have been applied to study the generalization error of general learning problems under empirical risk minimization, including  VC dimension-based, Rademacher complexity, PAC-Bayesian, stability and information-theoretic bounds. 

\textit{Uniform Bounds:}  Uniform bounds (or VC bounds) are proposed by ~\citet{vapnik2015uniform,DBLP:journals/neco/BartlettMM98,bartlett19}. For any class of functions $\mathcal{F}$ of VC dimension $d$, with probability at least $1-\delta$ the  
generalization error is $O\big((d+\log(1/\delta))^{1/2}n^{-1/2}\big)$. This bound depends solely on the VC dimension of the function class and on the sample size; in particular, it is independent of the learning algorithm.

\textit{Information-theoretic bounds:} 
\citet{russo2019much,xu2017information} propose using the 
mutual information between the input training set and the output hypothesis to upper bound the expected generalization error. Multiple approaches have been proposed to tighten the mutual information-based bound: \citet{bu2020tightening} provide tighter bounds by considering the individual sample mutual information; \citet{asadi2018chaining,asadi2020chaining} propose using chaining mutual information;  and \citet{steinke2020reasoning,hafez2020conditioning,aminian2020jensen,aminian2021information} provide different upper bounds on the expected generalization error based on the linear empirical risk framework. 

\textit{Rademacher Complexity Bounds:} This approach is a data-dependent method to provide an upper bound on the generalization error based on the Rademacher complexity of the function class $\mathcal{H}$, see \citep{bartlett2002rademacher,Golowich18a}. Bounding the Rademacher complexity involves the model parameters. Typically, in Rademacher complexity analysis, 
a symmetrization technique is used which can be applied to the empirical risk, but not directly to the TER. 

\textit{Stability Bounds:} 
Stability-based bounds for generalization error are given in~\citet{Bousquet-Elisseeff, bousquet/klochkov/zhivotovskiy:2020,mou2017generalization,chen2018stability,aminian2023mean}. For stability analysis, the key tool is Lemma 7 in \citet{bousquet2002stability}, which is based on ERM linearity. Therefore, we can not apply stability analysis to TER directly.

\textit{PAC-Bayesian bounds:} First proposed by \citet{shawe1997pac,mcallester1999some} and \citet{mcallester2003pac}, PAC-Bayesian analysis provides high probability bounds on the generalization error in terms of the KL divergence between the data-dependent posterior induced by the learning algorithm and a data-free prior that can be chosen arbitrarily~\citep{alquier2021user}. There are multiple ways to generalize the standard PAC-Bayesian bounds, including using 
information measures other than KL divergence \citep{alquier2018simpler,begin2016pac,hellstrom2020generalization,aminian2021information} and considering data-dependent priors~\citep{rivasplata2020pac,catoni2007pac,dziugaite2018entropy,ambroladze2007tighter}. However, this method has not been applied to TER to provide generalization error bounds.

 The aforementioned approaches are applied to study the generalization error in the linear empirical risk framework. To 
our knowledge, the generalization error of the tilted empirical risk minimization from information-theoretical and uniform approach perspectives has not been explored.

\textbf{Comparison with \citet{behnamnialog}:} In a concurrent work in reinforcement learning, \citet{behnamnialog} introduce an estimator for off-policy evaluation and learning based on the log-sum-exponential (LSE) operator applied to weighted rewards. This estimator can be viewed as a negatively tilted risk function. Their focus lies in bounding the regret, defined as the gap between the value of the optimal policy and that induced by the estimator, and in deriving concentration inequalities under heavy-tailed reward distributions. We compare our work with theirs from several perspectives:

\textit{Objective Functions:} Our study targets supervised learning, where the goal is to minimize a risk function such as empirical or tilted risk. In contrast, \citet{behnamnialog} addresses off-policy learning, aiming to maximize expected reward for a given logged data with respect to a policy, conditional distribution over actions for a given context. Furthermore, in a supervised learning scenario, the loss function is given. In contrast, in off-policy learning, the reward function is not known, and some samples of reward are given. Therefore, the learning paradigms and optimization targets differ. For more details, comparison of these scenarios see \citep[Table~1]{swaminathan2015batch}.

\textit{Assumptions:} We assume a bounded hypothesis class and heavy-tailed loss functions. In contrast, \citep[Assumption~5.1]{behnamnialog} considers a bounded policy class and assume heavy-tailed weighted rewards. These differences reflect the distinct nature of these tasks: off-policy and supervised learning scenarios.  Note that the definition of heavy-tailed random variable is general in statistical learning theory~\citep{resnick2007heavy} and can be applied in different contexts, e.g., bandit, off-policy learning, and supervised learning.

\textit{Methodology:} Despite differing goals, both works employ some common statistical tools, such as concentration inequalities. We build on analytical techniques similar to theirs, including lemmas concerning variance bounds under heavy-tailed distributions. However, our uniform bounds operate under different assumptions and are adapted to the supervised learning setting. In particular, we used Lemma~\ref{lem:exp_bound} and Lemma~\ref{lem_var_bound_epsilon}-- where a different proof of Lemma~\ref{lem_var_bound_epsilon} is proposed in \citep[Theorem~5.3]{behnamnialog}-- to derive our upper bounds on absolute value of generalization error (Theorem~\ref{thm: Abs gen unbounded}) using uniform approach under heavy-tailed assumption. However, as discussed in Section~\ref{sec:uniform_bound}, the uniform approach has inherent limitations, e.g., assuming bounded hypothesis sets which is a complexity measure in the supervised learning scenario. The limitations also apply to results in \citep[Theorem~5.3]{behnamnialog} with finite policy set as a complexity measure in off-policy learning. To overcome the limitations of finiteness of sets assumptions, we further develop new bounds using information-theoretic techniques.

\textit{Bounded Scenario:} Both works note limitations on sample complexity under heavy-tailed assumptions. We show in Appendix~\ref{app_sec_bounded_case} that such constraints can be lifted by assuming bounded loss functions—leading to generalization guarantees for all number of training samples.

\textit{Robustness:} A key contribution of our work is the analysis of tilted empirical risk under distribution shift, a direction not explored in earlier literature to our knowledge. Our results highlight the distinct value of TER in handling training-test distribution discrepancies. 

\textit{Regularization:} We also explored the KL-regularized problem for tilted empirical risk, where a tighter convergence rate of $O(n^{-\epsilon})$ for tilted empirical risk was derived under a heavy-tailed assumption.

\section{Technical Tools}
\label{app:technical}
We first define the functional linear derivative as in~\cite{cardaliaguet2019master}. 
\begin{definition}{\citep{cardaliaguet2019master}}
\label{def:flatDerivative}
A functional $U:\mathcal P(\mathbb R^n) \to \mathbb R$ 
admits a {\it functional linear derivative}
if there is a map $\frac{\delta U}{\delta m} : \mathcal P(\mathbb R^n) \times \mathbb R^n \to \mathbb R$ which is continuous on $\mathcal P(\mathbb R^n)$, such that for all
$m, m' \in\mathcal P(\mathbb R^n)$, it holds that
\begin{align*}
&U(m') - U(m) =\!\int_0^1 \int_{\mathbb{R}^n} \frac{\delta U}{\delta m}(m_\lambda,a) \, (m'
-m)(da)\,\mrd \lambda,
\end{align*}
where $m_\lambda=m + \lambda(m' - m)$.
\end{definition}   

The following lemmas are used in our proofs.

\begin{lemma}[Lemma 28 in \citep{lugosi2023online}]\label{lem:exp_bound}
    For $x>0$, the following inequality holds for $\epsilon\in(0,1]$,
    \begin{equation}
        \exp(-x)\leq 1-x+x^{(1+\epsilon)}.
    \end{equation}
\end{lemma}
The following lemma also appears in the concurrent work by \citet{behnamnialog}. For the sake of completeness, we provide an alternative proof below.
\begin{lemma}\label{lem_var_bound_epsilon}
Suppose that $X>0$ and $\gamma<0$, then we have for $\epsilon\in(0,1]$,
\begin{equation}
     \var(\exp(\gamma X))\leq 2^{\epsilon}|\gamma|^{1+\epsilon} \mathbb{E}[|X|^{1+\epsilon}].
\end{equation}
\end{lemma}
\begin{proof}
    From the variance definition, we have,
    \begin{align}
      \var(\exp(\gamma X))&=\mathbb{E}\Big[\Big(\exp(\gamma X)-\mathbb{E}[\exp(\gamma X)]\Big)^2\Big]\\
      &\overset{(a)}{=}\frac{1}{2}\mathbb{E}_{X,X'}\Big[\Big(\exp(\gamma X)-\exp(\gamma X')\Big)^2\Big]\\
      &\overset{(b)}{\leq}\frac{1}{2}\mathbb{E}_{X,X'}\Big[\Big(\exp(\gamma X)-\exp(\gamma X')\Big)^{1+\epsilon}\Big]
      \\
      &\overset{(c)}{\leq} \frac{1}{2}|\gamma|^{1+\epsilon} \mathbb{E}_{X,X'}[|X-X'|^{1+\epsilon}]\\
      &\overset{(d)}{\leq}\frac{1}{2}|\gamma|^{1+\epsilon} \mathbb{E}_{X,X'}[|X+X'|^{1+\epsilon}]
      \\
      &\overset{(e)}{\leq}2^{\epsilon} |\gamma|^{1+\epsilon} \mathbb{E}_{X}[|X|^{1+\epsilon}],
    \end{align}
    where (a) follows from variance representation for i.i.d copy $X$ and $X'$, (b) follows from the fact that $|\exp(\gamma X)-\exp(\gamma X')|\leq 1$, (c) follows from the fact that $\exp(\gamma x)$ is Lipschitz for $\gamma<0$, with parameter $|\gamma|$, therefore, we have $|\exp(\gamma x)-\exp(\gamma x')|\leq |\gamma||x-x'|$, (d) follows the positiveness of $X$ and $X'$ and (e) follows from Jensen-inequality.
\end{proof}

\begin{lemma}\label{lem:jensen_cap_ln}
Suppose that $0<a <X<b < \infty$. Then the following inequality holds,
\begin{equation*} 
    \begin{split}
       \frac{\var_{P_X}(X)}{2b^2}\leq  \log(\mbE[X])-\mbE[\log(X)]\leq \frac{\var_{P_X}(X)}{2a^2},
    \end{split}
\end{equation*}
where $\var_{P_X}(X)$ is the variance of $X$ under the distribution $P_X$.
\end{lemma}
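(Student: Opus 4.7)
The statement is a two-sided refinement of Jensen's inequality for the concave function $\log$, so the natural plan is to use a second-order Taylor expansion of $\log$ at the point $\mathbb{E}[X]$ with an explicit remainder, and then bound the remainder using the hypothesis $a < X < b$.

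More concretely, set $\mu := \mathbb{E}[X]$ and note that $\mu \in (a,b)$. For each $x \in (a,b)$, Taylor's theorem with Lagrange remainder gives some $\xi(x)$ strictly between $x$ and $\mu$ (hence $\xi(x) \in (a,b)$) such that
\begin{equation*}
\log(x) \;=\; \log(\mu) + \frac{1}{\mu}(x - \mu) - \frac{1}{2\,\xi(x)^2}(x - \mu)^2.
\end{equation*}
Applying this pointwise to $X$ and taking expectation under $P_X$, the linear term vanishes since $\mathbb{E}[X - \mu] = 0$, yielding the identity
\begin{equation*}
\log(\mathbb{E}[X]) - \mathbb{E}[\log(X)] \;=\; \mathbb{E}\!\left[\frac{(X - \mu)^2}{2\,\xi(X)^2}\right].
\end{equation*}

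Since $a < \xi(X) < b$ almost surely, we have the uniform sandwich $\tfrac{1}{2b^2} \leq \tfrac{1}{2\xi(X)^2} \leq \tfrac{1}{2a^2}$. Multiplying by the nonnegative random variable $(X - \mu)^2$ and taking expectations, together with $\mathbb{E}[(X - \mu)^2] = \var_{P_X}(X)$, delivers both the upper and the lower bound simultaneously.

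No step is genuinely hard here; the only point that deserves care is the measurability of $\xi(X)$ (so that taking expectation of the remainder is legitimate). This can be bypassed by noting that the identity
\begin{equation*}
\log(x) - \log(\mu) - \tfrac{1}{\mu}(x-\mu) \;=\; -\int_\mu^x \frac{x - t}{t^2}\,dt
\end{equation*}
holds for every $x \in (a,b)$ with the integrand bounded between $(x-\mu)^2/(2b^2)$ and $(x-\mu)^2/(2a^2)$ in absolute value, so one can integrate and then take expectation directly, avoiding any implicit function $\xi$. This variant is what I would actually write out to keep the argument rigorous.
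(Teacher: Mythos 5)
Your proof is correct, but it follows a different route from the paper. The paper's argument is a one-line application of Jensen's inequality to the auxiliary functions $\log(x)+\beta x^2$: computing $\frac{d^2}{dx^2}\big(\log(x)+\beta x^2\big)=-\frac{1}{x^2}+2\beta$ shows this function is concave on $(a,b)$ for $\beta=\frac{1}{2b^2}$ and convex for $\beta=\frac{1}{2a^2}$, and applying Jensen to each choice immediately yields the lower and upper bounds respectively. You instead expand $\log$ to second order around $\mu=\mbE[X]$ and bound the Lagrange (or integral-form) remainder, which produces the exact identity $\log(\mbE[X])-\mbE[\log(X)]=\mbE\big[(X-\mu)^2/(2\xi(X)^2)\big]$ before sandwiching $\xi(X)\in(a,b)$. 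Both arguments rest on the same analytic fact, namely that $-1/x^2$ is pinned between $-1/a^2$ and $-1/b^2$ on the interval, but your version buys an exact representation of the Jensen gap (and is the one you would reach for if you wanted sharper, distribution-dependent constants), while the paper's version is shorter and sidesteps the measurability of $\xi(X)$ entirely — a point you correctly flag and resolve via the integral remainder. One small wording slip: in your final paragraph it is the \emph{integral} $\int_\mu^x\frac{x-t}{t^2}\,dt$, not the integrand, that lies between $(x-\mu)^2/(2b^2)$ and $(x-\mu)^2/(2a^2)$; the integrand is bounded by $|x-t|/a^2$ and $|x-t|/b^2$, and integrating those bounds gives the stated sandwich. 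This does not affect the validity of the argument.
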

\begin{proof}
   As $\frac{\mrd^2}{\mrd x^2}\left(\log(x) + \beta x^2\right) = \frac{-1}{x^2} + 2\beta,$ the function $\log(x) + \beta x^2$ is concave  for $\beta=\frac{1}{2b^2}$ and convex for $\beta=\frac{1}{2a^2}$.
Hence, by Jensen's inequality,  
    \[\begin{split}\mbE[\log(X)]
    &=
    \mbE\Big[\log(X)+\frac{X^2}{2b^2}-\frac{X^2}{2b^2}\Big]\\
    &\leq \log(\mbE[X])+\frac{1}{2b^2}\mbE[X]^2-\frac{1}{2b^2}\mbE[X^2]\\
    &=\log(\mbE[X])-\frac{1}{2b^2}\var_{P_X}(X),
    \end{split}\]
 which completes the proof of the lower bound. A similar approach can be applied to derive the upper bound.
\end{proof}
\begin{lemma}\label{lem: exp bound}
    Suppose that $0<a <X<b < \infty$. Then the following inequality holds,
\begin{equation*} 
    \begin{split}
     \frac{-\var_{P_X}(X)\exp(b)}{2}\leq \exp(E[X])-E[\exp(X)]\leq \frac{-\var_{P_X}(X)\exp(a)}{2},
    \end{split}
\end{equation*}
where $\var_{P_X}(X)$ is the variance of $X$ under the distribution $P_X$.
\end{lemma}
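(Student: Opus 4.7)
The plan is to prove this by mirroring the proof strategy of Lemma~\ref{lem:jensen_cap_ln}, with the roles of logarithm and exponential interchanged. The key observation is that while the previous lemma relied on the function $\log(x) + \beta x^2$ being convex or concave depending on the choice of $\beta$, here I would construct an analogous auxiliary function tailored to $\exp$. Specifically, I would introduce
\[ g_\beta(x) := \exp(x) - \tfrac{\beta}{2} x^2, \qquad \beta > 0, \]
and compute $g_\beta''(x) = \exp(x) - \beta$. On the interval $[a,b]$, the sign of $g_\beta''$ is therefore controlled by $\beta$: choosing $\beta = \exp(a)$ makes $g_\beta''(x) \geq 0$, so $g_\beta$ is convex on $[a,b]$; while $\beta = \exp(b)$ makes $g_\beta''(x) \leq 0$, so $g_\beta$ is concave on $[a,b]$.

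Next, I would apply Jensen's inequality to $g_\beta$ under each choice of $\beta$, converting the convexity/concavity statement into a bound on $\mbE[\exp(X)] - \exp(\mbE[X])$. For the convex case with $\beta = \exp(a)$, Jensen yields
\[ \mbE[\exp(X)] - \tfrac{\exp(a)}{2}\, \mbE[X^2] \;\geq\; \exp(\mbE[X]) - \tfrac{\exp(a)}{2}\, (\mbE[X])^2, \]
which rearranges to $\mbE[\exp(X)] - \exp(\mbE[X]) \geq \tfrac{\exp(a)}{2}\var_{P_X}(X)$, i.e.\ $\exp(\mbE[X]) - \mbE[\exp(X)] \leq -\tfrac{\exp(a)}{2}\var_{P_X}(X)$, which is the claimed upper bound. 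The concave case with $\beta = \exp(b)$ reverses the Jensen inequality and, by the same rearrangement, produces the claimed lower bound $\exp(\mbE[X]) - \mbE[\exp(X)] \geq -\tfrac{\exp(b)}{2}\var_{P_X}(X)$.

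There is no substantive obstacle here: the only non-routine step is identifying the right auxiliary function, and once $g_\beta$ is in hand, the argument is a one-line application of Jensen's inequality in each case, exactly paralleling the proof of Lemma~\ref{lem:jensen_cap_ln}. The hypothesis $0 < a < X < b < \infty$ is used only to ensure $g_\beta''$ has a definite sign on the support of $X$, so the bounds respect the sharp constants $\exp(a)$ and $\exp(b)$.
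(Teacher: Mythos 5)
Your proof is correct: the auxiliary function $\exp(x)-\tfrac{\beta}{2}x^2$ with $\beta=\exp(a)$ or $\beta=\exp(b)$ has second derivative $\exp(x)-\beta$ of definite sign on $[a,b]$, and Jensen's inequality then rearranges to exactly the two claimed bounds. The paper states Lemma~\ref{lem: exp bound} without proof, but your argument is precisely the exponential analogue of the paper's proof of Lemma~\ref{lem:jensen_cap_ln}, which is clearly the intended route.
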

In the next results, $P_S$ is the distribution of $S$.

\begin{lemma}[Hoeffding Inequality, \citealp{boucheron2013concentration}]\label{lem:hoeffding}
    Suppose that $S=\{Z_i\}_{i=1}^n$ are bounded independent random variables such that $a\leq Z_i\leq b, i=1, \ldots, n$. Then the following inequality holds with probability at least $(1-\delta)$ under 
    $P_S$,
\begin{equation}
    \Big|\mbE[Z]-\frac{1}{n}\sum_{i=1}^n Z_i\Big| \leq (b-a)\sqrt{\frac{\log(2/\delta)}{2n}}.
\end{equation}
\end{lemma}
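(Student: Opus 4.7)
The plan is to prove the two-sided deviation bound by the classical Cramér--Chernoff method combined with Hoeffding's moment generating function lemma, and then to obtain the absolute value via a union bound. Let $\bar Z_n := \frac{1}{n}\sum_{i=1}^n Z_i$ and $W_i := Z_i - \mathbb{E}[Z_i]$. The random variables $W_i$ are independent, centered, and satisfy $a - \mathbb{E}[Z_i] \leq W_i \leq b - \mathbb{E}[Z_i]$, so each $W_i$ lies in an interval of length $b-a$.

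First I would establish Hoeffding's lemma as an auxiliary step: if $W$ is a centered random variable with $W \in [\alpha,\beta]$ almost surely, then for every $s \in \mathbb{R}$,
\begin{equation*}
\mathbb{E}[\exp(sW)] \leq \exp\bigl(s^2(\beta-\alpha)^2/8\bigr).
\end{equation*}
This comes from convexity of $x \mapsto e^{sx}$, which gives the pointwise bound $e^{sx} \leq \frac{\beta-x}{\beta-\alpha}e^{s\alpha} + \frac{x-\alpha}{\beta-\alpha}e^{s\beta}$; taking expectations and using $\mathbb{E}[W]=0$ reduces the problem to showing that $\varphi(u) := -pu + \log(1-p+pe^u)$ satisfies $\varphi(u) \leq u^2/8$ for a suitable $p$, which follows by a second-order Taylor expansion of $\varphi$ (since $\varphi(0)=\varphi'(0)=0$ and $\varphi''(u) \leq 1/4$).

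Next, by the Chernoff bound, for any $s > 0$ and $t > 0$,
\begin{equation*}
\Pr\Bigl(\bar Z_n - \mathbb{E}[\bar Z_n] \geq t\Bigr) = \Pr\Bigl(\sum_{i=1}^n W_i \geq nt\Bigr) \leq e^{-snt}\prod_{i=1}^n \mathbb{E}[\exp(s W_i)] \leq \exp\bigl(-snt + ns^2(b-a)^2/8\bigr),
\end{equation*}
where independence gives the product form and Hoeffding's lemma bounds each factor. Optimizing the right-hand side over $s > 0$ by choosing $s = 4t/(b-a)^2$ yields the one-sided tail bound $\Pr(\bar Z_n - \mathbb{E}[\bar Z_n] \geq t) \leq \exp\bigl(-2nt^2/(b-a)^2\bigr)$. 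An identical argument applied to $-W_i$ gives the symmetric tail $\Pr(\bar Z_n - \mathbb{E}[\bar Z_n] \leq -t) \leq \exp\bigl(-2nt^2/(b-a)^2\bigr)$.

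Finally, by the union bound,
\begin{equation*}
\Pr\Bigl(\bigl|\bar Z_n - \mathbb{E}[\bar Z_n]\bigr| \geq t\Bigr) \leq 2\exp\bigl(-2nt^2/(b-a)^2\bigr).
\end{equation*}
Setting the right-hand side equal to $\delta$ and solving for $t$ yields $t = (b-a)\sqrt{\log(2/\delta)/(2n)}$, which is the claimed bound. The main technical obstacle is Hoeffding's lemma itself; the rest is bookkeeping. Since the lemma is a standard textbook result, I would either cite \citet{boucheron2013concentration} directly for it or include the short convexity-plus-Taylor argument sketched above.
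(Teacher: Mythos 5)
Your proof is correct and is exactly the standard Cramér--Chernoff argument via Hoeffding's moment-generating-function lemma; the optimization at $s = 4t/(b-a)^2$, the union bound, and the inversion to $t = (b-a)\sqrt{\log(2/\delta)/(2n)}$ all check out. The paper itself gives no proof of this lemma --- it is stated as a cited textbook result from \citet{boucheron2013concentration} --- so your write-up simply supplies the standard proof that the citation points to, and there is nothing to reconcile.
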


\begin{lemma}[Bernstein's Inequality, \citealp{boucheron2013concentration}]\label{lem:bernstein}
    Suppose that $S=\{Z_i\}_{i=1}^n$ are i.i.d. random variable such that  $| Z_i-\mbE[Z]|\leq R$ almost surely for all $i$, and $\var(Z)=\sigma^2$. Then the following inequality holds with probability at least $(1-\delta)$ under 
    $P_S$,
\begin{equation}
    \Big|\mbE[Z]-\frac{1}{n}\sum_{i=1}^n Z_i\Big| \leq \sqrt{\frac{4\sigma^2\log(2/\delta)}{n}}+\frac{4R\log(2/\delta)}{3n}.
\end{equation}
\end{lemma}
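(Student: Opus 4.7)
The plan is to prove Lemma~\ref{lem:bernstein} by the classical Cram\'er--Chernoff method applied with Bernstein's moment-generating-function estimate. Let $X_i := Z_i - \mbE[Z]$, so that the $X_i$ are i.i.d., centered, satisfy $|X_i| \leq R$ almost surely, and have $\var(X_i) = \sigma^2$. Writing $\bar X_n := n^{-1}\sum_{i=1}^n X_i$, the aim is to bound $P(|\bar X_n| \geq t)$ and then invert the resulting exponential tail into a high-probability statement in $\delta$.

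The first key step is the Bernstein MGF bound: for every $\lambda \in (0, 3/R)$,
\begin{equation*}
\mbE[\exp(\lambda X_i)] \leq \exp\!\left(\frac{\lambda^2 \sigma^2 / 2}{1 - \lambda R / 3}\right).
\end{equation*}
I would derive this by expanding $e^{\lambda X_i} = 1 + \lambda X_i + \sum_{k\geq 2}(\lambda X_i)^k/k!$, using $\mbE[X_i]=0$ and the bound $|X_i|^k \leq R^{k-2} X_i^2$ to get $\mbE[X_i^k] \leq \sigma^2 R^{k-2}$, summing the resulting geometric series, and finishing with $\log(1+u)\leq u$. Combining this with independence and Markov's inequality gives the Chernoff tail
\begin{equation*}
P(\bar X_n \geq t) \leq \exp\!\left(-\lambda n t + \frac{n\lambda^2 \sigma^2/2}{1-\lambda R/3}\right),
\end{equation*}
and the choice $\lambda = t/(\sigma^2 + Rt/3)$, which lies in $(0,3/R)$, yields the classical Bernstein tail $P(\bar X_n \geq t) \leq \exp(-n t^2/(2\sigma^2 + 2Rt/3))$. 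Applying the same argument to $-X_i$ and taking a union bound produces the two-sided estimate $P(|\bar X_n| \geq t) \leq 2\exp(-n t^2/(2\sigma^2 + 2Rt/3))$.

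Finally I would invert this bound. Setting the right-hand side equal to $\delta$ and writing $L := \log(2/\delta)$, the condition on $t$ becomes the quadratic inequality $n t^2 \geq (2\sigma^2 + 2Rt/3)L$. Using the elementary bound $\sqrt{a+b}\leq \sqrt{a}+\sqrt{b}$ (or equivalently splitting into the two regimes where either the variance term $\sigma^2 L$ or the boundedness term $RLt$ dominates) one obtains
\begin{equation*}
t \leq \sqrt{\frac{4\sigma^2 L}{n}} + \frac{4R L}{3 n},
\end{equation*}
which is the stated form with constants slightly loosened from the sharper Bernstein values of $2$ to allow a clean additive decomposition. The only real technical obstacle is the MGF estimate; once that is established, the remainder is the standard Chernoff optimization and algebraic inversion, both of which are routine and follow the treatment in \citet{boucheron2013concentration}.
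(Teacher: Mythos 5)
Your proposal is correct. Note first that the paper does not prove Lemma~\ref{lem:bernstein} at all: it is listed among the technical tools and attributed directly to \citet{boucheron2013concentration}, so there is no internal argument to compare against. Your derivation is the standard Cram\'er--Chernoff route: the MGF bound $\mbE[e^{\lambda X_i}]\leq \exp\big(\tfrac{\lambda^2\sigma^2/2}{1-\lambda R/3}\big)$ (obtained from $\mbE[X_i^k]\leq \sigma^2 R^{k-2}$ together with $k!\geq 2\cdot 3^{k-2}$), the optimized choice $\lambda = t/(\sigma^2+Rt/3)$ giving the one-sided tail $\exp\big(-nt^2/(2\sigma^2+2Rt/3)\big)$, a union bound for the two-sided version, and inversion in $\delta$. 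The only step worth spelling out is the inversion: solving $nt^2\geq (2\sigma^2+\tfrac{2R t}{3})L$ with $L=\log(2/\delta)$ exactly gives $t^\ast=\tfrac{RL}{3n}+\sqrt{\tfrac{R^2L^2}{9n^2}+\tfrac{2\sigma^2L}{n}}\leq \sqrt{\tfrac{2\sigma^2L}{n}}+\tfrac{2RL}{3n}$, which is already sharper than the displayed bound with constants $4$ and $4/3$; the lemma as stated in the paper therefore follows a fortiori, exactly as you observe when you remark that the constants have been loosened. No gap.
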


\begin{lemma}\label{lem:jensen-power}
    For a positive random variable, $Z>0$ and $\epsilon\in(0,1]$, suppose $\mbE[Z^{1+\epsilon}] \leq \eta$ . Then, the following inequality  holds,
    \begin{equation*}
        \mbE[Z] \leq \eta^{1/{1+\epsilon}}.
    \end{equation*}
\end{lemma}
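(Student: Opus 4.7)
The plan is to prove this by a direct application of Jensen's inequality to the convex function $\varphi(x) = x^2$ on $[0,\infty)$. Since $Z > 0$, its expectation $\mbE[Z]$ is a nonnegative real number, so both sides of the claimed inequality are well-defined and nonnegative.

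First I would invoke Jensen's inequality: for the convex function $\varphi(x) = x^2$, we have
\begin{equation*}
    \varphi(\mbE[Z]) \leq \mbE[\varphi(Z)],
\end{equation*}
which rearranges to $(\mbE[Z])^2 \leq \mbE[Z^2]$. Combining this with the hypothesis $\mbE[Z^2] \leq \eta$ gives $(\mbE[Z])^2 \leq \eta$. Since $\mbE[Z] \geq 0$ (as $Z > 0$), taking the nonnegative square root of both sides preserves the inequality and yields $\mbE[Z] \leq \eta^{1/2}$, as required.

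An equivalent one-line route would be Cauchy--Schwarz: $\mbE[Z] = \mbE[Z \cdot 1] \leq (\mbE[Z^2])^{1/2}(\mbE[1])^{1/2} \leq \eta^{1/2}$. There is essentially no obstacle here; the only thing to check is the measurability/integrability of $Z$, which is implicit in the statement that $\mbE[Z^2]$ is finite (bounded by $\eta$), and the nonnegativity of $\mbE[Z]$, which follows from $Z > 0$. The lemma is a standard moment-monotonicity fact and the proof is a single invocation of Jensen (or Cauchy--Schwarz).
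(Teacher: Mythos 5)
Your proof is correct: Jensen's inequality (or equivalently Cauchy--Schwarz) applied to $x\mapsto x^2$ gives $(\mbE[Z])^2\leq \mbE[Z^2]\leq \eta$, and taking the nonnegative square root finishes the argument. The paper states this lemma without proof, and your argument is precisely the standard one it implicitly relies on, so there is nothing to add.
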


\begin{lemma}\label{lem:SE_vs_linear_epsilon}
    Suppose $\mbE[|X|^{1+\epsilon}] < \infty$. Then, for $X>0$ and $\gamma<0$ the following inequality holds,
\begin{equation*}
        0 \leq \mbE[X]- \frac{1}{\gamma} \log{\mbE[e^{\gamma X}]}  \leq  |\gamma|^{\epsilon}\mbE[X^{1+\epsilon}] .
    \end{equation*}
\end{lemma}
\begin{proof}
    The left  inequality follows from Jensen's inequality applied to $f(x) = \log{(x)}$. For the right inequality, from Lemma~\ref{lem:exp_bound} for $\epsilon\in(0,1]$
    \begin{equation}\label{eq:exp_bound_epsilon}
         e^{\gamma X} \leq 1 + \gamma X + |\gamma|^{1+\epsilon} X^{1+\epsilon}.
    \end{equation}
    Therefore, we have,
    \begin{equation*}
    \begin{split}
         \frac{1}{\gamma} \log{\mbE[e^{\gamma X}]} &\geq \frac{1}{\gamma} \log{\mbE\left[1 + \gamma X + |\gamma|^{1+\epsilon} |X|^{1+\epsilon}\right]}
         \\&= \frac{1}{\gamma} \log{ \left(1 + \gamma \mbE[X] + |\gamma|^{1+\epsilon} \mbE[|X|^{1+\epsilon}] \right)}
         \\& \geq \frac{1}{\gamma} \left( \gamma \mbE[X] + |\gamma|^{1+\epsilon} \mbE[|X|^{1+\epsilon}]\right)
         \\& = \mbE[X] - |\gamma|^{\epsilon}\mbE[X^{1+\epsilon}].
    \end{split}
    \end{equation*}
\end{proof}


\begin{lemma}\label{lem: mean value}
    Suppose that $0<a<x<b$ and $f(x)$ is an increasing and concave function. Then the following holds,
    \begin{equation}
        f^{\prime}(b)(b-a)\leq f(b)-f(a)\leq f^{\prime}(a)(b-a).
    \end{equation}
\end{lemma}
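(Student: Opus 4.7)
The plan is to combine the Mean Value Theorem with the standard fact that a differentiable concave function has a non-increasing derivative. The hypotheses of the lemma implicitly assume $f$ is differentiable on an interval containing $[a,b]$ (since $f'(a)$ and $f'(b)$ appear on the right-hand sides), so both tools apply directly.

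First, I would apply the Mean Value Theorem to $f$ on $[a,b]$: there exists some $c \in (a,b)$ such that
\begin{equation*}
    f(b)-f(a) = f'(c)\,(b-a).
\end{equation*}
Next, I would invoke concavity of $f$, which is equivalent (for differentiable $f$) to $f'$ being non-increasing. Since $a < c < b$, this gives $f'(b) \leq f'(c) \leq f'(a)$. Finally, because $b-a > 0$, multiplying the chain of inequalities by $(b-a)$ preserves the ordering and yields
\begin{equation*}
    f'(b)(b-a) \leq f'(c)(b-a) = f(b)-f(a) \leq f'(a)(b-a),
\end{equation*}
which is exactly the claim.

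There is essentially no serious obstacle here. The only point worth noting is that the hypothesis ``increasing'' is not actually used in the argument (it only ensures $f' \geq 0$, which is irrelevant to the inequalities); the real work is done by concavity, which yields monotonicity of $f'$. If one wanted to avoid the explicit use of MVT, the same bounds could be obtained from the supporting/subgradient inequalities for concave functions, namely $f(b) \leq f(a) + f'(a)(b-a)$ for the upper bound and $f(a) \leq f(b) + f'(b)(a-b)$ (rearranged) for the lower bound, but the MVT route is the shortest.
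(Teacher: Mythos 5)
Your proof is correct. The paper states Lemma~\ref{lem: mean value} in its Technical Tools section without providing any proof, and your argument --- the Mean Value Theorem combined with the fact that a differentiable concave function has a non-increasing derivative --- is the standard and complete way to establish it; your observation that monotonicity of $f$ is not actually needed (only concavity is) is also accurate.
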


\begin{lemma}[Uniform bound~\citep{mohri2018foundations}] \label{lem:uniform_bound}
  Let $\mathcal{F}$ be the set of functions $f:\mathcal{Z}\to[0,M]$  and 
  $\mu$ be a distribution over $\mathcal{Z}$.
  Let $S=\{z_i\}_{i=1}^n$ be a set of size $n$ i.i.d. drawn from $\mathcal{Z}$.
  Then, for any $\delta \in (0,1)$, with probability at least $1-\delta$ over the choice of $S$, we have
  \begin{equation*}
    \sup_{f \in \mathcal{F}} \left\{ \mbE_{Z\sim \mu}[f(Z)] - \frac{1}{n}\sum_{i=1}^n f(z_i) \right\} 
    \leq 2 \hat{\mathfrak{R}}_{S}(\mathcal{F}) + 3M \sqrt{\frac{1}{2n}\log\frac{2}{\delta}}.
  \end{equation*}
\end{lemma}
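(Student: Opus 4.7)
The plan is to follow the textbook symmetrization-plus-bounded-differences strategy. Introduce the quantity $\Phi(S) := \sup_{f \in \mathcal{F}} \bigl\{ \mbE_{Z \sim \mu}[f(Z)] - \tfrac{1}{n}\sum_{i=1}^n f(z_i) \bigr\}$, which is what we need to control. Since every $f \in \mathcal{F}$ takes values in $[0,M]$, replacing a single coordinate $z_i$ by $\tilde z_i$ changes the empirical average inside the supremum by at most $M/n$, so $\Phi$ satisfies the bounded-differences property with constants $c_i = M/n$ for $i = 1,\ldots,n$.

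First, I would invoke McDiarmid's inequality (Lemma~\ref{lem: macdiarmid}), applied one-sidedly with confidence $1 - \delta/2$, to deduce that with probability at least $1 - \delta/2$,
\begin{equation*}
\Phi(S) \leq \mbE_{P_S}[\Phi(S)] + M\sqrt{\frac{\log(2/\delta)}{2n}}.
\end{equation*}
Next I would bound $\mbE[\Phi(S)]$ by the classical symmetrization argument: introducing an independent ghost sample $S' = \{Z_i'\}_{i=1}^n \sim \mu^{\otimes n}$ and pulling the supremum outside using Jensen's inequality gives $\mbE[\Phi(S)] \leq \mbE\sup_{f} \tfrac{1}{n}\sum_i (f(Z_i') - f(Z_i))$. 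Because $f(Z_i') - f(Z_i)$ is symmetrically distributed, one may insert i.i.d.\ Rademacher signs $\sigma_i \in \{\pm 1\}$ without altering the distribution, and split the supremum of a difference into a sum of suprema, which yields $\mbE[\Phi(S)] \leq 2\,\mbE[\hat{\mathfrak{R}}_S(\mathcal{F})]$.

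To turn the expected Rademacher complexity into the empirical one that appears in the statement, I would apply McDiarmid a second time to $\hat{\mathfrak{R}}_S(\mathcal{F})$ itself. Since changing one sample shifts the quantity $\tfrac{1}{n}\sigma_i f(z_i)$ by at most $M/n$ uniformly in the $\sigma_i$, it again has bounded-differences constants $M/n$, so with probability at least $1 - \delta/2$,
\begin{equation*}
\mbE[\hat{\mathfrak{R}}_S(\mathcal{F})] \leq \hat{\mathfrak{R}}_S(\mathcal{F}) + M\sqrt{\frac{\log(2/\delta)}{2n}}.
\end{equation*}
Combining the two high-probability statements via a union bound over the event of probability $\geq 1 - \delta$ and collecting the three $M\sqrt{\log(2/\delta)/(2n)}$ contributions (one from the first McDiarmid, two from the $2\times$ the second) yields the claimed inequality $\Phi(S) \leq 2\hat{\mathfrak{R}}_S(\mathcal{F}) + 3M\sqrt{\log(2/\delta)/(2n)}$.

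The only conceptually delicate step is the symmetrization, because it requires the ghost-sample trick and the observation that $f(Z_i') - f(Z_i)$ is symmetric so that multiplying by $\sigma_i$ leaves the joint distribution intact; once this is in place, everything else is a direct application of McDiarmid and a union bound. Strictly speaking, this result is not new — it is the standard Rademacher uniform bound from \citet{mohri2018foundations} — so the role of the proof here is just to record the argument for completeness, and no technical novelty is required beyond re-assembling these classical pieces.
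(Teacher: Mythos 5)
Your argument is correct and is precisely the standard textbook proof of this result; the paper itself does not prove Lemma~\ref{lem:uniform_bound} but simply imports it from \citet{mohri2018foundations}, so there is nothing to diverge from. Your accounting of the constant — one factor of $M\sqrt{\log(2/\delta)/(2n)}$ from McDiarmid applied to $\Phi(S)$ with bounded-difference constants $M/n$, plus two more from applying McDiarmid to $\hat{\mathfrak{R}}_{S}(\mathcal{F})$ and doubling through the symmetrization inequality $\mbE[\Phi(S)]\leq 2\,\mbE[\hat{\mathfrak{R}}_{S}(\mathcal{F})]$, combined by a union bound at level $\delta/2$ each — is exactly right and reproduces the $3M$ in the statement.
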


We use the next two results, namely Talagrand's contraction lemma and Massart’s Lemma,   to estimate the Rademacher complexity.
\begin{lemma}[Talagrand's contraction lemma~\citep{shalev2014understanding}] \label{lem:contraction}
  Let $\phi_i : \mathbb{R} \rightarrow \mathbb{R}$ $(i\in \{1,\ldots,n\})$ be $L$-Lipschitz functions 
  and $\mathcal{F}_r$ be a set of functions from $\mathcal{Z}$ to $\mathbb{R}$. 
  Then it follows that for any $\{z_i\}_{i=1}^n \subset \mathcal{Z}$,
  \begin{equation*}
   \mbE_\sigma\left[ \sup_{f\in\mathcal{F}_r} \frac{1}{n}\sum_{i=1}^n\sigma_i \phi_i( f(z_i))\right]
    \leq L \mbE_\sigma\left[ \sup_{f \in \mathcal{F}_r} \frac{1}{n}\sum_{i=1}^n\sigma_i f(z_i)\right].
  \end{equation*}
\end{lemma}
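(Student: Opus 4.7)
The plan is to establish Lemma~\ref{lem:contraction} by an iterative peeling argument that removes one Rademacher variable at a time, reducing the general $n$-variable statement to a one-variable contraction lemma.

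First, I would isolate a single-variable ``peel-off'' lemma: for any functional $u:\mathcal{F}_r\to\mathbb{R}$, any fixed $z\in\mathcal{Z}$, and any $L$-Lipschitz $\phi:\mathbb{R}\to\mathbb{R}$,
\begin{equation*}
\mbE_\sigma\Bigl[\sup_{f\in\mathcal{F}_r}\bigl(u(f)+\sigma\phi(f(z))\bigr)\Bigr]\leq \mbE_\sigma\Bigl[\sup_{f\in\mathcal{F}_r}\bigl(u(f)+\sigma L\,f(z)\bigr)\Bigr],
\end{equation*}
where $\sigma$ is a single Rademacher variable. To prove this, I would expand the expectation via $\mbE_\sigma[g(\sigma)]=\tfrac12(g(1)+g(-1))$, giving
\begin{equation*}
\mbE_\sigma\Bigl[\sup_f(u(f)+\sigma\phi(f(z)))\Bigr]=\tfrac12\sup_{f_1,f_2}\bigl[u(f_1)+u(f_2)+\phi(f_1(z))-\phi(f_2(z))\bigr].
\end{equation*}
The key step is to bound the inner bracket using $\phi(f_1(z))-\phi(f_2(z))\le L|f_1(z)-f_2(z)|$. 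Splitting on the sign of $f_1(z)-f_2(z)$ and, in the case $f_1(z)<f_2(z)$, relabeling $(f_1,f_2)\mapsto(f_2,f_1)$ inside the double supremum (which leaves $u(f_1)+u(f_2)$ invariant but turns the absolute value into a signed difference) yields
\begin{equation*}
\tfrac12\sup_{f_1,f_2}\bigl[u(f_1)+u(f_2)+\phi(f_1(z))-\phi(f_2(z))\bigr]\le \tfrac12\sup_{f_1,f_2}\bigl[u(f_1)+u(f_2)+L(f_1(z)-f_2(z))\bigr],
\end{equation*}
which coincides with the right-hand side after reversing the expansion. This symmetry/relabeling trick is the main (and only) technical obstacle.

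With the peel-off lemma in hand, I would apply it iteratively. Starting from $\mbE_\sigma[\sup_f \frac{1}{n}\sum_{i=1}^n\sigma_i\phi_i(f(z_i))]$, condition on $\sigma_2,\dots,\sigma_n$ and treat $u(f):=\frac{1}{n}\sum_{i=2}^n\sigma_i\phi_i(f(z_i))$ as fixed; applying the peel-off lemma to the $\sigma_1$-term (with the constant $\tfrac{1}{n}$ absorbed, since scaling $\phi$ by $\tfrac1n$ gives an $L/n$-Lipschitz function and produces $\sigma_1(L/n)f(z_1)$) replaces $\phi_1$ by $L\cdot\mathrm{id}$. Taking outer expectation over $\sigma_2,\dots,\sigma_n$ preserves the inequality by Fubini/tower property. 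Repeating the argument for $i=2,\dots,n$ in turn, each time peeling off one $\phi_i$, yields
\begin{equation*}
\mbE_\sigma\Bigl[\sup_f \tfrac{1}{n}\sum_{i=1}^n\sigma_i\phi_i(f(z_i))\Bigr]\le \mbE_\sigma\Bigl[\sup_f \tfrac{1}{n}\sum_{i=1}^n\sigma_i L\,f(z_i)\Bigr]=L\,\mbE_\sigma\Bigl[\sup_f \tfrac{1}{n}\sum_{i=1}^n\sigma_i f(z_i)\Bigr],
\end{equation*}
which is the claimed bound. Throughout, I would assume the suprema are attained or, if not, handle them by a standard $\varepsilon$-optimal approximation to make the relabeling step rigorous.
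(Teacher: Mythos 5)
Your proof is correct: the single-variable peel-off combined with the sign-splitting/relabeling trick inside the double supremum, followed by conditioning on the remaining Rademacher variables and iterating coordinate by coordinate, is exactly the standard argument for the contraction lemma (it is the proof of Lemma~26.9 in the cited Shalev-Shwartz and Ben-David text). The paper itself states this lemma only as an imported technical tool with a citation and gives no proof, so your argument simply reproduces, correctly, the proof from the source the paper relies on.
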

\begin{lemma}[Massart’s lemma~\citep{massart2000some}]\label{lem: massart}
Assume that the hypothesis space $\mathcal{H}$ is finite. Let $B^2:=\max_{h\in\mathcal{H}}\Big(\sum_{i=1}^n h^2(z_i)\Big).$
 Then 
\[\hat{\mathfrak{R}}_{S}(\mathcal{H})\leq   \frac{B\sqrt{2\log(\mathrm{card}(\mathcal{H}))}}{n}.\]
    
\end{lemma}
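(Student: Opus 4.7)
The plan is to apply the classical sub-Gaussian maximal inequality to the random vector $\bigl(\sum_{i=1}^n \sigma_i h(z_i)\bigr)_{h\in\mathcal{H}}$ indexed by the finite hypothesis class, where the $\sigma_i$ are i.i.d.\ Rademacher signs used in the definition of $\hat{\mathfrak{R}}_{S}(\mathcal{H})$. The key observation is that, conditional on $S$, each coordinate $X_h:=\sum_{i=1}^n \sigma_i h(z_i)$ is a sum of independent bounded symmetric random variables and therefore sub-Gaussian with proxy variance bounded by $\sum_{i=1}^n h^2(z_i)\le B^2$; the expected maximum of finitely many sub-Gaussians is then controlled by $\sqrt{2\log\mathrm{card}(\mathcal{H})}$ times the proxy standard deviation, which is exactly the claimed bound.

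First I would introduce a free parameter $\lambda>0$ and apply Jensen's inequality to the convex function $x\mapsto e^{\lambda x}$ together with a pointwise bound replacing the supremum by a sum of nonnegative exponentials:
\begin{align*}
\exp\!\bigl(\lambda\, n\, \hat{\mathfrak{R}}_{S}(\mathcal{H})\bigr)
&=\exp\!\Bigl(\lambda\,\mathbb{E}_\sigma\bigl[\sup_{h\in\mathcal{H}} X_h\bigr]\Bigr) \\
&\le \mathbb{E}_\sigma\!\Bigl[\exp\!\bigl(\lambda \sup_{h\in\mathcal{H}} X_h\bigr)\Bigr]
\le \sum_{h\in\mathcal{H}} \mathbb{E}_\sigma\!\bigl[\exp(\lambda X_h)\bigr].
\end{align*}
Second, for each fixed $h$ I would invoke Hoeffding's lemma on each Rademacher term to get $\mathbb{E}_\sigma\!\bigl[e^{\lambda \sigma_i h(z_i)}\bigr]\le \exp(\lambda^2 h^2(z_i)/2)$, then multiply across $i=1,\dots,n$ using independence of the $\sigma_i$ to obtain $\mathbb{E}_\sigma[e^{\lambda X_h}]\le \exp\!\bigl(\lambda^2 \sum_{i=1}^n h^2(z_i)/2\bigr)\le \exp(\lambda^2 B^2/2)$ by the definition of $B$.

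Combining the two displayed inequalities, taking logarithms and dividing by $\lambda n$ produces the parameterized bound
\[
\hat{\mathfrak{R}}_{S}(\mathcal{H}) \;\le\; \frac{\log\mathrm{card}(\mathcal{H})}{\lambda n} + \frac{\lambda B^2}{2n}.
\]
The final step is to optimize in $\lambda$. The right-hand side is minimized at $\lambda^\star = \sqrt{2\log\mathrm{card}(\mathcal{H})}/B$, at which value it equals $B\sqrt{2\log\mathrm{card}(\mathcal{H})}/n$, yielding the claim. The main (essentially only) obstacle is the bookkeeping in passing from $\mathbb{E}[\sup_h e^{\lambda X_h}]$ to $\sum_h \mathbb{E}[e^{\lambda X_h}]$ in the correct order so that the sub-Gaussian proxy $B^2$ surfaces uniformly over $\mathcal{H}$; everything else is a direct application of Hoeffding's lemma and a one-variable optimization.
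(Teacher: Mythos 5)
Your proof is correct: it is the standard exponential-moment argument for Massart's finite-class lemma (Jensen, union over the finite class, Hoeffding's lemma for the Rademacher sums, then optimizing $\lambda^\star=\sqrt{2\log\mathrm{card}(\mathcal{H})}/B$), and every step, including the constant, checks out. The paper itself states this lemma as a cited technical tool from \citet{massart2000some} without reproducing a proof, and your argument is exactly the classical one underlying that reference.
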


\begin{lemma}[Lemma~1 in \citep{xu2017information}] \label{lem: ragins} For any measureable  function $f: \mathcal{Z} \rightarrow [0,M]$,
\[\Big| \mbE_{P_{X,Y}}[f(X,Y)]-\mbE_{P_X \otimes P_Y}[f(X,Y)]\Big|\leq M\sqrt{\frac{I(X;Y)}{2}}.\]
\end{lemma}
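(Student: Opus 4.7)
The plan is to prove this change-of-measure bound by combining the Donsker--Varadhan variational representation of the Kullback--Leibler divergence with the sub-Gaussian concentration that boundedness of $f$ automatically provides. Recall that the mutual information is exactly the KL divergence of the joint law from the product of marginals, so controlling the gap $\mbE_{P_{X,Y}}[f(X,Y)] - \mbE_{P_X \otimes P_Y}[f(X,Y)]$ reduces to a standard change-of-measure argument between these two distributions.

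The first step is to verify that under the product measure $P_X \otimes P_Y$, the random variable $f(X,Y)$ is $(M/2)$-sub-Gaussian. Since by assumption $f$ takes values in $[0,M]$, Hoeffding's lemma for bounded random variables gives, for every $\lambda \in \mathbb{R}$,
\[
\log \mbE_{P_X \otimes P_Y}\!\bigl[\exp\bigl(\lambda (f(X,Y) - \mbE_{P_X \otimes P_Y}[f(X,Y)])\bigr)\bigr] \;\leq\; \frac{\lambda^2 M^2}{8}.
\]
This is the only place where the boundedness hypothesis is used, and the constant $M^2/8$ is the sharp Hoeffding constant; using a looser bound would degrade the final $\sqrt{1/2}$ factor.

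Next I would apply the Donsker--Varadhan inequality: for any measurable $g$ with $\mbE_{P_{X,Y}}[|g|] < \infty$ and any $\lambda > 0$,
\[
\lambda \, \mbE_{P_{X,Y}}[g(X,Y)] \;\leq\; \KLr(P_{X,Y}\|P_X\otimes P_Y) + \log \mbE_{P_X \otimes P_Y}[\exp(\lambda g(X,Y))].
\]
Choosing $g(X,Y) = f(X,Y) - \mbE_{P_X\otimes P_Y}[f(X,Y)]$ and invoking the sub-Gaussian bound from the previous step yields
\[
\mbE_{P_{X,Y}}[f(X,Y)] - \mbE_{P_X \otimes P_Y}[f(X,Y)] \;\leq\; \frac{I(X;Y)}{\lambda} + \frac{\lambda M^2}{8}.
\]
Optimizing over $\lambda > 0$ (setting $\lambda = \sqrt{8\,I(X;Y)}/M$, assuming $I(X;Y) > 0$; the case $I(X;Y) = 0$ is immediate since then $P_{X,Y} = P_X \otimes P_Y$) gives the one-sided bound $\mbE_{P_{X,Y}}[f] - \mbE_{P_X \otimes P_Y}[f] \leq M\sqrt{I(X;Y)/2}$.

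Finally, to pass to the absolute value, I would replay the same argument with $f$ replaced by $M - f$, which still takes values in $[0,M]$ and so is also $(M/2)$-sub-Gaussian under the product measure; this yields the reverse one-sided inequality and hence the stated two-sided bound. The main (and essentially only) subtlety is aligning the Hoeffding constant with the sub-Gaussian parametrization correctly so that the multiplicative factor is exactly $M\sqrt{1/2}$ rather than something larger; beyond that, the proof is a direct application of two standard tools.
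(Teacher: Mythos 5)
Your proof is correct and is exactly the standard Donsker--Varadhan plus Hoeffding's-lemma argument used in the cited source (Xu and Raginsky, 2017); the paper itself imports this lemma as a technical tool without reproving it. The optimization $\inf_{\lambda>0}\{I(X;Y)/\lambda + \lambda M^2/8\} = M\sqrt{I(X;Y)/2}$ and the symmetrization via $M-f$ are both handled correctly.
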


\begin{lemma}[Coupling Lemma]\label{lem: coupling}
Assume the function  $f: \mathcal{Z} \rightarrow [0,M]$ and the function $g:\mathbb R^+\mapsto \mathbb R^+$ are $L$-Lipschitz. Then the following upper bound holds,
\[\Big| \mbE_{P_{X,Y}}[g\circ f(X,Y)]-\mbE_{P_X \otimes P_Y}[g\circ f(X,Y)]\Big|\leq L M\sqrt{\frac{I(X;Y)}{2}}.\]
\end{lemma}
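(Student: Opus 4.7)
The plan is to reduce the statement directly to Lemma~\ref{lem: ragins} (the Xu--Raginsky inequality) by replacing $g\circ f$ with a bounded, non-negative surrogate that has the same covariation between $P_{X,Y}$ and $P_X\otimes P_Y$. The only work is to show that the range of $g\circ f$ is compressed to $LM$ despite $g$ being defined on all of $\mathbb{R}^+$.

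First I would establish a tight range bound on $g\circ f$. Since $f$ takes values in the compact interval $[0,M]$ and $g$ is $L$-Lipschitz (hence continuous), the extrema
$$c := \inf_{t\in[0,M]} g(t), \qquad \bar c := \sup_{t\in[0,M]} g(t)$$
are attained at some $t_*, t^* \in [0,M]$, and the Lipschitz property applied to these two points yields
$$\bar c - c = |g(t^*) - g(t_*)| \leq L\,|t^* - t_*| \leq LM.$$
Thus $g\circ f$ takes values in an interval of width at most $LM$, even though $g$ itself is only controlled to lie in $\mathbb{R}^+$.

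Second, I would center the integrand by setting $\tilde h(x,y) := g(f(x,y)) - c$. Then $\tilde h : \mathcal{Z} \to [0, LM]$ is measurable (composition of a continuous map with a measurable map), and since $P_{X,Y}$ and $P_X \otimes P_Y$ share the same constant integral of $c$, subtracting $c$ leaves the difference of expectations unchanged:
$$\mathbb{E}_{P_{X,Y}}[g\circ f(X,Y)] - \mathbb{E}_{P_X\otimes P_Y}[g\circ f(X,Y)] = \mathbb{E}_{P_{X,Y}}[\tilde h(X,Y)] - \mathbb{E}_{P_X\otimes P_Y}[\tilde h(X,Y)].$$

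Third, I would apply Lemma~\ref{lem: ragins} to $\tilde h$, which maps into $[0,LM]$, giving
$$\bigl|\mathbb{E}_{P_{X,Y}}[\tilde h(X,Y)] - \mathbb{E}_{P_X\otimes P_Y}[\tilde h(X,Y)]\bigr| \leq LM\sqrt{\tfrac{I(X;Y)}{2}},$$
which combined with the previous identity yields the claim.

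The main -- and essentially only -- obstacle is recognizing that the Lipschitz property sharpens the naive range bound $|g(t)-g(0)| \leq LM$ (which would give a crude $2LM$ bound) to the correct $LM$, by comparing $g$ on the compact image of $f$ rather than to its value at $0$. Once this observation is in place, the rest is a constant-shift trick followed by one invocation of Lemma~\ref{lem: ragins}.
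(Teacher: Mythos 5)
Your proof is correct and follows exactly the route the paper intends: the paper states this lemma without proof as an immediate consequence of Lemma~\ref{lem: ragins}, relying precisely on the observation that the $L$-Lipschitz map $g$ compresses the image $f(\mathcal{Z})\subseteq[0,M]$ into an interval of width at most $LM$, after which a constant shift reduces the claim to the bounded case. Your explicit treatment of the centering constant $c$ and the width bound $\bar c - c\leq LM$ fills in the details cleanly (and correctly notes that only the boundedness of $f$, not its Lipschitz property, is actually used).
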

We recall that the notation $\mathbb{TV}$ denotes the total variation distance between probability distributions.
\begin{lemma}[Kantorovich-Rubenstein duality of total variation distance, see ~\citep{polyanskiy2022information}]\label{lem: tv} The Kantorovich-Rubenstein duality (variational representation) of the total variation distance is as follows:
\begin{equation}\label{Eq: tv rep}
    \mathbb{TV}(m_1,m_2)=\frac{1}{2L}\sup_{g \in \mathcal{G}_L}\left\{\mathbb{E}_{Z\sim m_1}[g(Z)]-\mathbb{E}_{Z\sim m_2}[g(Z)]\right\},
\end{equation}
where $\mathcal{G}_L=\{g: \mathcal{Z}\rightarrow \mathbb{R}, ||g||_\infty \leq L \}$. 
\end{lemma}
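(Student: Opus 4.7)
The plan is to prove the identity by establishing two matching inequalities via the Jordan (Hahn) decomposition of the signed measure $m_1 - m_2$. First I would observe that the right-hand side scales linearly in $L$, so it would in principle suffice to treat the case $L=1$ and rescale; however I keep $L$ explicit throughout for clarity. Let $\nu$ be any $\sigma$-finite measure dominating both $m_1$ and $m_2$ (for concreteness, $\nu := m_1 + m_2$) and write $p_i := \mrd m_i / \mrd \nu$ for the corresponding Radon-Nikodym densities. The definition of total variation used in the paper then gives $\mathbb{TV}(m_1,m_2) = \tfrac{1}{2}\int_{\mathcal{Z}} |p_1 - p_2|\,\mrd\nu$.

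For the upper bound $\sup_{g\in\mathcal{G}_L}\{\mbE_{m_1}[g]-\mbE_{m_2}[g]\}\leq 2L\,\mathbb{TV}(m_1,m_2)$, I would take an arbitrary $g\in\mathcal{G}_L$ and estimate
\[
\mbE_{m_1}[g] - \mbE_{m_2}[g] = \int_{\mathcal{Z}} g(z)\,(p_1(z)-p_2(z))\,\mrd\nu(z) \leq \|g\|_\infty \int_{\mathcal{Z}} |p_1-p_2|\,\mrd\nu \leq 2L\,\mathbb{TV}(m_1,m_2),
\]
and then take the supremum over $g\in\mathcal{G}_L$.

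For the matching lower bound, I would exhibit an explicit maximiser. Define the measurable set $A := \{z\in\mathcal{Z}: p_1(z)\geq p_2(z)\}$ and set $g^{*}(z) := L\,\mathbf{1}_A(z) - L\,\mathbf{1}_{A^c}(z)$; this satisfies $\|g^{*}\|_\infty \leq L$ and so lies in $\mathcal{G}_L$. A direct computation gives
\[
\mbE_{m_1}[g^{*}] - \mbE_{m_2}[g^{*}] = L\int_A (p_1-p_2)\,\mrd\nu - L\int_{A^c}(p_1-p_2)\,\mrd\nu = L\int_{\mathcal{Z}} |p_1-p_2|\,\mrd\nu = 2L\,\mathbb{TV}(m_1,m_2),
\]
so the supremum is attained at $g^{*}$ and equals $2L\,\mathbb{TV}(m_1,m_2)$.

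Combining the two bounds and dividing by $2L$ yields the claimed identity. I do not anticipate a serious obstacle: the only technical points are the existence of a common dominating measure (immediate for $\nu = m_1+m_2$) and the measurability of $A$ (immediate since $p_1,p_2$ are measurable representatives of the Radon-Nikodym derivatives). The entire argument is essentially the classical Hahn-Jordan duality identity $\int |p_1-p_2|\,\mrd\nu = \sup_{\|g\|_\infty\leq 1}\int g\,(p_1-p_2)\,\mrd\nu$, packaged with the factor $L$ rescaling.
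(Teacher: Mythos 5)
Your proof is correct: it is the standard Hahn--Jordan/Scheff\'e argument, with the upper bound following from $|\int g\,(p_1-p_2)\,\mrd\nu|\leq \|g\|_\infty\int|p_1-p_2|\,\mrd\nu$ and the lower bound from the explicit extremiser $g^{*}=L\,\mathbf{1}_A-L\,\mathbf{1}_{A^c}$ on the set $A=\{p_1\geq p_2\}$. There is nothing in the paper to compare it against: the lemma is stated as a known result with a citation to Polyanskiy and Wu and no proof is given, so you have supplied an argument the authors omit. One remark on normalisation: you write that ``the definition of total variation used in the paper'' gives $\mathbb{TV}(m_1,m_2)=\tfrac12\int_{\mathcal{Z}}|p_1-p_2|\,\mrd\nu$, but the Preliminaries actually define $\mathbb{TV}(P,Q):=\int_{\mathcal{X}}|P-Q|(\mrd x)$ \emph{without} the factor $\tfrac12$; under that definition the correct constant in the duality would be $\tfrac{1}{L}$ rather than $\tfrac{1}{2L}$. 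Your proof is internally consistent and proves the lemma exactly as stated (i.e., under the standard convention $\mathbb{TV}=\tfrac12\|m_1-m_2\|_1$, which is what the $\tfrac{1}{2L}$ requires), so the factor-of-two discrepancy is an inconsistency in the paper's own conventions rather than a gap in your argument; it is worth being explicit about which convention you are adopting rather than attributing it to the paper.
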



\section{Proofs and Details of Section~\ref{sec: ubounded loss gen}}\label{app: sec: ubounded loss gen}
\subsection{Proofs and Details of Uniform bounds under Unbounded Loss}\label{app: sec: ubounded loss uniform}

\begin{repproposition}{Prop:True-TT_unbounded}[\textbf{Restated}]
    Under Assumption~\ref{ass:HV_uniform}, for $\gamma<0$, 
    the difference between the population risk and the tilted population risk satisfies
     \begin{equation}
        \begin{split}
          &0\leq  \RTrue(h,\mu)- \RTET(h,\mu^{\otimes n})\leq |\gamma|^{\epsilon}\kappa_u^{1+\epsilon}.
        \end{split}
    \end{equation}
\end{repproposition}
\begin{proof}
    The proof follows from Lemma~\ref{lem:SE_vs_linear_epsilon}.
\end{proof}

\begin{tcolorbox}

    \begin{repproposition}{prop: upper uniform unbounded}[\textbf{Restated}]
         Given Assumption~\ref{ass:HV_uniform}, for any fixed $h\in\mathcal{H}$ with probability at least $(1-\delta)$, then the following upper bound holds on the tilted generalization error for $\gamma<0$ and some $\epsilon\in(0,1]$,
    \begin{equation}
    \begin{split}
      \mathrm{gen}_{\gamma}(h,S)&\leq \frac{2\exp(-\gamma \kappa_u)}{|\gamma|}\sqrt{\frac{2^{\epsilon}|\gamma|^{1+\epsilon}\kappa_u^{1+\epsilon}\log(2/\delta)}{n}}+\frac{4\exp(-\gamma \kappa_u)\log(2/\delta)}{3n|\gamma|}+|\gamma|^\epsilon\kappa_u^{1+\epsilon}.
      \end{split}
    \end{equation}         
\end{repproposition}
\end{tcolorbox}

\begin{proof}
Using Bernstein's inequality, Lemma~\ref{lem:bernstein}, for $X_i=\exp(\gamma\ell(h,Z_i))$ and considering $0<X_i<1$, we have 
\[
\frac{1}{n}\sum_{i=1}^n \exp{\gamma \ell(h,Z_i)}\leq \mbE[\exp(\gamma \ell(h,\tilde Z))]+ \sqrt{\frac{4\var(\exp(\gamma \ell(h,\tilde Z)))\log(2/\delta)}{n}}+\frac{4\log(2/\delta)}{3n}, 
\]
where we also used that 
\begin{equation} 
\log(x+y)=\log(y)+\log(1+\frac{x}{y})\leq \log(y)+\frac{x}{y} \mbox{ for } y > x > 0. \label{eq:usefulfact}
\end{equation} Thus,
\[
\begin{split}
&\log\Big(\frac{1}{n}\sum_{i=1}^n \exp{\gamma \ell(h,Z_i)}\Big)\\
\leq &  \log\Big(\mbE[\exp(\gamma \ell(h,\tilde Z))]\Big)\\&+ \frac{1}{\mbE[\exp(\gamma \ell(h,\tilde Z))]} \sqrt{\frac{4\var(\exp(\gamma \ell(h,\tilde Z)))\log(2/\delta)}{n}}+\frac{1}{\mbE[\exp(\gamma \ell(h,\tilde Z))]}\frac{4\log(2/\delta)}{3n}\\
\leq & \log\Big(\mbE[\exp(\gamma \ell(h,\tilde Z))]\Big)\\&+ \exp(-\gamma \kappa_u) \sqrt{\frac{4\var(\exp(\gamma \ell(h,\tilde Z)))\log(2/\delta)}{n}}+\exp(-\gamma \kappa_u)\frac{4\log(2/\delta)}{3n}.
\end{split}
\]
Therefore, for $\gamma<0$ we have 
\[
\begin{split}
&\frac{1}{\gamma}\log\Big(\mbE[\exp(\gamma \ell(h,\tilde Z))]\Big)-\frac{1}{\gamma}\log\Big(\frac{1}{n}\sum_{i=1}^n \exp{\gamma \ell(h,Z_i)}\Big)\\
&\leq \frac{\exp(-\gamma \kappa_u)}{|\gamma|} \sqrt{\frac{4\var(\exp(\gamma \ell(h,\tilde Z)))\log(2/\delta)}{n}}+\exp(-\gamma \kappa_u)\frac{4\log(2/\delta)}{3n|\gamma|}.
\end{split}
\]
Using Lemma~\ref{lem_var_bound_epsilon}, completes the proof.
\end{proof}
{\blue
\begin{tcolorbox}
    \begin{repproposition}{prop: lower uniform unbounded}[\textbf{Restated}]
    Given Assumption~\ref{ass:HV_uniform}, there exists a $\zeta\in(0,1)$ such that for $n\geq \frac{(4\gamma^2\kappa_u^2+8/3 \zeta )\log(2/\delta)}{\zeta^2\exp(2\gamma \kappa_u)}$, for any fixed $h\in\mathcal{H}$ with probability at least $(1-\delta)$, and $\gamma<0$, the following lower bound on the tilted generalization error holds,
    \begin{equation} 
    \begin{split}
    \mathrm{gen}_{\gamma}(h,S)&\geq -\frac{2\exp(|\gamma| \kappa_u)}{(1-\zeta)|\gamma|}\sqrt{\frac{2^{\epsilon}|\gamma|^{1+\epsilon}\kappa_u^{1+\epsilon}\log(2/\delta)}{n}}-\frac{4\exp(|\gamma| \kappa_u)(\log(2/\delta))}{3n|\gamma|(1-\zeta)}. 
        \end{split}
        \end{equation}
\end{repproposition}
\end{tcolorbox}

\begin{proof}
    Recall that $\tilde{Z} \sim \mu$ and $$\begin{aligned} \mathrm{gen}_{\gamma}(h,S) &= \mathrm{R}(h,\mu)-\frac{1}{\gamma}\log(\mathbb{E}[\exp(\gamma \ell(h,\tilde{Z}))])\\&\quad+\frac{1}{\gamma}\log(\mathbb{E} [\exp(\gamma \ell(h,\tilde{Z}))])-\frac{1}{\gamma}\log\left(\frac{1}{n}\sum_{i=1}^n \exp(\gamma \ell(h,Z_i))\right). \end{aligned}$$ First, we apply Lemma~\ref{lem:SE_vs_linear_epsilon} to yield $\mathrm{R}(h,\mu)-\frac{1}{\gamma}\log(\mathbb{E} [\exp(\gamma \ell(h,\tilde{Z}))]) \ge 0$. Next we focus on the second line of this display. Bernstein's inequality, Lemma~\ref{lem:bernstein}, for $X_i=\exp(\gamma\ell(h,Z_i))$, so that $0<X_i<1$, gives that with probability at least $(1-\delta)$,
\begin{align}\label{eq: lower 1} \frac{1}{n}\sum_{i=1}^n \exp{\gamma \ell(h,Z_i)}\geq \mathbb{E}[\exp(\gamma \ell(h,\tilde Z))]- \sqrt{\frac{4\operatorname{Var}(\exp(\gamma \ell(h,\tilde Z)))\log(2/\delta)}{n}}-\frac{4\log(2/\delta)}{3n}. \end{align}
Assume for now that there is a $\zeta\in(0,1)$ such that \begin{align}\label{eq: lower 2} \sqrt{\frac{4\operatorname{Var}(\exp(\gamma \ell(h,\tilde Z)))\log(2/\delta)}{n}}+\frac{4\log(2/\delta)}{3n}\leq \zeta \mathbb{E}[\exp(\gamma \ell(h,\tilde Z))]. \end{align} 
As $\log(y-x)=\log(y)+\log(1-\frac{x}{y})\geq \log(y)-\frac{x}{y-x}$ for $y>x>0$, then by taking $y= \mathbb{E} [\exp(\gamma \ell(h,\tilde Z))]$ and $x=\sqrt{\frac{4\operatorname{Var}(\exp(\gamma \ell(h,\tilde Z)))\log(2/\delta)}{n}}+\frac{4\log(2/\delta)}{3n}$, so that with \eqref{eq: lower 2} we have $ y - x \ge (1- \zeta) y>0,$ taking logarithms on both sides of \eqref{eq: lower 1} gives that with probability at least $(1-\delta),$ 

\begin{equation}
    \begin{split}
         &\log\Big(\frac{1}{n}\sum_{i=1}^n \exp{\gamma \ell(h,Z_i)}\Big)
         \\ & \geq\log\Big(\mathbb{E}[\exp(\gamma \ell(h,\tilde Z))]\Big)- \frac{1}{\mathbb{E}[\exp(\gamma \ell(h,\tilde Z))]} \left( \sqrt{\frac{4\operatorname{Var}(\exp(\gamma \ell(h,\tilde Z)))\log(2/\delta)}{n}}+\frac{4\log(2/\delta)}{3n} \right) 
         \\  & \geq\log\Big(\mathbb{E}[\exp(\gamma \ell(h,\tilde Z))]\Big)- \frac{1}{(1-\zeta)\mathbb{E}[\exp(\gamma \ell(h,\tilde Z))]} \sqrt{\frac{4\operatorname{Var}(\exp(\gamma \ell(h,\tilde Z)))\log(2/\delta)}{n}}\\&\qquad-\frac{1}{(1-\zeta)\mathbb{E}[\exp(\gamma \ell(h,\tilde Z))]}\frac{4\log(2/\delta)}{3n}
         \\  & \geq\log\Big(\mathbb{E}[\exp(\gamma \ell(h,\tilde Z))]\Big)- \frac{\exp(-\gamma \kappa_u)}{(1-\zeta)} \sqrt{\frac{4\operatorname{Var}(\exp(\gamma \ell(h,\tilde Z)))\log(2/\delta)}{n}}-\frac{\exp(- \gamma \kappa_u)}{(1-\zeta)}\frac{4\log(2/\delta)}{3n}
         \\  & \geq \log\Big(\mathbb{E}[\exp(\gamma \ell(h,\tilde Z))]\Big)- \frac{|\gamma|2\kappa_u\exp(-\gamma \kappa_u)}{(1-\zeta)} \sqrt{\frac{\log(2/\delta)}{n}}-\frac{\exp(-\gamma \kappa_u)}{(1-\zeta)}\frac{4\log(2/\delta)}{3n}. 
    \end{split}
\end{equation}
Here we used that by Assumption~\ref{ass:HV_uniform} and Lemma~\ref{lem:jensen-power}, $$\begin{aligned} \mathbb{E}[\exp(\gamma \ell(h,\tilde Z))] \le \exp [\mathbb{E}(\gamma \ell(h,\tilde Z))] \le \exp( \gamma \kappa_u) \end{aligned}$$ and by Assumption~\ref{ass:HV_uniform} and Lemma~\ref{lem_var_bound_epsilon}, $$ \mathrm{Var} (\exp(\gamma \ell(h,\tilde Z))) \le 2^{\epsilon}|\gamma|^{1+\epsilon}\mbE[\ell(h,\tilde{Z})^{1+\epsilon}]\le 2^{\epsilon}|\gamma|^{1+\epsilon} \kappa_u^{1+\epsilon} . $$ This gives the stated bound assuming that \eqref{eq: lower 2} holds. In order to satisfy \eqref{eq: lower 2}, viewing \eqref{eq: lower 2} as a quadratic inequality in $\sqrt{n}$ and using that $(a+b)^2 \le 2 a^2 + 2 b^2$ yields $$\begin{aligned} n\geq \frac{(4\operatorname{Var}(\exp(\gamma \ell(h,\tilde Z)))+8/3 \zeta \mathbb{E}[\exp(\gamma \ell(h,\tilde Z))])\log(2/\delta)}{\zeta^2(\mathbb{E}[\exp(\gamma \ell(h,\tilde Z))])^2}, \end{aligned}$$
Now applying $\exp(\gamma \kappa_u)\leq\mathbb{E}[\exp(\gamma \ell(h,\tilde Z))]\leq 1$ and $\operatorname{Var}(\exp(\gamma \ell(h,\tilde Z)))\leq 2^{\epsilon}|\gamma|^{1+\epsilon} \kappa_u^{1+\epsilon}$ completes the proof.
\end{proof}

\begin{tcolorbox}
\begin{reptheorem}{thm: Abs gen unbounded}[\textbf{Restated}]
   Under the same assumptions in Proposition~\ref{prop: lower uniform unbounded} and a finite hypothesis space, then for $n\geq \frac{(4|\gamma|^{1+\epsilon}\kappa_u^{1+\epsilon}+8/3 \zeta )\log(2/\delta)}{\zeta^2\exp(2\gamma \kappa_u)}$, for $\gamma<0$ and with probability at least $(1-\delta)$, the absolute value of the titled generalization error satisfies 
    \begin{align}
&\sup_{h\in\mathcal{H}}|\mathrm{gen}_{\gamma}(h,S)|\leq \frac{2\exp(|\gamma| \kappa_u)}{(1-\zeta)|\gamma|}\sqrt{\frac{2^{\epsilon}|\gamma|^{1+\epsilon}\kappa_u^{1+\epsilon}B(\delta)}{n}} +\frac{4\exp(|\gamma| \kappa_u)B(\delta)}{3n|\gamma|(1-\zeta)} +|\gamma|^{\epsilon}\kappa_u^{1+\epsilon},
             \end{align}
              where $B(\delta)= \log(\mathrm{card}(\mathcal{H}))+\log(2/\delta)$.
\end{reptheorem}
\end{tcolorbox}
\begin{proof}
    Combining the upper and lower bounds, Proposition~\ref{prop: upper uniform unbounded} and Proposition~\ref{prop: lower uniform unbounded}, we can derive the following bound for a fixed $h\in\mathcal{H}$,  
    \begin{equation} 
        |\mathrm{gen}_{\gamma}(h,S)|\leq \frac{2\exp(-\gamma \kappa_u)}{(1-\zeta)}\sqrt{\frac{2^{\epsilon}|\gamma|^{1+\epsilon}\kappa_u^{1+\epsilon}(\log(2/\delta))}{n}} 
        +\frac{4\exp(-\gamma \kappa_u)(\log(2/\delta))}{3n|\gamma|(1-\zeta)}+|\gamma|^{\epsilon}\kappa_u^{1+\epsilon}. 
     \end{equation}
     Then, using the union bound completes the proof.
\end{proof}
}

\begin{replemma}{lem:excess}
    The excess risk of the tilted empirical risk satisfies,
    \[ \mathfrak{E}_{\gamma}(\mu)\leq 2\sup_{h\in\mathcal{H}}| \mathrm{gen}_{\gamma}(h,S)|\]
\end{replemma}
\begin{proof}
        It can be proved that, 
    \[ \mathfrak{E}_{\gamma}(\mu)\leq 2\sup_{h\in\mathcal{H}}| \mathrm{gen}_{\gamma}(h,S)|\]
and
 $$\begin{aligned} \mathrm{R}(h_\gamma^* (S),\mu) &\leq \hat{\mathrm{R}}_\gamma(h_\gamma^*(S),\mu)+U\ &\leq \hat{\mathrm{R}}_\gamma(h^*(\mu),\mu)+U\ &\leq \mathrm{R}(h^*(\mu),\mu)+2U, \end{aligned}$$ where $U=\sup_{h\in\mathcal{H}}|\mathrm{R}(h,\mu)-\hat{\mathrm{R}}_{\gamma}(h,S)|=\sup_{h\in\mathcal{H}}|\operatorname{gen}_{\gamma}(h,S)|.$
\end{proof}
\begin{tcolorbox}
\begin{corollary}\label{Cor: excess unbounded}
 Under the same assumption in Proposition~\ref{prop: lower uniform unbounded} and a finite hypothesis space, then for $n\geq \frac{(4\gamma^2\kappa_u^2+8/3 \zeta )\log(2/\delta)}{\zeta^2\exp(2\gamma \kappa_u)}$ with probability at least $(1-\delta)$ and $\gamma<0$, the excess risk of tilted empirical risk satisfies, 
    \begin{align}
        \mathfrak{E}_\gamma(\mu)\leq \frac{4\exp(|\gamma| \kappa_u)}{(1-\zeta)|\gamma|}\sqrt{\frac{2^{\epsilon}|\gamma|^{1+\epsilon}\kappa_u^{1+\epsilon}B(\delta)}{n}} +2|\gamma|^{\epsilon}\kappa_u^{1+\epsilon} 
         +\frac{8\exp(|\gamma| \kappa_u)B(\delta)}{3n|\gamma|(1-\zeta)},
             \end{align}
             where $B(\delta)= \log(\mathrm{card}(\mathcal{H}))+\log(2/\delta)$ and $\mathfrak{E}_\gamma(\mu)$ is defined in \eqref{eq: excess risk}.
\end{corollary}
\end{tcolorbox}
\begin{proof}
Combining Theorem~\ref{thm: Abs gen unbounded} with Lemma~\ref{lem:excess} completes the proof.
 \end{proof}

 \subsection{Proofs and Details of Information-theoretic Bounds under Unbounded Loss }\label{app: IT unbounded}
 \begin{proposition}\label{Prop:True_TT_unbounded_TIT}
    Under Assumption~\ref{ass: bounded sec moment}, for $\gamma<0$, it holds for some $\epsilon\in(0,1]$ that,
   \begin{equation}
    \begin{split}
        &0\leq   \RTrue(H, P_{H}\otimes \mu)- \RTET(H,P_{H}\otimes \mu)\leq |\gamma|^{\epsilon} \kappa_t^{1+\epsilon},\\
        &-|\gamma|^{\epsilon}\kappa_t^{1+\epsilon}\leq \RTET(H,P_{H,S}) -\mbE_{P_{H,S}}[\RTERM(H,S)]\leq 0.
    \end{split}
    \end{equation}
\end{proposition}
\begin{proof}
    The proof follows from Lemma~\ref{lem:SE_vs_linear_epsilon}.
\end{proof}
\begin{tcolorbox}
    \begin{repproposition}{prop: upper via Mutual V2 unbound}
Given Assumption~\ref{ass: bounded sec moment}, the following inequality holds for $\gamma<0$,
\begin{equation}
    \begin{split}
          \RTET(H,P_{H}\otimes \mu)-\RTET(H,P_{H,S})\leq  \begin{cases}
            \frac{\exp(|\gamma| \kappa_t)}{|\gamma|} \sqrt{\frac{2|\gamma|^{1+\epsilon} \kappa_t^{1+\epsilon} I(H;S)}{n}},\qquad &\mbox{ if }\frac{I(H;S)}{n}\leq \frac{2^{\epsilon}|\gamma|^{1+\epsilon} \kappa_t^{1+\epsilon}}{2}\\
              \frac{\exp(|\gamma| \kappa_t)}{ |\gamma|}\Big( \frac{ I(H;S)}{n}+\frac{2^{\epsilon}|\gamma|^{1+\epsilon} \kappa_t^{1+\epsilon}}{2}\Big),\qquad & \mbox{ if } \frac{I(H;S)}{n}>\frac{2^{\epsilon}|\gamma|^{1+\epsilon} \kappa_t^{1+\epsilon}}{2}
          \end{cases}
    \end{split} \nonumber 
\end{equation}
\end{repproposition}
\end{tcolorbox}

\begin{proof}

    For $\gamma<0$, we use that $0\leq\exp(\gamma\ell(H,\tilde Z))\leq 1$ and $\var(\exp(\gamma\ell(H,\tilde Z)))\leq |\gamma|^{1+\epsilon}\mathbb{E}[\ell(H,\tilde Z)^{1+\epsilon}]\leq |\gamma|^{1+\epsilon} \kappa_t^{1+\epsilon}$. Note that the variable $\exp(\gamma\ell(H,\tilde Z))$ is sub-exponential with parameters $(|\gamma|^{1+\epsilon} \kappa_t^{1+\epsilon},1)$ under the distribution $P_H\otimes \mu$. Using the approach in \citep{bu2020tightening,aminian2021exact} for the sub-exponential case, we have 
      \begin{equation}\label{eq: vu}
    \begin{split}
        &\Big|\mbE_{P_{H}\otimes \mu}[ \exp(\gamma \ell(H,\tilde Z))]-\mbE_{P_{H,S}}[\frac{1}{n}\sum_{i=1}^n \exp(\gamma \ell(H,Z_i))]\Big|\\
        &\leq \begin{cases}
            \sqrt{2|\gamma|^{1+\epsilon} \kappa_t^{1+\epsilon}\frac{I(H;S)}{n}}&\quad \mbox{ if } \frac{I(H;S)}{n}\leq \frac{2^{\epsilon}|\gamma|^{1+\epsilon} \kappa_t^{1+\epsilon}}{2}\\
            \frac{I(H;S)}{n}+\frac{2^{\epsilon}|\gamma|^{1+\epsilon} \kappa_t^{1+\epsilon}}{2} & \mbox{ if } \frac{I(H;S)}{n}> \frac{2^{\epsilon}|\gamma|^{1+\epsilon} \kappa_t^{1+\epsilon}}{2}
        .\end{cases}
    \end{split}
    \end{equation}
    Therefore, we have for $\frac{I(H;S)}{n}\leq \frac{2^{\epsilon}|\gamma|^{1+\epsilon} \kappa_t^{1+\epsilon}}{2}$,
    \begin{equation}\label{eq: vu2}
    \begin{split}
        \mbE_{P_{H,S}}\Big[\frac{1}{n}\sum_{i=1}^n \exp(\gamma \ell(H,Z_i))\Big]\leq \Big(\mbE_{P_{H}\otimes \mu}\Big[ \exp(\gamma \ell(H,\tilde Z))\Big]+  \sqrt{2|\gamma|^{1+\epsilon} \kappa_t^{1+\epsilon}\frac{I(H;S)}{n}}\Big).
    \end{split}
    \end{equation}
    Using  \eqref{eq:usefulfact} 
   gives  
     \begin{equation}\label{eq: vu12}
    \begin{split}
        &\frac{1}{\gamma}\log\Big(\mbE_{P_{H,S}}[\frac{1}{n}\sum_{i=1}^n \exp(\gamma \ell(H,Z_i))]\Big)-\frac{1}{\gamma}\log(\mbE_{P_{H}\otimes \mu}[ \exp(\gamma \ell(H,\tilde Z))])\\
        &\geq  \frac{1}{\gamma \mbE_{P_{H}\otimes \mu}[ \exp(\gamma \ell(H,\tilde Z))]} \sqrt{2|\gamma|^{1+\epsilon} \kappa_t^{1+\epsilon}\frac{I(H;S)}{n}}.
    \end{split}
    \end{equation}

    For $\frac{I(H;S)}{n}> \frac{2^{\epsilon}|\gamma|^{1+\epsilon} \kappa_t^{1+\epsilon}}{2}$, we have 
    \begin{equation}\label{eq: vu122}
\mbE_{P_{H,S}}\Big[\frac{1}{n}\sum_{i=1}^n \exp(\gamma \ell(H,Z_i))\Big]
        \leq \Big(\mbE_{P_{H}\otimes \mu}\Big[ \exp(\gamma \ell(H,\tilde Z))\Big]+  \frac{ I(H;S)}{n}+\frac{2^{\epsilon}|\gamma|^{1+\epsilon} \kappa_t^{1+\epsilon}}{2}\Big).
    \end{equation}
    Using \eqref{eq:usefulfact} again,  we obtain,
    \begin{equation}\label{eq: vu1222}
    \begin{split}
        &\frac{1}{\gamma}\log\Big(\mbE_{P_{H,S}}[\frac{1}{n}\sum_{i=1}^n \exp(\gamma \ell(H,Z_i))]\Big)-\frac{1}{\gamma}\log(\mbE_{P_{H}\otimes \mu}[ \exp(\gamma \ell(H,\tilde Z))])\\
        &\geq  \frac{1}{\gamma \mbE_{P_{H}\otimes \mu}[ \exp(\gamma \ell(H,\tilde Z))]} \Big(\frac{ I(H;S)}{n}+\frac{2^{\epsilon}|\gamma|^{1+\epsilon} \kappa_t^{1+\epsilon}}{2}\Big).
    \end{split}
    \end{equation}
   As under Assumption~\ref{ass: bounded sec moment} and Lemma~\ref{lem:jensen-power}, we have $\exp(\gamma \kappa_t)\leq\mbE_{P_{H}\otimes \mu}[ \exp(\gamma \ell(H,\tilde Z))]$, the final result follows.
\end{proof}
\begin{tcolorbox}
    \begin{repproposition}{prop: lower via Mutual V2 unbound}Under Assumption~\ref{ass: bounded sec moment}, there exists $\zeta\in(0,1)$ such that one of the following cases holds,

\textbf{(a)} $\zeta \leq \frac{2^{\epsilon}|\gamma|^{1+\epsilon} \kappa_t^{1+\epsilon} \exp(|\gamma|\kappa_t)}{2}$ and  $n \geq \frac{2^{\epsilon}|\gamma|^{1+\epsilon} \kappa_t^{1+\epsilon} I(H;S)}{\zeta^2\exp(2\gamma \kappa_t)}$,

\textbf{(b)} $\zeta > \frac{2^{\epsilon}|\gamma|^{1+\epsilon} \kappa_t^{1+\epsilon} \exp(|\gamma|\kappa_t)}{2}$ and $n \geq \frac{2I(H;S)}{2^{\epsilon}|\gamma|^{1+\epsilon} \kappa_t^{1+\epsilon}}$,

\textbf{(c)} $\zeta > \frac{2^{\epsilon}|\gamma|^{1+\epsilon} \kappa_t^{1+\epsilon} \exp(|\gamma|\kappa_t)}{2}$ and $\frac{2I(H;S)}{2^{\epsilon}|\gamma|^{1+\epsilon} \kappa_t^{1+\epsilon}} > n \geq \frac{2I(H;S)}{2\zeta \exp(\gamma\kappa_t)-|\gamma|^{1+\epsilon} \kappa_t^{1+\epsilon}}$,
    
    Then, the following lower bounds on non-linear expected tilted generalization error hold,
\begin{equation*}
    \begin{split}
          \widehat{\mathrm{gen}}_{\gamma}(H,S)\geq  \begin{cases}
           \frac{-\exp(|\gamma| \kappa_t)}{|\gamma|(1-\zeta)} \sqrt{\frac{2|\gamma|^{1+\epsilon} \kappa_t^{1+\epsilon} I(H;S)}{n}}, \mbox{ \text{if (a) or (b) hold,}}\\
              \frac{-\exp(|\gamma| \kappa_t)}{(1-\zeta) |\gamma|}\Big( \frac{ I(H;S)}{n}+\frac{2^{\epsilon}|\gamma|^{1+\epsilon} \kappa_t^{1+\epsilon}}{2}\Big), \mbox{ \text{if (c) holds}}.
          \end{cases}
    \end{split}
\end{equation*}
\end{repproposition}
\end{tcolorbox}
\begin{proof}

Note that, we have,
   \begin{equation}\label{eq: vu1}
    \begin{split}
        &\Big|\mbE_{P_{H}\otimes \mu}[ \exp(\gamma \ell(H,\tilde Z))]-\mbE_{P_{H,S}}[\frac{1}{n}\sum_{i=1}^n \exp(\gamma \ell(H,Z_i))]\Big|\\
        &\leq \begin{cases}
            \sqrt{2|\gamma|^{1+\epsilon} \kappa_t^{1+\epsilon}\frac{I(H;S)}{n}}&\quad \mbox{ if } \frac{I(H;S)}{n}\leq \frac{2^{\epsilon}|\gamma|^{1+\epsilon} \kappa_t^{1+\epsilon}}{2}\\
            \frac{I(H;S)}{n}+\frac{2^{\epsilon}|\gamma|^{1+\epsilon} \kappa_t^{1+\epsilon}}{2} & \mbox{ if } \frac{I(H;S)}{n}> \frac{2^{\epsilon}|\gamma|^{1+\epsilon} \kappa_t^{1+\epsilon}}{2}
        .\end{cases}
    \end{split}
    \end{equation}

  For $\frac{I(H;S)}{n}\leq \frac{2^{\epsilon}|\gamma|^{1+\epsilon} \kappa_t^{1+\epsilon}}{2}$, we have,
    \begin{equation}\label{eq: vl1}
    \begin{split}
        \mbE_{P_{H,S}}\Big[\frac{1}{n}\sum_{i=1}^n \exp(\gamma \ell(H,Z_i))\Big]\geq \Big(\mbE_{P_{H}\otimes \mu}\Big[ \exp(\gamma \ell(H,\tilde Z))\Big]-  \sqrt{2|\gamma|^{1+\epsilon} \kappa_t^{1+\epsilon}\frac{I(H;S)}{n}}\Big).
    \end{split}
    \end{equation}
    Assume for now that there is a $\zeta\in(0,1)$ such that
    \begin{equation}
         \sqrt{2|\gamma|^{1+\epsilon} \kappa_t^{1+\epsilon}\frac{I(H;S)}{n}}\leq \zeta\mbE_{P_{H}\otimes \mu}\Big[ \exp(\gamma \ell(H,\tilde Z))\Big]
    \end{equation}
    As $\log(y-x)=\log(y)+\log(1-\frac{x}{y})\geq \log(y)-\frac{x}{y-x}$ for $y>x>0$, then by taking $y= \mbE_{P_{H}\otimes \mu}\Big[ \exp(\gamma \ell(H,\tilde Z))\Big]$ and $x= \sqrt{2|\gamma|^{1+\epsilon} \kappa_t^{1+\epsilon}\frac{I(H;S)}{n}}$, so that with \eqref{eq: lower 2} we have $ y - x \ge (1- \zeta) y>0,$ taking logarithms on both sides of \eqref{eq: vl1} gives that 
     \begin{equation}\label{eq: vl12}
    \begin{split}
        &\frac{1}{\gamma}\log(\mbE_{P_{H}\otimes \mu}[ \exp(\gamma \ell(H,\tilde Z))])-\frac{1}{\gamma}\log\Big(\mbE_{P_{H,S}}[\frac{1}{n}\sum_{i=1}^n \exp(\gamma \ell(H,Z_i))]\Big)\\
        &\geq  \frac{-1}{\gamma(1-\zeta) \mbE_{P_{H}\otimes \mu}[ \exp(\gamma \ell(H,\tilde Z))]} \sqrt{\frac{2|\gamma|^{1+\epsilon} \kappa_t^{1+\epsilon} I(H;S)}{n}},
    \end{split}
    \end{equation}
    where it holds for $n\geq \frac{2|\gamma|^{1+\epsilon} \kappa_t^{1+\epsilon} I(H;S)}{ \zeta^2\exp(2\gamma \kappa_t)}$. As we also have $\frac{I(H;S)}{n}\leq \frac{2^{\epsilon}|\gamma|^{1+\epsilon} \kappa_t^{1+\epsilon}}{2}$, we consider the condition $n\geq \max\big(\frac{2|\gamma|^{1+\epsilon} \kappa_t^{1+\epsilon} I(H;S)}{ \zeta^2\exp(2\gamma \kappa_t)},\frac{2I(H;S)}{2^{\epsilon}|\gamma|^{1+\epsilon} \kappa_t^{1+\epsilon}}\big)$.
    
    For $\frac{2I(H;S)}{2^{\epsilon}|\gamma|^{1+\epsilon} \kappa_t^{1+\epsilon}}> n$, we have 
    \begin{equation}\label{eq: vl122}
    \begin{split}
       &\mbE_{P_{H,S}}\Big[\frac{1}{n}\sum_{i=1}^n \exp(\gamma \ell(H,Z_i))\Big]\geq \Big(\mbE_{P_{H}\otimes \mu}\Big[ \exp(\gamma \ell(H,\tilde Z))\Big]-  \frac{ I(H;S)}{n}-\frac{2^{\epsilon}|\gamma|^{1+\epsilon} \kappa_t^{1+\epsilon}}{2}\Big).
    \end{split}
    \end{equation}
   As  $\log(y-x)=\log(y)+\log(1-\frac{x}{y})\geq \log(y)-\frac{x}{y-x}$, 
    we obtain,
     \begin{equation}\label{eq: vl1222}
    \begin{split}
        &\frac{1}{\gamma}\log\Big(\mbE_{P_{H,S}}[\frac{1}{n}\sum_{i=1}^n \exp(\gamma \ell(H,Z_i))]\Big)-\frac{1}{\gamma}\log(\mbE_{P_{H}\otimes \mu}[ \exp(\gamma \ell(H,\tilde Z))])\\
        &\leq  \frac{-1}{\gamma(1-\zeta') \mbE_{P_{H}\otimes \mu}[ \exp(\gamma \ell(H,\tilde Z))]} \Big(\frac{ I(H;S)}{n}+\frac{2^{\epsilon}|\gamma|^{1+\epsilon} \kappa_t^{1+\epsilon}}{2}\Big),
    \end{split}
    \end{equation}
    Assume for now that there is a $\zeta'\in(0,1)$ such that
     \begin{equation}\label{eq: vl12223}
    \begin{split}
       &\zeta'\mbE_{P_{H}\otimes \mu}\Big[ \exp(\gamma \ell(H,\tilde Z))\Big]\geq\frac{ I(H;S)}{n}+\frac{2^{\epsilon}|\gamma|^{1+\epsilon} \kappa_t^{1+\epsilon}}{2}.
    \end{split}
    \end{equation}
    where it holds for $\frac{2I(H;S)}{2\zeta' \exp(\gamma\kappa_t)-|\gamma|^{1+\epsilon} \kappa_t^{1+\epsilon}}\leq n$. Therefore, the final condition is \[\frac{2I(H;S)}{2\zeta' \exp(\gamma\kappa_t)-|\gamma|^{1+\epsilon} \kappa_t^{1+\epsilon}}\leq  n \leq\frac{2I(H;S)}{2^{\epsilon}|\gamma|^{1+\epsilon} \kappa_t^{1+\epsilon}}\]

Note that, we also have due to $2ab\leq a^2+b^2$,
\begin{equation}
    \begin{split}
 \sqrt{\frac{2|\gamma|^{1+\epsilon} \kappa_t^{1+\epsilon} I(H;S)}{n} }    \leq  \frac{ I(H;S)}{n}+\frac{2^{\epsilon}|\gamma|^{1+\epsilon} \kappa_t^{1+\epsilon}}{2}
    \end{split}
\end{equation}

Therefore, we can choose $\zeta= \zeta'$. Furthermore, we can also discuss the following cases.
\begin{itemize}
    \item  If $\zeta \exp(\gamma\kappa_t)\leq\frac{2^{\epsilon}|\gamma|^{1+\epsilon} \kappa_t^{1+\epsilon}}{2}$ holds, then for $n\geq \frac{2^{\epsilon}|\gamma|^{1+\epsilon} \kappa_t^{1+\epsilon} I(H;S)}{ \zeta^2\exp(2\gamma \kappa_t)}$ we have,
    \begin{equation*}
    \begin{split}
          \RTET(H,P_{H}\otimes \mu)-\RTET(H,P_{H,S})\geq  \frac{-\exp(|\gamma| \kappa_t)}{|\gamma|(1-\zeta)} \sqrt{\frac{2|\gamma|^{1+\epsilon} \kappa_t^{1+\epsilon} I(H;S)}{n}}.
    \end{split}
\end{equation*}
    \item If $\zeta \exp(\gamma\kappa_t)\geq\frac{2^{\epsilon}|\gamma|^{1+\epsilon} \kappa_t^{1+\epsilon}}{2}$ holds, then we have,

    \begin{equation*}
    \begin{split}
          &\RTET(H,P_{H}\otimes \mu)-\RTET(H,P_{H,S})\\&\quad\geq  \begin{cases}
            \frac{-\exp(|\gamma| \kappa_t)}{|\gamma|(1-\zeta)} \sqrt{\frac{2|\gamma|^{1+\epsilon} \kappa_t^{1+\epsilon} I(H;S)}{n}},\quad &\mbox{ if } \frac{2I(H;S)}{2^{\epsilon}|\gamma|^{1+\epsilon} \kappa_t^{1+\epsilon}}\leq n\\
              \frac{-\exp(|\gamma| \kappa_t)}{(1-\zeta) |\gamma|}\Big( \frac{ I(H;S)}{n}+\frac{2^{\epsilon}|\gamma|^{1+\epsilon} \kappa_t^{1+\epsilon}}{2}\Big),\quad & \mbox{ if }\frac{2I(H;S)}{2^{\epsilon}|\gamma|^{1+\epsilon} \kappa_t^{1+\epsilon}})>n\geq\frac{2I(H;S)}{2\zeta' \exp(\gamma\kappa_t)-|\gamma|^{1+\epsilon} \kappa_t^{1+\epsilon}}.
          \end{cases}
    \end{split}
\end{equation*}

\end{itemize}
\end{proof}

Using Proposition~\ref{prop: upper via Mutual V2 unbound}, we can obtain the following upper bound on the expected tilted generalization error.
\begin{proposition}\label{prop: expected bound V2 unbound}
 Given Assumption~\ref{ass: bounded sec moment}, the following upper bound holds on the expected tilted generalization error for $\gamma<0$,
        \begin{equation} \nonumber 
        \begin{split}
       \overline{\mathrm{gen}}_{\gamma}(H,S)&\leq \begin{cases}
              \frac{\exp(|\gamma| \kappa_t) }{|\gamma|}\sqrt{\frac{2|\gamma|^{1+\epsilon} \kappa_t^{1+\epsilon} I(H;S)}{n}}+|\gamma|^\epsilon\kappa_t^{1+\epsilon},\qquad & \mbox{ if }\frac{I(H;S)}{n}\leq \frac{2^{\epsilon}|\gamma|^{1+\epsilon} \kappa_t^{1+\epsilon}}{2}\\
              \frac{  \exp(|\gamma| \kappa_t)}{ |\gamma|} \Big(\frac{ I(H;S)}{n}+\frac{2^{\epsilon}|\gamma|^{1+\epsilon} \kappa_t^{1+\epsilon}}{2}\Big)+ |\gamma|^\epsilon\kappa_t^{1+\epsilon},\qquad & \mbox{ if } \frac{I(H;S)}{n}>\frac{2^{\epsilon}|\gamma|^{1+\epsilon} \kappa_t^{1+\epsilon}}{2}
          \end{cases}
       \end{split}
    \end{equation}
\end{proposition}
\begin{proof}
We use the following decomposition, 
     \begin{align}\label{eq: s decomp exp unbound_app}
        \mbE_{P_{H,S}}[\mathrm{gen}_{\gamma}(H,S)]
            &= \mbE_{P_{H,S}}[ \RTrue(H,\mu)]- \RTET(H,P_{H}\otimes \mu)+ \RTET(H,P_{H}\otimes \mu)-\RTET(H,P_{H,S})\\
            &\quad+ \RTET(H,P_{H,S}) -\mbE_{P_{H,S}}[\RTERM(H,S)]. \nonumber 
    \end{align}
Then, using Lemma~\ref{lem:SE_vs_linear_epsilon}, we have 
\begin{equation}\label{eq: s v1 unbound} 
     \mbE_{P_{H,S}}[ \RTrue(H,\mu)]- \RTET(H,P_{H}\otimes \mu)
     \leq |\gamma|^\epsilon\mbE_{P_H\otimes \mu}[\ell^{1+\epsilon}(H,\tilde Z)]\leq |\gamma|^\epsilon\kappa_t^{1+\epsilon} ,
  \end{equation}
and now Jensen's inequality for $\gamma<0$
 yields
\begin{equation}\label{eq: s v3 unbound}
    \RTET(H,P_{H,S}) -\mbE_{P_{H,S}}[\RTERM(H,S)]\leq 0.
\end{equation}
Applying Proposition~\ref{prop: upper via Mutual V2 unbound}, we obtain 
   \begin{equation}\label{eq: s v2 unbound}
    \begin{split}
          \RTET(H,P_{H}\otimes \mu)-\RTET(H,P_{H,S})\leq  \begin{cases}
            \frac{\exp(|\gamma| \kappa_t)}{|\gamma|} \sqrt{\frac{2|\gamma|^{1+\epsilon} \kappa_t^{1+\epsilon} I(H;S)}{n}},\qquad &\mbox{ if }\frac{I(H;S)}{n}\leq \frac{2^{\epsilon}|\gamma|^{1+\epsilon} \kappa_t^{1+\epsilon}}{2}\\
              \frac{\exp(|\gamma| \kappa_t)}{ |\gamma|}\Big( \frac{ I(H;S)}{n}+\frac{2^{\epsilon}|\gamma|^{1+\epsilon} \kappa_t^{1+\epsilon}}{2}\Big),\qquad & \mbox{ if } \frac{I(H;S)}{n}>\frac{2^{\epsilon}|\gamma|^{1+\epsilon} \kappa_t^{1+\epsilon}}{2}
          \end{cases}
    \end{split} \nonumber 
\end{equation}
  Combining \eqref{eq: s v3 unbound}, \eqref{eq: s v2 unbound} and \eqref{eq: s v1 unbound} with \eqref{eq: s decomp exp unbound_app} completes the proof.
\end{proof}

\begin{proposition}\label{prop: expected lower V2 unbound}
Under Assumption~\ref{ass: bounded sec moment}, there exists $\zeta\in(0,1)$ such that one of the following cases holds,

\textbf{(a)} $\zeta \leq \frac{2^{\epsilon}|\gamma|^{1+\epsilon} \kappa_t^{1+\epsilon} \exp(|\gamma|\kappa_t)}{2}$ and  $n \geq \frac{2^{\epsilon}|\gamma|^{1+\epsilon} \kappa_t^{1+\epsilon} I(H;S)}{\zeta^2\exp(2\gamma \kappa_t)}$,

\textbf{(b)} $\zeta > \frac{2^{\epsilon}|\gamma|^{1+\epsilon} \kappa_t^{1+\epsilon} \exp(|\gamma|\kappa_t)}{2}$ and $n \geq \frac{2I(H;S)}{2^{\epsilon}|\gamma|^{1+\epsilon} \kappa_t^{1+\epsilon}}$,

\textbf{(c)} $\zeta > \frac{2^{\epsilon}|\gamma|^{1+\epsilon} \kappa_t^{1+\epsilon} \exp(|\gamma|\kappa_t)}{2}$ and $\frac{2I(H;S)}{2^{\epsilon}|\gamma|^{1+\epsilon} \kappa_t^{1+\epsilon}} > n \geq \frac{2I(H;S)}{2\zeta \exp(\gamma\kappa_t)-|\gamma|^{1+\epsilon} \kappa_t^{1+\epsilon}}$,
    
    Then, the following lower bounds on expected tilted generalization error hold,
\begin{equation*}
    \begin{split}
          \overline{\mathrm{gen}}_{\gamma}(H,S)\geq  \begin{cases}
           \frac{-\exp(|\gamma| \kappa_t)}{|\gamma|(1-\zeta)} \sqrt{\frac{2|\gamma|^{1+\epsilon} \kappa_t^{1+\epsilon} I(H;S)}{n}}- |\gamma|^\epsilon\kappa_t^{1+\epsilon}, \mbox{ \text{if (a) or (b) hold,}}\\
              \frac{-\exp(|\gamma| \kappa_t)}{(1-\zeta) |\gamma|}\Big( \frac{ I(H;S)}{n}+\frac{2^{\epsilon}|\gamma|^{1+\epsilon} \kappa_t^{1+\epsilon}}{2}\Big)- |\gamma|^\epsilon\kappa_t^{1+\epsilon}, \mbox{ \text{if (c) holds}}.
          \end{cases}
    \end{split}
\end{equation*}
\end{proposition}
\begin{proof}
 The proof is similar to Proposition~\ref{prop: expected bound V2 unbound} and using Proposition~\ref{prop: lower via Mutual V2 unbound}.
\end{proof}
\begin{tcolorbox}
    \begin{reptheorem}{thm:|gen|_bound_unbounded_case}
  Under Assumption~\ref{ass: bounded sec moment}, there exists $\zeta\in(0,1)$ such that one of the same cases in 

\textbf{(a)} $\zeta \leq \frac{2^{\epsilon}|\gamma|^{1+\epsilon} \kappa_t^{1+\epsilon} \exp(|\gamma|\kappa_t)}{2}$ and  $n \geq \frac{2^{\epsilon}|\gamma|^{1+\epsilon} \kappa_t^{1+\epsilon} I(H;S)}{\zeta^2\exp(2\gamma \kappa_t)}$,

\textbf{(b)} $\zeta > \frac{2^{\epsilon}|\gamma|^{1+\epsilon} \kappa_t^{1+\epsilon} \exp(|\gamma|\kappa_t)}{2}$ and $n \geq \frac{2I(H;S)}{2^{\epsilon}|\gamma|^{1+\epsilon} \kappa_t^{1+\epsilon}}$,

\textbf{(c)} $\zeta > \frac{2^{\epsilon}|\gamma|^{1+\epsilon} \kappa_t^{1+\epsilon} \exp(|\gamma|\kappa_t)}{2}$ and $\frac{2I(H;S)}{2^{\epsilon}|\gamma|^{1+\epsilon} \kappa_t^{1+\epsilon}} > n \geq \frac{2I(H;S)}{2\zeta \exp(\gamma\kappa_t)-|\gamma|^{1+\epsilon} \kappa_t^{1+\epsilon}}$,
    
    Then, the following lower bounds on non-linear expected generalization error hold,
\begin{equation*}
    \begin{split}
         |\overline{\mathrm{gen}}_{\gamma}(H,S)|\leq  \begin{cases}
           &\frac{\exp(|\gamma| \kappa_t)}{|\gamma|(1-\zeta)} \sqrt{\frac{2\kappa_t^2 I(H;S)}{n}}+|\gamma|^\epsilon\kappa_t^{1+\epsilon}, \mbox{ \text{if (a) or (b) hold,}}\\
              &\frac{\exp(|\gamma| \kappa_t)}{(1-\zeta) |\gamma|}\Big( \frac{ I(H;S)}{n}+\frac{2^{\epsilon}|\gamma|^{1+\epsilon} \kappa_t^{1+\epsilon}}{2}\Big)+|\gamma|^\epsilon\kappa_t^{1+\epsilon}, \mbox{ \text{if (c) holds}}.
          \end{cases}
    \end{split}
\end{equation*}
\end{reptheorem}

\end{tcolorbox}
\begin{proof}
   This result follows by combining the upper bound from Proposition~\ref{prop: expected bound V2 unbound} with the lower bound from Proposition~\ref{prop: expected lower V2 unbound}.
\end{proof}

\begin{remark}[Individual Sample Bound Discussion]
We can derive the results based on individual sample approach~\citep{bu2020tightening}, replacing the following inequality with \eqref{eq: vu}, which is based on individual sample, 
    \begin{equation}\label{eq:finaleq}
    \begin{split}
        &\Big|\mbE_{P_{H}\otimes \mu}[ \exp(\gamma \ell(H,\tilde Z))]-\mbE_{P_{H,S}}[\frac{1}{n}\sum_{i=1}^n \exp(\gamma \ell(H,Z_i))]\Big|\\
        &\leq \begin{cases}
            \sqrt{2|\gamma|^{1+\epsilon} \kappa_t^{1+\epsilon}\frac{1}{n}\sum_{i=1}^nI(H;Z_i)}&\quad \mbox{ if } \max_{i\in[n]} I(H;Z_i)\leq \frac{2^{\epsilon}|\gamma|^{1+\epsilon} \kappa_t^{1+\epsilon}}{2}\\
            \frac{1}{n}\sum_{i=1}^nI(H;Z_i)+\frac{2^{\epsilon}|\gamma|^{1+\epsilon} \kappa_t^{1+\epsilon}}{2} & \mbox{ if } \min_{i\in[n]}I(H;Z_i)> \frac{2^{\epsilon}|\gamma|^{1+\epsilon} \kappa_t^{1+\epsilon}}{2}
        .\end{cases}
    \end{split}
    \end{equation}
\end{remark}

\section{Proof and details of Section~\ref{sec: robust}}\label{app: sec: robust}
\begin{repproposition}{prop: noise robust}[\textbf{Restated}]
  Given Assumption~\ref{ass: bounded sec moment uniform robust} and Assumption~\ref{ass:HV_uniform}, then the difference of tilted population risk under, \eqref{eq: RTET}, between $\mu$ and $\tilde{\mu}$ is bounded as follows for all $h\in\mathcal{H}$,
   \begin{equation}
    \begin{split}
        &\Big|\frac{1}{\gamma}\log(\mbE_{\tilde Z\sim\mu}[\exp(\gamma\ell(h,\tilde Z))])-\frac{1}{\gamma}\log(\mbE_{\tilde Z\sim\tilde{\mu}}[\exp(\gamma\ell(h,\tilde Z))])\Big|
        \leq \frac{\mathbb{TV}(\mu,\tilde{\mu})}{\gamma^2 }\frac{\Big(\exp(|\gamma|\kappa_u)-\exp(|\gamma|\kappa_s)\Big)}{(\kappa_u-\kappa_s)}.
        \end{split}
    \end{equation}
\end{repproposition}
\begin{proof}
We have for a fixed $h\in\mathcal{H}$ that 
    \begin{equation}
    \begin{split}
        &\Big|\frac{1}{\gamma}\log(\mbE_{\tilde Z\sim\mu}[\exp(\gamma\ell(h,\tilde Z))])-\frac{1}{\gamma}\log(\mbE_{\tilde Z\sim\tilde{\mu}}[\exp(\gamma\ell(h,\tilde Z))])\Big|
        \\&\overset{(a)}{=} \Big|\int_0^1\int_{\mathcal{Z}}\frac{\exp(\gamma \ell(h,z))}{|\gamma| \mbE_{\tilde Z\sim\mu_\lambda}[\exp(\gamma \ell(h,\tilde Z))]}(\tilde{\mu}-\mu)(\mrd z)\mrd\lambda\Big|
        \\
        &\overset{(b)}{\leq} \frac{\mathbb{TV}(\mu,\tilde{\mu})\exp(|\gamma|\kappa_s)}{|\gamma| }\int_{0}^1\exp(|\gamma|\lambda( \kappa_u-\kappa_s))\mrd\lambda\\
        &= \frac{\mathbb{TV}(\mu,\tilde{\mu})}{|\gamma| }\frac{\exp(|\gamma|\kappa_u)-\exp(|\gamma|\kappa_s)}{|\gamma|(\kappa_u-\kappa_s)},
        \end{split}
    \end{equation}
  where (a) and (b) follow from the functional derivative with $\mu_\lambda=\tilde\mu+\lambda(\mu-\tilde\mu)$ and Lemma~\ref{lem: tv}. The same approach can be applied for the lower bound.
\end{proof}
\begin{remark}
    For positive $\gamma$, the result in Proposition~\ref{prop: noise robust} does not hold and can be unbounded.
\end{remark}
\begin{tcolorbox}
\begin{proposition}[Upper Bound]\label{prop:robust_upper}
    Given Assumptions~ \ref{ass:HV_uniform} and \ref{ass: bounded sec moment uniform robust}, for any fixed $h\in\mathcal{H}$ and with probability least $(1-\delta)$  for $\gamma<0$, then the following upper bound holds on the tilted generalization error
     \begin{equation*}
    \begin{split}
      \mathrm{gen}_{\gamma}(h,\hat{S})&\leq \frac{2\exp(|\gamma| \kappa_s)}{|\gamma|}\sqrt{\frac{2^{\epsilon}|\gamma|^{1+\epsilon}\kappa_s^{1+\epsilon}\log(2/\delta)}{n}}\\
      &\qquad+\frac{4\exp(|\gamma| \kappa_s)(\log(2/\delta))}{3n|\gamma|} +|\gamma|^{\epsilon}\kappa_u^{1+\epsilon}+\frac{\mathbb{TV}(\mu,\tilde{\mu})}{\gamma^2 }\frac{\big(\exp(|\gamma|\kappa_u)-\exp(|\gamma|\kappa_s)\big)}{(\kappa_u-\kappa_s)},
      \end{split}
    \end{equation*}
    where $\hat{S}$ is the training dataset under the distributional shift.
\end{proposition}
\end{tcolorbox}

\begin{proof}
    The proof follows directly from the following decomposition of the tilted generalization error under distribution shift,
    \[ \mathrm{gen}_{\gamma}(h,\hat{S})=\underbrace{\RTrue(h,\mu)- \RTET(h,\mu^{\otimes n})}_{I_5}+ 
    \underbrace{\RTET(h,\mu)-\RTET(h,\tilde{\mu})}_{I_6}+\underbrace{\RTET(h,\tilde{\mu})-\RTERM(h,\hat{S})}_{I_7},\]
    where $I_5$, $I_6$ and $I_7$ can be bounded using Lemma~\ref{lem:SE_vs_linear_epsilon}, Proposition~\ref{prop: noise robust} and Proposition~\ref{prop: upper uniform unbounded}, respectively.
\end{proof}

\begin{tcolorbox}
\begin{proposition}[Lower Bound]\label{prop:robust_lower}
    Given Assumptions~ \ref{ass:HV_uniform} and \ref{ass: bounded sec moment uniform robust}, for any fixed $h\in\mathcal{H}$ and with probability least $(1-\delta)$, there exists a $\zeta\in(0,1)$ such that for $n\geq \frac{(4\gamma^2\kappa_u^2+8/3 \zeta )\log(2/\delta)}{\zeta^2\exp(2\gamma \kappa_u)}$ and $\gamma<0$, such that the following upper bound holds on the tilted generalization error
     \begin{equation*}
    \begin{split}
      \mathrm{gen}_{\gamma}(h,\hat{S})\geq& -\frac{2\exp(|\gamma| \kappa_s)}{(1-\zeta)|\gamma|}\sqrt{\frac{2^{\epsilon}|\gamma|^{1+\epsilon}\kappa_s^{1+\epsilon}\log(2/\delta)}{n}} -\frac{4\exp(|\gamma| \kappa_s)(\log(2/\delta))}{3n|\gamma|(1-\zeta)}\\
      & \quad \quad  -\frac{\mathbb{TV}(\mu,\tilde{\mu})}{\gamma^2 }\frac{\big(\exp(|\gamma|\kappa_u)-\exp(|\gamma|\kappa_s)\big)}{(\kappa_u-\kappa_s)},
      \end{split}
    \end{equation*}
    where $\hat{S}$ is the training dataset under the distributional shift.
\end{proposition}
\end{tcolorbox}

\begin{proof}
    The proof follows directly from the following decomposition of the tilted generalization error under distribution shift,
    \[ \mathrm{gen}_{\gamma}(h,\hat{S})=\underbrace{\RTrue(h,\mu)- \RTET(h,\mu^{\otimes n})}_{I_5}+ 
    \underbrace{\RTET(h,\mu)-\RTET(h,\tilde{\mu})}_{I_6}+\underbrace{\RTET(h,\tilde{\mu})-\RTERM(h,\hat{S})}_{I_7},\]
    where $I_5$, $I_6$ and $I_7$ can be bounded using Lemma~\ref{lem:SE_vs_linear_epsilon}, Proposition~\ref{prop: noise robust} and Proposition~\ref{prop: lower uniform unbounded}, respectively.
\end{proof}

\begin{tcolorbox}
  \begin{reptheorem}{thm:robust_uniform}[\textbf{Restated}]
      Under the same assumptions in Theorem~\ref{thm: Abs gen unbounded}, then for $n\geq \frac{(4\gamma^2\kappa_u^2+8/3 \zeta )\log(2/\delta)}{\zeta^2\exp(2\gamma \kappa_u)}$ and $\gamma<0$, and with probability at least $(1-\delta)$, the absolute value of the tilted generalization error under distributional shift satisfies
  \begin{align}
            &\sup_{h\in\mathcal{H}}|\mathrm{gen}_{\gamma}(h,\hat{S})|\\\nn&\leq \frac{2\exp(|\gamma| \kappa_u)}{(1-\zeta)|\gamma|}\sqrt{\frac{2^{\epsilon}|\gamma|^{1+\epsilon}\kappa_u^{1+\epsilon}B(\delta)}{n}} +\frac{4\exp(|\gamma| \kappa_u)B(\delta)}{3n|\gamma|(1-\zeta)}\nn\\& \nn +|\gamma|^{\epsilon}\kappa_u^{1+\epsilon}+\frac{\mathbb{TV}(\mu,\tilde{\mu})}{\gamma^2 }\frac{\big(\exp(|\gamma|\kappa_u)-\exp(|\gamma|\kappa_s)\big)}{(\kappa_u-\kappa_s)},
        \end{align}
  where  $B(\delta)= \log(\mathrm{card}(\mathcal{H}))+\log(2/\delta)$.
  \end{reptheorem}  
\end{tcolorbox}
\begin{proof}
    The result follows from combining the results of Proposition~\ref{prop:robust_upper}, Proposition~\ref{prop:robust_lower}, and applying the union bound.
\end{proof}
    \begin{reptheorem}{thm:|gen|_bound_unbounded_case_robust}
Under the same assumptions as in Theorem~\ref{thm:|gen|_bound_unbounded_case} and Assumption~\ref{ass: bounded sec moment_robust_inf},
    the following upper bounds on absolute expected tilted generalization error hold under distributional shift, if (a) or (b) hold,
    \begin{equation*}
    \begin{split}
        &|\overline{\mathrm{gen}}_{\gamma}(H,\hat{S})|\leq \frac{\exp(|\gamma| \kappa_{st})}{|\gamma|(1-\zeta)} G\big(I(H;\hat{S})\big)+|\gamma|^\epsilon(\kappa_t^{1+\epsilon}+\kappa_{st}^{1+\epsilon})\\&+\frac{\mathbb{TV}(\mu,\tilde{\mu})}{\gamma^2 }\frac{\Big(\exp(|\gamma|\kappa_t)-\exp(|\gamma|\kappa_{st})\Big)}{(\kappa_t-\kappa_{st})},
    \end{split}
    \end{equation*}
    where if (a) and (b) hold we have $G(I(H;\hat S))=\sqrt{\frac{2\kappa_t^{\epsilon+1} |\gamma|^{1+\epsilon} I(H;\hat S)}{n}}$ and if (c) holds, we have, $G(I(H;\hat S))=\frac{ I(H;\hat S)}{n}+\frac{2^{\epsilon}|\gamma|^{1+\epsilon} \kappa_t^{1+\epsilon}}{2}$.
\end{reptheorem}
\begin{proof}
    The proof follows from Theorem~\ref{thm:|gen|_bound_unbounded_case} and Assumption~\ref{ass: bounded sec moment_robust_inf}. Note that, we have,
     \begin{equation}\label{eq:r22}
    \begin{split}
        &0\leq   \RTrue(H, P_{H}\otimes \mu)- \RTET(H,P_{H}\otimes \mu)\leq |\gamma|^{\epsilon} \kappa_t^{1+\epsilon} ,\\
        &-|\gamma|^{\epsilon}\kappa_{st}^{1+\epsilon}\leq \RTET(H,P_{H,\hat S}) -\mbE_{P_{H,\hat S}}[\RTERM(H,\hat S)]\leq 0.
    \end{split}
    \end{equation}
We also have in a similar approach to Proposition~\ref{prop: noise robust},
 \begin{equation}\label{eq:r12}
    \begin{split}
        &\Big|\frac{1}{\gamma}\log(\mbE_{P_H\otimes \mu}[\exp(\gamma\ell(H,\tilde Z))])-\frac{1}{\gamma}\log(\mbE_{P_H\otimes \tilde{\mu}}[\exp(\gamma\ell(H,\tilde Z))])\Big|
        \\&\quad\leq \frac{\mathbb{TV}(P_H\otimes\mu,P_H\otimes \tilde{\mu})}{\gamma^2 }\frac{\Big(\exp(|\gamma|\kappa_t)-\exp(|\gamma|\kappa_{st})\Big)}{(\kappa_t-\kappa_{st})}.
        \end{split}
    \end{equation}
\end{proof}
Note that $\mathbb{TV}(P_H\otimes\mu,P_H\otimes\tilde{\mu})=\mathbb{TV}(\mu,\tilde{\mu})$. Combining Theorem~\ref{thm:|gen|_bound_unbounded_case} with \eqref{eq:r12} and \eqref{eq:r22} completes the proof.

\subsection{Convergence rate under distribution shift}
In this section, we study the convergence rate under distribution shift. Suppose that we have some outlier samples in training dataset with distribution $\nu$. Then, we can assume that $\tilde \mu= (1-\tau) \mu + \tau \nu $ for some $\tau\in[0,1]$. We analysis the following term in Theorem~\ref{thm:robust_uniform},
\begin{equation}
    \frac{\mathbb{TV}(\mu,\tilde{\mu})}{\gamma^2 }\frac{\big(\exp(|\gamma|\kappa_u)-\exp(|\gamma|\kappa_s)\big)}{(\kappa_u-\kappa_s)}.
\end{equation}
Using Taylor's expansion, we can show that $\frac{\big(\exp(|\gamma|\kappa_u)-\exp(|\gamma|\kappa_s)\big)}{(\kappa_u-\kappa_s)}=O(\gamma)$. Furthermore, we have,
\begin{equation}
\begin{split}
        \mathbb{TV}(\mu,\tilde{\mu})&=\int_{\mathcal{Z}}\big|\mu-\tilde {\mu}\big|\mrd z\\
        &= \tau \int_{\mathcal{Z}} |\mu-\nu|\mrd z\\
        &\leq \tau \mathbb{TV}(\mu,\nu)\\
        &\leq 2\tau.
\end{split}
\end{equation}
Therefore, we have,
\begin{equation}
    \frac{\mathbb{TV}(\mu,\tilde{\mu})}{\gamma^2 }\frac{\big(\exp(|\gamma|\kappa_u)-\exp(|\gamma|\kappa_s)\big)}{(\kappa_u-\kappa_s)}=O\Big(\frac{\tau}{\gamma}\Big).
\end{equation}
In general, choosing $\gamma=O(n^{-1/(1+\epsilon)})$ for $n\geq \frac{(4\kappa_u^{1+\epsilon}+8/3 \zeta )\log(2/\delta)}{\zeta^2\exp(-2\kappa_u)}$, we have the overall convergence rate on absolute titled generalization error, $\max\Big( O\Big(\tau n^{1/(1+\epsilon)}\Big), O\big( n^{-\epsilon/(1+\epsilon)}\big)\Big)$.  Note that choosing $\tau=O(1/n)$, we can have the convergence rate of $O\big(n^{-\epsilon/(1+\epsilon)}\big)$. For example, we have one outlier sample, we can choose $\tau=\frac{1}{n+1}$.
\section{Proof and details of Section~\ref{sec: reg problem unbounded}}\label{app:sec: reg problem unbounded}

\begin{repproposition}{prop: tilted gibbs_unbounded}
    The solution to the expected TERM regularized via KL divergence, \eqref{eq: regular problem}, is the tilted Gibbs Posterior (a.k.a. Gibbs Algorithm),
    \begin{equation}
        P_{H|S}^{\gamma}:=\frac{\pi_H}{F_{\alpha}(S)} \Big(\frac{1}{n}\sum_{i=1}^n\exp(\gamma \ell(H,Z_i))\Big)^{-\alpha/\gamma},
    \end{equation}
    where $F_{\alpha}(S)$ is a normalization factor.
\end{repproposition}
\begin{proof}
    From \citep{zhang2006information}, we know that,
    \begin{equation}\label{eq: min gibbs idea}
    P_{X}^{\star}=\min_{P_X}\mbE_{P_X}[f(x)]+\frac{1}{\alpha}\KLr(P_X\|Q_X),
    \end{equation}
    where $P_X^{\star}=\frac{Q_X\exp(-\alpha f(X))}{\mbE_{Q_X}[\exp(-\alpha f(X))]}.$
Using \eqref{eq: min gibbs idea}, it can be shown that the tilted Gibbs posterior is the solution to \eqref{eq: algorithm gibbs tilte app}.

\end{proof}
 
\begin{tcolorbox}
      \begin{repproposition}{prop: Gibbs tilted gen}[\textbf{Restated}]
        The difference between the expected TER under the joint and product of marginal distributions of $H$ and $S$ can be characterized as,
        \begin{equation}
    \begin{split}
         &\RTT(H,P_{H}\otimes \mu)-\RTT(H,P_{H,S})=\frac{ I_{\mathrm{SKL}}(H;S)}{\alpha}.
    \end{split}
\end{equation}
    \end{repproposition}
    \end{tcolorbox}

\begin{proof}
    As in \citet{aminian2015capacity}, the symmetrized KL information between two random variables $(S,H)$  can be written as 
    \begin{equation}\label{eq: rep sym KL}I_{\mathrm{SKL}}(H;S)=\mbE_{P_{H}\otimes \mu^{\otimes n}}[\log(P_{H|S})]-\mbE_{P_{H,S}}[\log(P_{H|S})].\end{equation}
    The results follows by substituting the tilted Gibbs posterior in \eqref{eq: rep sym KL}.
\end{proof}

\begin{tcolorbox}
    \begin{reptheorem}{thm: tilted gibbs gen_unbounded}[\textbf{Restated}]
        Under the same Assumptions in Theorem~\ref{thm:|gen|_bound_unbounded_case} for some $\epsilon\in(0,1]$, the expected tilted generalization error of the tilted Gibbs posterior satisfies
        \begin{equation}
        \begin{split}
           & 0\leq \overline{\mathrm{gen}}_{\gamma}(H,S)\leq \frac{2\alpha\exp(2|\gamma| \kappa_t)\kappa_t^{1+\epsilon}}{(1-\zeta)^2 n|\gamma|^{1-\epsilon}} +\frac{\exp(|\gamma|\kappa_t)\kappa_t^{1+\epsilon}|\gamma|^{1/2+\epsilon}}{(1-\zeta)|\gamma|}\sqrt{\frac{\alpha}{n}}+2|\gamma|^{\epsilon}\kappa_t^{1+\epsilon}.
                    \end{split}
        \end{equation}
    \end{reptheorem}
    \end{tcolorbox}

    \begin{proof}
We expand 
     \begin{equation}\label{eq: decomp exp_u}
           \begin{split}
       \overline{\mathrm{gen}}_{\gamma}(H,S)&= \RTrue(H, P_{H}\otimes \mu) -\RTT(H,P_{H}\otimes \mu^{\otimes n})
        \\&\quad+ \RTT(H,P_{H}\otimes \mu^{\otimes n})-\RTET(H,P_{H}\otimes \mu)
        \\&\quad+\RTET(H,P_{H}\otimes \mu)-\RTET(H,P_{H,S})
        \\&\quad+\RTET(H,P_{H,S})-\RTT(H,P_{H,S}).
        \end{split}
    \end{equation}
    From Proposition~\ref{prop: Gibbs tilted gen}, we have,
    \begin{equation}\label{eq:skl1}
        \RTT(H,P_{H}\otimes \mu)-\RTT(H,P_{H,S})=\frac{ I_{\mathrm{SKL}}(H;S)}{\alpha}.
    \end{equation}
    We also have,
    \begin{equation}\label{eq:I3}
    \begin{split}
    0&\leq\RTrue(H, P_{H}\otimes \mu) -\RTT(H,P_{H}\otimes \mu^{\otimes n})\leq \frac{|\gamma|^{\epsilon}\kappa_u^{(1+\epsilon)}}{2},\\
          0&\leq \RTT(H,P_{H}\otimes \mu^{\otimes n})-\RTET(H,P_{H}\otimes \mu)\leq |\gamma|^{\epsilon}\kappa_u^{(1+\epsilon)},\\
       -|\gamma|^{\epsilon}\kappa_u^{(1+\epsilon)}&\leq\RTET(H,P_{H,S})-\RTT(H,P_{H,S})\leq 0.
    \end{split}
    \end{equation}
    From Theorem~\ref{thm:|gen|_bound_unbounded_case}, there exist $\zeta\in(0,1)$ such the following upper bound on absolute value of non-linear expected generalization error holds provided that $1<\frac{2^{\epsilon}|\gamma|^{1+\epsilon} \kappa_t^{1+\epsilon} \exp(|\gamma|\kappa_t)}{2}$ and $n\geq \frac{2^{\epsilon}|\gamma|^{1+\epsilon} \kappa_t^{1+\epsilon} I(H;S)}{ \zeta^2\exp(2\gamma \kappa_t)}$,
    \begin{equation}\label{eq:I1}
        \begin{split}
         \big|\RTET(H,P_{H}\otimes \mu)-\RTET(H,P_{H,S})\big|\leq       \frac{\exp(|\gamma| \kappa_t)}{(1-\zeta)|\gamma|} \sqrt{\frac{2\kappa_t^{1+\epsilon}|\gamma|^{1+\epsilon} I(H;S)}{n}}+|\gamma|^{\epsilon}\kappa_t^{1+\epsilon}.
        \end{split}
    \end{equation}
    Using the fact that $I(H;S)\leq I_{\mathrm{SKL}}(H;S)$, we have the following inequality by considering \eqref{eq:skl1} and \eqref{eq:I1}.
    \begin{equation}
        \frac{I(H;S)}{\alpha}\leq  \frac{\exp(|\gamma| \kappa_t)}{(1-\zeta)|\gamma|} \sqrt{\frac{2\kappa_t^{1+\epsilon}|\gamma|^{1+\epsilon} I(H;S)}{n}}+|\gamma|^{\epsilon}\kappa_t^{1+\epsilon},
    \end{equation}
    where results in $\sqrt{I(H;S)}\leq \frac{A+\sqrt{A^2+4B}}{2}\leq A+\sqrt{B}$, where $A=\frac{\alpha\exp(|\gamma| \kappa_t)}{(1-\zeta)|\gamma|} \sqrt{\frac{2\kappa_t^{1+\epsilon}|\gamma|^{1+\epsilon} }{n}}$ and $B=\alpha|\gamma|^{\epsilon}\kappa_t^{1+\epsilon}$. We have,
    \[\sqrt{I(H;S)}\leq \frac{\alpha\exp(|\gamma| \kappa_t)}{(1-\zeta)|\gamma|} \sqrt{\frac{2\kappa_t^{1+\epsilon}|\gamma|^{1+\epsilon} }{n}}+\sqrt{\frac{\alpha|\gamma|^{\epsilon}\kappa_t^{1+\epsilon}}{\epsilon}}, \]
   where $C$ is constant independent from $n$. Therefore, 
    \begin{equation}\label{eq:I2}
        \begin{split}
         \big|\RTET(H,P_{H}\otimes \mu)-\RTET(H,P_{H,S})\big|\leq       \frac{2\alpha\exp(2|\gamma| \kappa_t)\kappa_t^{1+\epsilon}}{(1-\zeta)^2 n|\gamma|^{1-\epsilon}} +\frac{\exp(|\gamma|\kappa_t)\kappa_t^{1+\epsilon}|\gamma|^{1/2+\epsilon}}{(1-\zeta)|\gamma|}\sqrt{\frac{2\alpha}{n}}+2|\gamma|^{\epsilon}\kappa_t^{1+\epsilon}.
        \end{split}
    \end{equation}
    Combining \eqref{eq:I2} with \eqref{eq:I3} completes the proof.
    \end{proof}

\section{Experiment Details}\label{app:experiemnt_details}
\textbf{Data-Driven choice of Tilt:} A similar discussion in Section~\ref{sec: robust} applies to the upper bound on the expected tilted generalization error under distribution shift in Theorem~\ref{thm:|gen|_bound_unbounded_case_robust}. However, in the large $n$ regime, the results remain the same.

\textbf{Logistic Regression:} For logistic regression,  we consider $500$ samples from 2d Gaussian distributions, $\mathcal{N}(3,1)\times\mathcal{N}(1,1)$ and $\mathcal{N}(-10,1)\times\mathcal{N}(-5,1)$. For logistic regression, we consider $0.01$ as learning rate and $10000$ iterations. Our loss function is $\ell(h,z)=\log(1+\exp(-yh^Tx))$ for $h\in\mathcal{H}\subset \mathbb{R}^2$.
\begin{itemize}
    \item \textbf{Gaussian Outlier:} We  add outlier samples $\rho\times1000$ from Gaussian distribution $\mathcal{N}(-40,5)\times \mathcal{N}(-40,5)$.
    \item \textbf{Pareto Outlier:} Note that for $Z \sim \mathrm{Pareto}(1, \alpha)$ as a heavy-tailed distribution, we have $f_Z(z)=\frac{\alpha}{z^{\alpha + 1}}$. We consider $\alpha=1.5$ to have unbounded variance (heavy-tailed distribution). We  add outlier samples $\rho\times1000$ from Pareto distribution to Gaussian true sample dataset.
\end{itemize}

\textbf{$\gamma_{\text{data}}$ computation:} suppose that we have $m$ samples as outlier, $\{\tilde{z}_j\}_{j=1}^m$. We model the distribution shift as follows,
\begin{equation}
    \tilde{\mu}=\frac{n}{n+m} \mu + \frac{m}{n+m}\Big( \sum_{j=1}^m \frac{P(Z=\tilde{z}_j)}{\sum_{k=1}^m P(Z=\tilde{z}_k)}\mathcal{N}(\tilde{z}_j,0.01) \Big),
\end{equation}
 where $P(Z=\tilde{z}_j)$ is the probability of $j$-th outlier data sampled from Gaussian or Pareto distribution. Then, we have,
 \begin{equation}
 \begin{split}
          \mathbb{TV}(\mu,\tilde{\mu})&=\int_{\mathcal{Z}}|\mu-\tilde{\mu}|\mrd z\\
          &\leq \frac{m}{n+m}\sum_{j=1}^m \frac{P(Z=\tilde{z}_j)}{\sum_{k=1}^m P(Z=\tilde{z}_k)}  \mathbb{TV}\big(\mu,\mathcal{N}(\tilde{z}_j,0.01)\big)\\
          &\leq \frac{2m}{m+n},
 \end{split}
 \end{equation}
 where $\mu$ is Gaussian distribution as data generating distribution.
We calculate the empirical values of $\kappa_u$ using the true dataset and $\kappa_s$ using the training dataset containing outliers. Then, we consider the Hypothesis space as the set of parameters where the $(1+\epsilon)$-th moment of loss is bounded by empirical $\kappa_s^{1+\epsilon}$. For Gaussian, we consider $\epsilon=1$ and for Pareto distribution with $\alpha=1.5$ we consider $\epsilon=0.5$. The Logistic regression scenario under Gaussian and Pareto outliers with 10 and 5 number of samples, respectively, are shown in Fig.~\ref{fig:logistic}.

\begin{figure}[ht]
    \centering
    \begin{subfigure}[b]{0.4\linewidth}
        \centering
        \includegraphics[width=\linewidth]{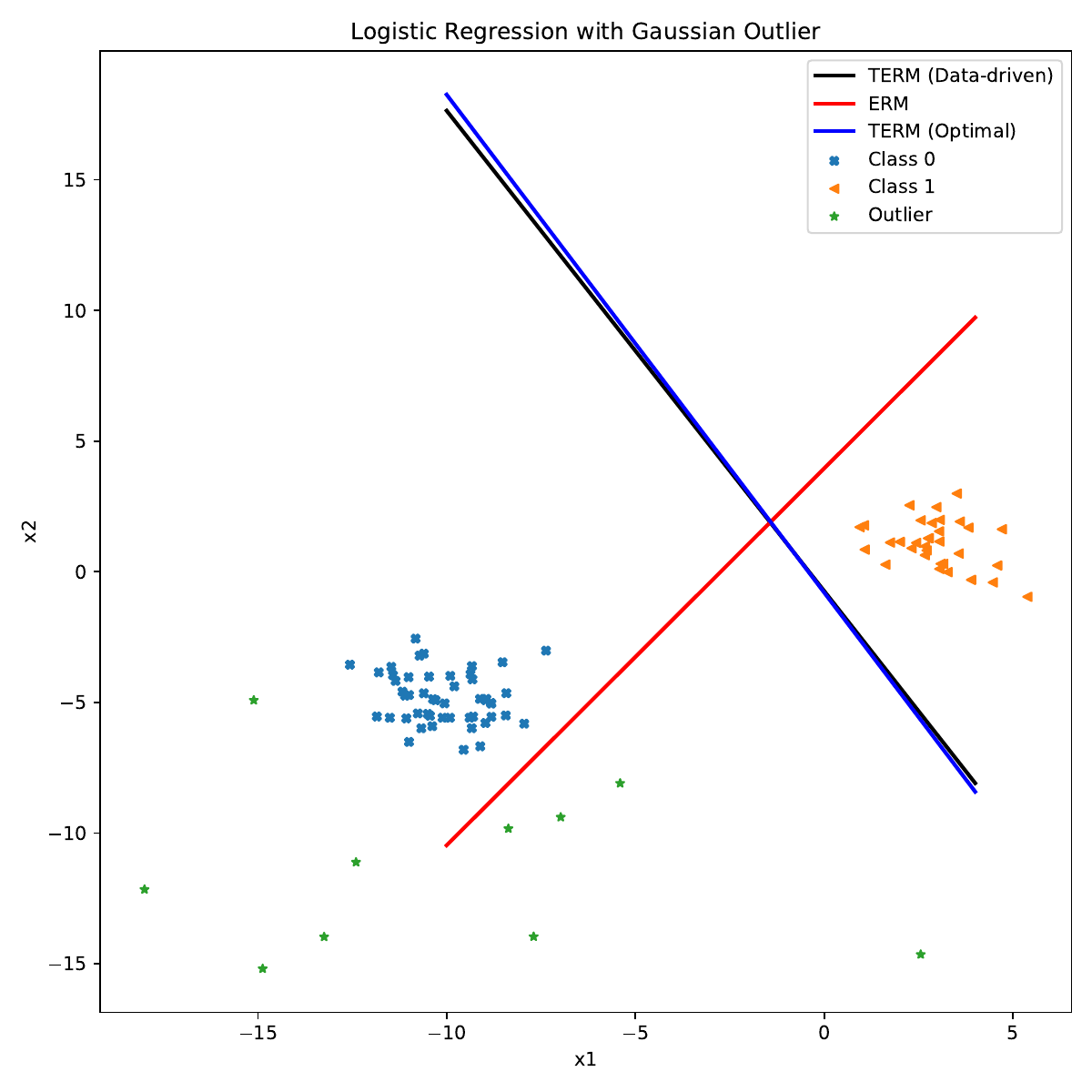}
        \caption{Logistic Regression with Gaussian Outlier}
        \label{fig:Gaussian_logistic}
    \end{subfigure}
    \begin{subfigure}[b]{0.4\linewidth}
        \centering
        \includegraphics[width=\linewidth]{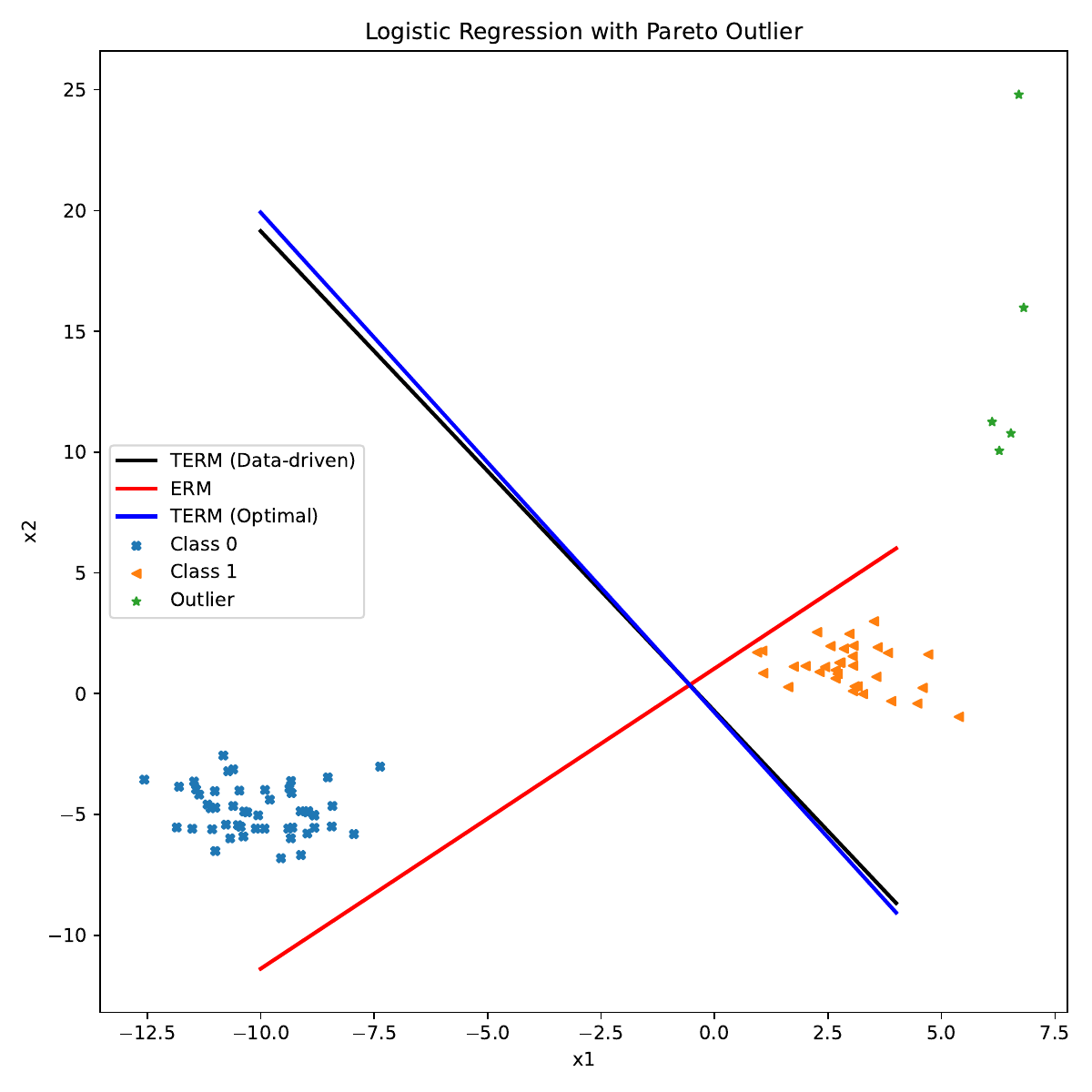} 
        \caption{Logistic Regression with Pareto Outlier}
        \label{fig:Pareto_logistic}
    \end{subfigure}
    \caption{Logistic Regression Experiments with Outlier}
    \label{fig:logistic}
\end{figure}

\textbf{Logistic Regression without outlier:} Similar to the distribution shift scenario, we can propose a data-driven $\gamma$ inspired by Corollary~\ref{Cor: excess unbounded}. For this purpose, we assume that $n$ is large enough and $\gamma$ is a small negative value close to zero. Additionally, we assume that the first term in the upper bound of Corollary~\ref{Cor: excess unbounded} is zero due to large $n$. For simplicity in computation, we consider $\exp(-\gamma \kappa_u) \approx 1$ for small $\gamma$. Under these assumptions, we define:

\begin{align} \gamma_{\text{data}}:=\mathop{\mathrm{arg} \min}_{\gamma\in(-\infty,0)}\left[\frac{|\gamma|}{2}\kappa_u^2+\frac{4B(\delta)}{3n |\gamma| (1-\zeta)}\right],\end{align}

where $B(\delta)$ is defined in Corollary~\ref{Cor: excess unbounded}. Note that the term $B(\delta)$ can be large due to the cardinality of the hypothesis space, resulting in a very small $\gamma$. In this scenario, we expect that a $\gamma$ near zero would have a better performance.

Now, we conduct an experiment for a logistic regression problem by sampling from a Pareto distribution as a heavy-tailed distribution without outlier. We observed that TERM achieves better population risk with an optimal negative tilt near zero. In this scenario, the data-driven $\gamma$ performed similarly to the ERM solution. However, the optimal solution based on grid search outperformed the ERM solution. The experiment was conducted with 1,000 training samples and we have $\gamma^{\star}= -0.5263\pm 0.0001$, $\mathrm{R}(h_{\gamma^{\star}}(\hat{S}),\mu)$$=0.05220 \pm 0.00001$, $\mathrm{R}(h_{\text{\tiny ERM}}(\hat{S}),\mu)$ $= 0.05249 \pm 0.00002$, $\gamma_{\text{data}}=-0.0001\pm0.0000$ and $\mathrm{R}(h_{\gamma_{\text{data}}}(\hat S),\mu)$$= 0.05249 \pm 0.00001$.

\textbf{Linear Regression:} We also consider the Linear Regression toy example as mentioned in \citep{li2020tilted} where the loss function is square loss function $\ell(h,z)=\frac{1}{2}(h^Tx-y)$ for $h\in\mathcal{H}\subset \mathbb{R}^2$. Similar to logistic regression, we consider two scenario. One, the outlier is Gaussian, and another case where the outlier is Pareto, where we consider $n=2000$. For true data, we consider $X\sim\mathcal{N}(1,0.5)$ and $y=-x+0.5+\mathcal{N}(0,0.5)$. For outlier, we consider
\begin{itemize}
    \item Gaussian outlier: $X_{\text{outlier}}\sim\mathcal{N}(-10,0.5)$ and $y_{\text{outlier}}\sim\mathcal{N}(2,0.5)$,
    \item Pareto outlier: $X_{\text{outlier}}\sim 2\mathcal{P}(1.5,1)$ and $y_{\text{outlier}}\sim\mathcal{P}(1.5,1)$, where $\mathcal{P}(1.5,1)$ is Pareto distribution with shape and scale equal to $1.5$ and $1$, respectively.
\end{itemize}
The results are reported in Table~\ref{tab:gaussian_outlier_linear_regression} and Table~\ref{tab:pareto_outlier_Linear_regression}. The linear regression scenario under Gaussian and Pareto outliers with 10 number of samples are shown in Fig.~\ref{fig:two_subfigures}.

\begin{table}[htbp!]
\caption{Linear Regression with Pareto Outliers: Results averaged over three runs ($n=2,000$ samples), showing mean $\pm$ standard deviation.}
\label{tab:pareto_outlier_Linear_regression}
\centering
\resizebox{\columnwidth}{!}{
\begin{tabular}{lccccc}
\toprule
$\rho$ & $\gamma^{\star}$ & $\mathrm{R}(h_{\gamma^{\star}}(\hat{S}),\mu)$ & $\mathrm{R}(h_{\text{\tiny ERM}}(\hat{S}),\mu)$ & $\gamma_{\text{data}}$ & $\mathrm{R}(h_{\gamma_{\text{data}}}(\hat S),\mu)$  \\
\midrule
$0.049\%$  & $-0.26667\pm 0.0555$ & $0.12530 \pm 0.0001$ & $0.25795 \pm 0.0001$ & $-0.1000\pm 0.0000$ & $0.12530 \pm 0.0002$  \\
$9.13\%$   & $-0.7000\pm 0.0000$     & $0.12183 \pm 0.001$ & $0.6731 \pm 0.0045$ & $-0.3333 \pm 0.04222 $ & $0.1700 \pm 0.0005$ \\
$16.70\%$  & $-1.43333\pm 0.1155$ & $0.1238 \pm 0.0002$ & $0.7580 \pm 0.0006$ & $-0.5333 \pm 0.01555 $ & $0.2755 \pm 0.0001$ \\
$33.35\%$  & $-1.8000\pm 0.0200$      & $0.1275 \pm 0.0001$ & $1.5096 \pm 0.0050$ & $-0.2333 \pm 0.0022$ & $0.3972 \pm 0.0002$ \\
$44.45\%$  & $-2.43333 \pm 0.0088$ & $0.1229 \pm 0.0001$  & $2.2190 \pm 0.0011$ & $-0.1000\pm 0.0000 $ & $0.4908 \pm 0.0001$\\
\bottomrule
\end{tabular}
}
\end{table}
\begin{table}[htbp!]
\caption{Linear Regression with Gaussian Outliers: Results averaged over three runs ($n=2,000$ samples), showing mean $\pm$ standard deviation.}
\label{tab:gaussian_outlier_linear_regression}
\centering
\resizebox{\columnwidth}{!}{
\begin{tabular}{lccccc}
\toprule
$\rho$ & $\gamma^{\star}$ & $\mathrm{R}(h_{\gamma^{\star}}(\hat{S}),\mu)$ & $\mathrm{R}(h_{\text{\tiny ERM}}(\hat{S}),\mu)$ & $\gamma_{\text{data}}$ & $\mathrm{R}(h_{\gamma_{\text{data}}}(\hat S),\mu)$  \\
\midrule
$0.049\%$  & $-10.0000\pm 0.0000$     & $0.7555 \pm 0.0000$  & $1.19923 \pm 0.0001$ & $-3.6333\pm 0.0288$ & $0.75617 \pm 0.0001$  \\
$9.13\%$   & $-0.1000\pm 0.0000$      & $0.16095 \pm 0.0001$ & $0.3215 \pm 0.0005$ & $-3.8000 \pm 0.3466 $ & $0.17083 \pm 0.0009$ \\
$16.70\%$  & $-3.4000\pm 0.1000$      & $0.15515 \pm 0.0000$ & $0.31122 \pm 0.0002$ & $-3.9000 \pm 0.0001  $ & $0.15796 \pm 0.0001$ \\
$33.35\%$  & $-0.1000\pm 0.0200$      & $0.1564 \pm 0.0000$  & $0.3127 \pm 0.0001$ & $-5.0666 \pm 0.0022$ & $0.15858  \pm 0.0000$ \\
$44.45\%$  & $-3.43333 \pm 0.0001$    & $0.1530 \pm 0.0001$  & $0.3070 \pm 0.0001$ & $-5.8000 \pm 0.0080 $ & $0.15399 \pm 0.0001$\\
\bottomrule
\end{tabular}
}
\end{table}

\begin{figure}[ht]
    \centering
    \begin{subfigure}[b]{0.4\linewidth}
        \centering
        \includegraphics[width=\linewidth]{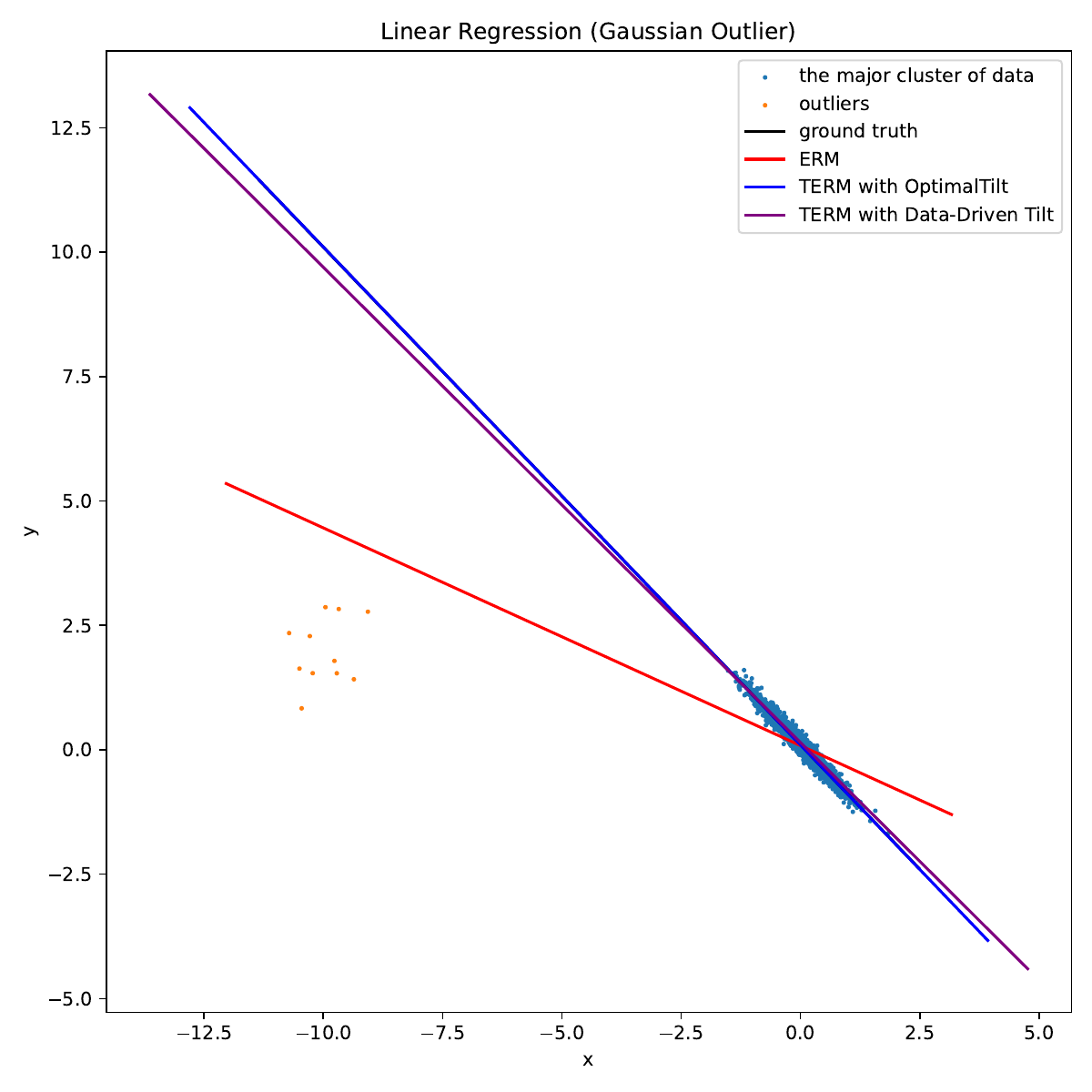}
        \caption{Linear Regression with Gaussian Outlier}
        \label{fig:Gaussian_regression}
    \end{subfigure}
    \begin{subfigure}[b]{0.4\linewidth}
        \centering
        \includegraphics[width=\linewidth]{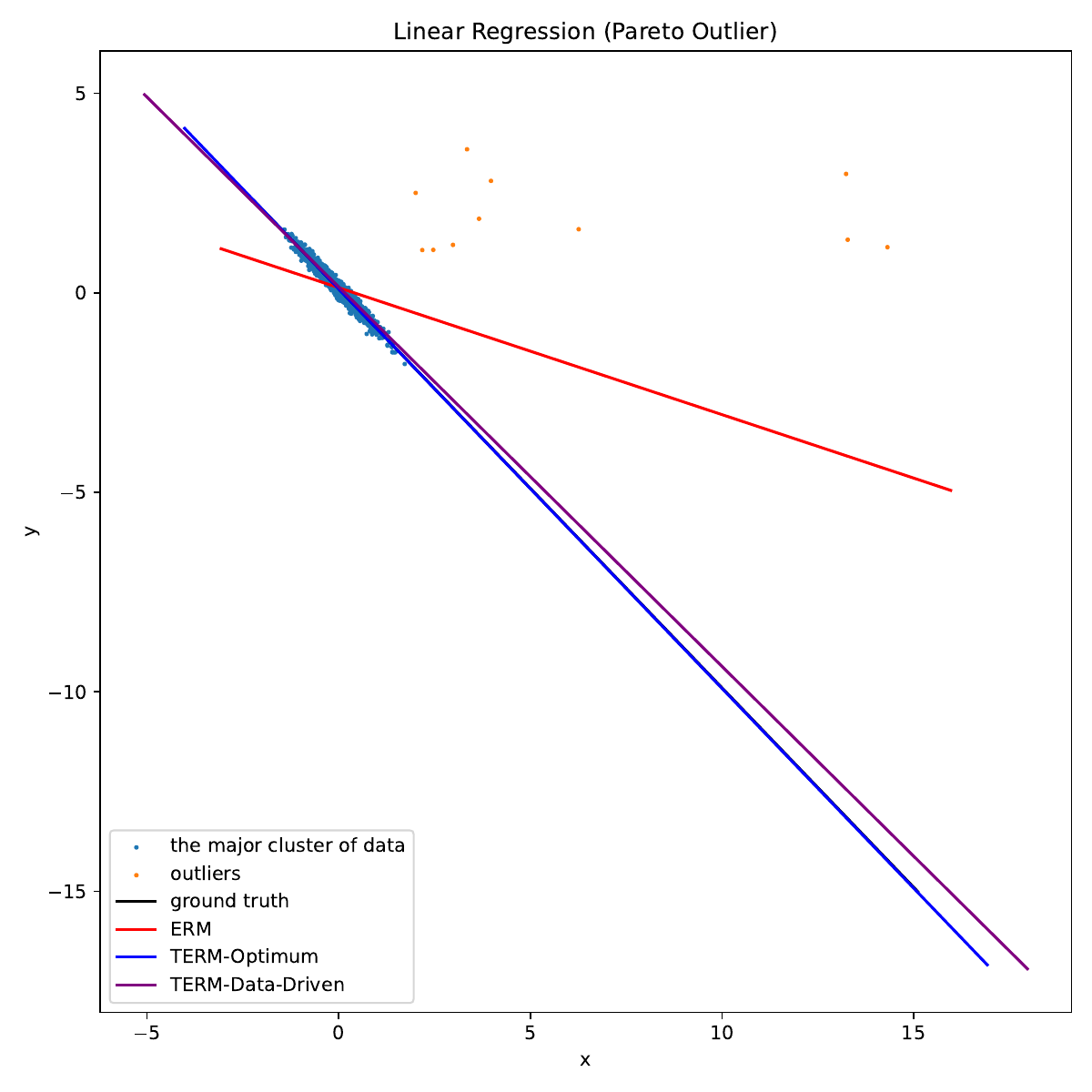} 
        \caption{Linear Regression with Pareto Outlier}
        \label{fig:Pareto_regression}
    \end{subfigure}
    \caption{Linear Regression Experiments with Outlier}
    \label{fig:two_subfigures}
\end{figure}

\textbf{Linear Regression without outlier:} In this scenario, we consider linear regression based on training samples from a Pareto distribution with shift parameter equal to 1.5 . We observe that for $\gamma^{\star}= -0.1\pm 0.00001$, $\mathrm{R}(h_{\gamma^{\star}}(\hat{S}),\mu)$$=10.4924 \pm 6.7273$, $\mathrm{R}(h_{\text{\tiny ERM}}(\hat{S}),\mu)$ $= 20.73892 \pm12.9752$ and $\gamma_{\text{data}}=-0.005\pm0.0000$ and $\mathrm{R}(h_{\gamma_{\text{data}}}(\hat S),\mu)$$= 19.05249 \pm 10.4675$.
\section{Generalization Bounds for Bounded Loss Functions}\label{app_sec_bounded_case}

 Upper bounds under linear empirical risk for bounded loss functions via information theoretic and uniform bounds are studied by \citet{shalev2014understanding} and \citet{xu2012robustness}, respectively. Inspired by these works, in this section, we provide upper bounds on the tilted generalization error via uniform and information-theoretic approaches for bounded loss functions with the convergence rate of $O(1/\sqrt{n})$ which is similar to generalization error under linear empirical risk. The following results are derived for bounded loss function scenario,
 \begin{itemize}
     \item Uniform bounds 
     \item Information-theoretical bounds
     \item KL-regularized TERM under bounded loss function
     \item Rademacher Complexity bounds
     \item Stability bounds
     \item PAC-Bayesian bounds
 \end{itemize}
 The theoretical analysis in this section hinges on two fundamental properties: the Lipschitz continuity of the logarithmic function on a bounded interval and the boundedness of the loss function. When working with bounded loss functions, we can analyze the positive tilt scenario. Furthermore, the bounded case does not restrict us on sample size $n$ that would otherwise be necessary in the unbounded scenario.
\subsection{Uniform bounds for bounded loss}\label{sec: uniform bound hofe}
In this section the following assumption is made.
\begin{assumption}[Bounded loss function]\label{Ass: bounded loss}
    There is a constant $M$ such that the loss function, $(h,z)\mapsto \ell(\LA,z)$ satisfies $0\leq\ell(h,z)\leq M$ uniformly for all $h \in \mathcal{H}, z \in \mathcal{Z}$.
\end{assumption}
For uniform bounds of the type \eqref{eq: gen term}, we decompose the tilted generalization error \eqref{eq:tgen} as follows,
\begin{equation}
  \mathrm{gen}_{\gamma}(h,S)=   \underbrace{\RTrue(h,\mu)- \RTET(h,\mu^{\otimes n})}_{I_1} + \underbrace{\RTET(h,\mu^{\otimes n})-\RTERM(h,S)}_{I_2},
\end{equation}
where $I_1$ is the difference between the population risk and the tilted population risk, and $I_2$ is the non-linear generalization error.

We first derive an upper bound on term $I_1$ in the following Proposition.
\begin{proposition}\label{Prop: True-TT}
    Under Assumption~\ref{Ass: bounded loss}, for $\gamma\in\mathbb{R}$, 
    the difference between the population risk and the tilted population risk satisfies
     \begin{equation}
        \begin{split}
          &\frac{-1}{2\gamma}\var\big(\exp(\gamma \ell (h,Z))\big)\leq  \RTrue(h,\mu)- \RTET(h,\mu^{\otimes n})\leq \frac{-\exp(-2\gamma M)}{2\gamma}\var\big(\exp(\gamma \ell (h,Z))\big).
        \end{split}
    \end{equation}
\end{proposition}
\begin{proof}
For any $h \in \mathcal{H}$ we have
\begin{equation}
        \begin{split}
        \RTrue(h,\mu)&=\mbE[\ell(h,Z)]\\
        &=\mbE\Big[\frac{1}{\gamma}\log\big(\exp(\gamma\ell(h,Z))\big)\Big]\\
        &=\frac{1}{|\gamma|}\Bigg[\mbE_{Z\sim\mu}\Big[-\log\big(\exp(\gamma \ell(h,Z)) \big)-\frac{\exp(-2\gamma M)}{2}\exp(2\gamma \ell(h,Z))
        \\&\qquad+\frac{\exp(-2\gamma M)}{2}\exp(2\gamma \ell(h,Z))\Big]\Bigg]\\
        &\leq \frac{1}{\gamma}\log(\mbE[\exp(\gamma\ell(h,\mu))])+\frac{\exp(-2\gamma M)}{2|\gamma|}\var(\exp(\gamma \ell (h,Z)))\\
        &=\RTET(h,\mu^{\otimes n})+\frac{\exp(-2\gamma M)}{2|\gamma|}\var(\exp(\gamma \ell (h,Z))).
        \end{split}
    \end{equation}

A similar approach can be applied for $\gamma>0$ by using Lemma~\ref{lem:jensen_cap_ln} and the final result holds.
\end{proof}

Note that for $\gamma\rightarrow 0$, the upper and lower bounds in Proposition~\ref{Prop: True-TT} are zero. As the log function is Lipschitz on a bounded interval, applying the Hoeffding inequality to term $I_2$ and Proposition~\ref{Prop: True-TT} to term $I_1$ in \eqref{eq: gen decom new}, we obtain the following upper bound on the tilted generalization error.

\begin{tcolorbox}
    \begin{theorem}\label{thm: uniform hofed}
    Given Assumption~\ref{Ass: bounded loss}, for any fixed $h\in\mathcal{H}$ with probability at least $(1-\delta)$ the tilted generalization error satisfies the upper bound,     
    \begin{equation}
        \begin{split}
          \mathrm{gen}_{\gamma}(h,S)\leq \frac{-\exp(-2\gamma M)}{2\gamma}\operatorname{Var}\big(\exp(\gamma \ell (h,Z))\big) + \frac{\big(\exp(|\gamma|M)-1\big)} {|\gamma|}\sqrt{\frac{\log(2/\delta)}{2n}}.
        \end{split}
    \end{equation}
\end{theorem}
\end{tcolorbox}

\begin{proof}
    We can apply the Proposition~\ref{Prop: True-TT} to provide an upper bound on term $I_1$. Regarding the term $I_5$, we have for $\gamma<0$
    \begin{equation}\label{eq: uniform 2nd}
        \begin{split}
            &\RTET(h,\mu^{\otimes n})-\RTERM(h,S)\\
            &=\frac{1}{\gamma}\log(\mbE_{\mu^{\otimes n}}[\frac{1}{n}\sum_{i=1}^n \exp(\gamma \ell(h,Z_i))])-\frac{1}{\gamma}\log(\frac{1}{n}\sum_{i=1}^n \exp(\gamma \ell(h,Z_i)))\\
            &\leq \frac{\exp(-\gamma M)}{|\gamma|}\big|\mbE_{\tilde Z \sim \mu}[\exp(\gamma \ell(h,\tilde Z))]- \frac{1}{n}\sum_{i=1}^n \exp(\gamma \ell(h,Z_i)) \big|\\
            &\leq\frac{\exp(-\gamma M)(1-\exp(\gamma M))}{|\gamma|}\sqrt{\frac{\log(2/\delta)}{2n}}.
        \end{split}
    \end{equation}
  Similarly, for $\gamma>0$, we have 
  \begin{equation}
      \RTET(h,\mu^{\otimes n})-\RTERM(h,S)\leq \frac{(\exp(\gamma M)-1)}{|\gamma|}\sqrt{\frac{\log(2/\delta)}{2n}}.
  \end{equation}
  Combining this bound with Proposition~\ref{Prop: True-TT} completes the proof.
\end{proof}


\begin{tcolorbox}
    \begin{theorem}\label{thm: uniform hofed lower}
  Under the same assumptions of Theorem~\ref{thm: uniform hofed}, for a fixed $h\in\mathcal{H}$, with probability at least $(1-\delta)$, the tilted generalization error satisfies the lower bound 
    \begin{equation}
        \begin{split}
          \mathrm{gen}_{\gamma}(h,S)\geq \frac{-1}{2\gamma}\operatorname{Var}\big(\exp(\gamma \ell (h,Z))\big) - \frac{\big(\exp(|\gamma|M)-1\big)}{|\gamma|}\sqrt{\frac{\log(2/\delta)} {2n}}.
        \end{split}
    \end{equation}
\end{theorem}
\end{tcolorbox}

\begin{proof}
    The proof is similar to that of  Theorem~\ref{thm: uniform hofed}, by using the lower bound in Proposition~\ref{Prop: True-TT}.
\end{proof}

Combining Theorem~\ref{thm: uniform hofed} and Theorem~\ref{thm: uniform hofed lower}, we derive an upper bound on the absolute value of the titled generalization error.
\begin{tcolorbox}
\begin{corollary}\label{cor: |gen|}
Let $A(\gamma)=(1-\exp(\gamma M))^2$. Under the same assumptions in Theorem \ref{thm: uniform hofed}, with probability at least $(1-\delta)$, and a finite hypothesis space, the absolute value of the titled generalization error satisfies 
    \begin{equation} \nonumber 
        \begin{split}
            \sup_{h\in\mathcal{H}}| \mathrm{gen}_{\gamma}(h,S)|&\leq \frac{\big(\exp(|\gamma|M)-1\big)}{|\gamma|}\sqrt{\frac{\log(\mathrm{card}(\mathcal{H}))+\log(2/\delta)}{2n}}+\frac{\max(1,\exp(-2\gamma M))A(\gamma)}{8|\gamma|},
        \end{split}
    \end{equation}
    where $A(\gamma)=(1-\exp(\gamma M))^2$.
\end{corollary}
\end{tcolorbox}
\begin{proof}
    We can derive the following upper bound on the absolute of tilted generalization error by combining Theorem~\ref{thm: uniform hofed} and Theorem~\ref{thm: uniform hofed lower} for any fixed $h\in\mathcal{H}$
     \begin{equation} \label{eq: abs gen B}
        \begin{split}
           | \mathrm{gen}_{\gamma}(h,S)|&\leq \frac{\big(\exp(|\gamma|M)-1\big)}{|\gamma|}\sqrt{\frac{\log(\mathrm{card}(\mathcal{H}))+\log(2/\delta)}{2n}}+\frac{\max(1,\exp(-2\gamma M))A(\gamma)}{8|\gamma|},
        \end{split}
    \end{equation}
     where $A(\gamma)=(1-\exp(\gamma M))^2$. Then, the final result follow  by applying the uniform bound for all $h\in\mathcal{H}$ using \eqref{eq: abs gen B}.
\end{proof}
\begin{corollary}\label{Cor: conv rate hofe}
  Under the same Assumptions as in Theorem~\ref{thm: uniform hofed} and assuming $\gamma$ is of order $O(n^{-\beta})$ for $\beta>0$, the upper bound on the tilted generalization error in Theorem~\ref{thm: uniform hofed} has a convergence rate of $\max\big(O(1/\sqrt{n}),O(n^{-\beta})\big)$ as $n\rightarrow \infty$.   
\end{corollary}
\begin{proof}
    Using the inequality $\frac{x}{x+1}\leq\log(1+x)\leq x$ and  Taylor expansion for the exponential function,
    \begin{equation}
\exp(|\gamma|M)=1+|\gamma|M+\frac{|\gamma|^2 M^2}{2}+\frac{|\gamma|^3M^3}{6}+O(\gamma^4),
    \end{equation}
    it follows that 
     \begin{equation}
     \begin{split}
         &\frac{\big(\exp(|\gamma|M)-1\big)}{|\gamma|}
         \approx M+\frac{|\gamma| M^2}{2}+\frac{|\gamma|^2M^3}{6}+O(\gamma^3).
    \end{split}
    \end{equation}
 This results in a convergence rate of $O(1/\sqrt{n})$ for $\frac{\big(\exp(|\gamma|M)-1\big)}{|\gamma|}\sqrt{\frac{\log(\mathrm{card}(\mathcal{H}))+\log(2/\delta)}{2n}}$ under $\gamma\rightarrow 0$.

    For the term $\frac{\max(1,\exp(-2\gamma M))(1-\exp(\gamma M))^2}{8|\gamma|}$, using Taylor expansion,  we have the convergence rate of $O(|\gamma|)$ for the first term; this completes the proof.
\end{proof}

\begin{remark}\label{rem: conv rate}
    Choosing $\beta\geq1/2$ in Corollary~\ref{Cor: conv rate hofe} gives a convergence rate of $O(1/\sqrt{n})$ for the tilted generalization error.
\end{remark}

\begin{remark}[The influence of $\gamma$]
   As $\gamma\rightarrow 0$, the upper bound in Corollary \ref{cor: |gen|} on the absolute value of tilted generalization error converges to the upper bound on absolute value of the generalization error under the ERM algorithm obtained by \citet{shalev2014understanding},
    \begin{equation}\label{eq: gen linear}\sup_{h\in\mathcal{H}}|\mathrm{gen}(h,S)|\leq M\sqrt{\frac{\log(\mathrm{card}(\mathcal{H}))+\log(2/\delta)}{2n}}.\end{equation} In particular,  ${\big(\exp(|\gamma|M)-1\big)}/{|\gamma|}\rightarrow M$ and the first term in Corollary \ref{cor: |gen|} vanishes. Therefore, the upper bound converges to a uniform bound on the linear empirical risk.
\end{remark}


Using Corollary~\ref{Cor: conv rate hofe}, we derive an upper bound on the excess risk.

\begin{tcolorbox}
    \begin{corollary}\label{Cor: excess risk}
    Under the same assumptions in Theorem \ref{thm: uniform hofed}, and a finite hypothesis space,  with probability at least $(1-\delta)$, the excess risk of tilted empirical risk satisfies
    \begin{equation} \nonumber 
        \begin{split}
           \mathfrak{E}_{\gamma}(\mu)&\leq \frac{2\big(\exp(|\gamma|M)-1\big)}{|\gamma|}\sqrt{\frac{\log(\mathrm{card}(\mathcal{H}))+\log(2/\delta)}{2n}}+\frac{2\max(1,\exp(-2\gamma M))A(\gamma)}{8|\gamma|},
        \end{split}
    \end{equation}
    where $A(\gamma)=(1-\exp(\gamma M))^2$.
\end{corollary} 
\end{tcolorbox}
\begin{proof}
    It can be proved that, 
    \[ \mathfrak{E}_{\gamma}(\mu)\leq 2\sup_{h\in\mathcal{H}}| \mathrm{gen}_{\gamma}(h,S)|\]
and
 $$\begin{aligned} \mathrm{R}(h_\gamma^* (S),\mu) &\leq \hat{\mathrm{R}}_\gamma(h_\gamma^*(S),\mu)+U\ &\leq \hat{\mathrm{R}}_\gamma(h^*(\mu),\mu)+U\ &\leq \mathrm{R}(h^*(\mu),\mu)+2U, \end{aligned}$$ where $U=\sup_{h\in\mathcal{H}}|\mathrm{R}(h,\mu)-\hat{\mathrm{R}}_{\gamma}(h,S)|=\sup_{h\in\mathcal{H}}|\operatorname{gen}_{\gamma}(h,S)|.$

Note that $\sup_{h\in\mathcal{H}}|\operatorname{gen}_{\gamma}(h,S)|$ can be bounded using Corollary~\ref{cor: |gen|}.
 \end{proof}

\subsection{Information-theoretic bounds for bounded loss functions}\label{sec: expected_bounded}

Next, we provide an upper bound on the expected tilted generalization error. For information-theoretic bounds, we employ the following decomposition of the expected tilted generalization error,
\begin{equation}
   \overline{\mathrm{gen}}_{\gamma}(H,S) = \{ \RTrue(H, P_{H}\otimes \mu) - \RTT(H,P_{H}\otimes \mu^{\otimes n}) \} 
        + \{ \RTT(H,P_{H}\otimes \mu^{\otimes n})-\RTT(H,P_{H,S}) \}.
  \label{eq:decomp}
\end{equation}
The following is helpful in deriving the upper bound.
\begin{proposition}\label{prop: upper via Mutual}
    Under Assumption~\ref{Ass: bounded loss}, the following inequality holds,
\begin{equation*}
    \begin{split}
        &\Big|\RTT(H,P_{H}\otimes \mu^{\otimes n})-\RTT(H,P_{H,S})\Big|\leq \frac{(\exp(|\gamma| M)-1)}{|\gamma|}\sqrt{\frac{ I(H;S)}{2n}}.
    \end{split}
\end{equation*}
\end{proposition}
\begin{proof}
     The proof follows directly from applying Lemma~\ref{lem: coupling} to the $\log(.)$ function and then applying Lemma~\ref{lem: ragins}.
\end{proof}
Using Proposition~\ref{prop: upper via Mutual}, we derive the following upper and lower bounds on the expected tilted generalization error.
\begin{tcolorbox}
    \begin{theorem}\label{thm: expected bound}
     Under Assumption~\ref{Ass: bounded loss},  the expected tilted generalization error satisfies
        \begin{equation*}
      \overline{\mathrm{gen}}_{\gamma}(H,S)\leq \frac{(\exp(|\gamma| M)-1)}{|\gamma|}\sqrt{\frac{ I(H;S)}{2n}}-\frac{\gamma\exp(-\gamma M)}{2}\big(1-\frac{1}{n}\big) \mathbb{E}_{P_H}\big[\var_{\tilde Z\sim\mu}(\ell(H,\tilde Z))\big].
    \end{equation*}
\end{theorem}
\end{tcolorbox}
\begin{proof}
We expand 
     \begin{equation}\label{eq: decomp exp}
           \begin{split}
       &\overline{\mathrm{gen}}_{\gamma}(H,S)= \RTrue(H, P_{H}\otimes \mu) -\RTT(H,P_{H}\otimes \mu^{\otimes n})
        + \RTT(H,P_{H}\otimes \mu^{\otimes n})-\RTT(H,P_{H,S}).
        \end{split}
    \end{equation}

Using Proposition~\ref{prop: upper via Mutual}, it follows that 
    \begin{equation}\label{eq: v1}\begin{split}
        |\RTT(H,P_{H}\otimes \mu^{\otimes n})-\RTT(H,P_{H,S})|\leq
        \frac{(\exp(|\gamma| M)-1)}{|\gamma|}\sqrt{\frac{ I(H;S)}{2n}}.
    \end{split}\end{equation}
   Using the Lipschitz property of the $\log(.)$ function under Assumption~\ref{Ass: bounded loss}, we have for $\gamma>0$,
  \begin{equation}\label{eq: v2}
  \begin{split}
     &\RTrue(H, P_{H}\otimes \mu) -\RTT(H,P_{H}\otimes \mu^{\otimes n})\\
     &=\mbE_{P_{H}\otimes \mu^{\otimes n}}[\frac{1}{\gamma}\log(\exp(\frac{\gamma}{n}\sum_{i=1}^n \ell(H,Z_i)))]-\mbE_{P_{H}\otimes \mu^{\otimes n}}[\frac{1}{\gamma}\log(\frac{1}{n}\sum_{i=1}^n\exp(\gamma\ell(H,Z_i)))]\\
     &\leq \frac{\exp(-\gamma M)}{\gamma}\mbE_{P_{H}\otimes \mu^{\otimes n}}\Big[\exp(\frac{\gamma}{n}\sum_{i=1}^n \ell(H,Z_i))-\frac{1}{n}\sum_{i=1}^n\exp(\gamma\ell(H,Z_i))\Big]\\
     &\leq \frac{-\exp(-\gamma M)}{2\gamma} \mbE_{P_{H}\otimes \mu^{\otimes n}}\Bigg[\Big(\frac{1}{n}\sum_{i=1}^n \gamma^2\ell(H,Z_i)^2\Big)-\Big(\frac{1}{n^2}\big(\sum_{i=1}^n \gamma\ell(H,Z_i)\big)^2 \Big)\Bigg ]\\
     &=\frac{-\exp(-\gamma M)}{2\gamma } (1-1/n)\mathbb{E}_{P_H}\big[\var_{\tilde Z\sim\mu}(\gamma\ell(H,\tilde Z))\big]\\
     &=\frac{-\exp(-\gamma M)\gamma}{2 } (1-1/n)\mathbb{E}_{P_H}\big[\var_{\tilde Z\sim\mu}(\ell(H,\tilde Z))\big]
  \end{split}
  \end{equation}
where $\tilde Z\sim \mu$. A similar results also holds for $\gamma<0$. Combining \eqref{eq: v1}, \eqref{eq: v2}  with \eqref{eq: decomp exp} completes the proof.
\end{proof}
We now give a lower bound via the information-theoretic approach. 
\begin{tcolorbox}
    \begin{theorem}\label{thm: info lower}
    Under the same assumptions in Theorem~\ref{thm: expected bound}, the expected tilted generalization error satisfies
 \begin{equation*}
      \overline{\mathrm{gen}}_{\gamma}(H,S)\geq -\frac{(\exp(|\gamma| M)-1)}{|\gamma|}\sqrt{\frac{ I(H;S)}{2n}}-\frac{\gamma\exp(\gamma M)}{2} \big(1-\frac{1}{n}\big)\mathbb{E}_{P_H}\big[\var_{\tilde Z\sim\mu}(\ell(H,\tilde Z))\big] .
    \end{equation*}   
\end{theorem}
\end{tcolorbox}
\begin{proof}
    Similarly as in the proof of Theorem~\ref{thm: expected bound}, we can prove the lower bound. 
     Using the Lipschitz property of the $\log(.)$ function under Assumption~\ref{Ass: bounded loss}, we have for $\gamma>0$,
  \begin{equation}\label{eq: lower}
  \begin{split}
     &\RTrue(H, P_{H}\otimes \mu) -\RTT(H,P_{H}\otimes \mu^{\otimes n})\\
     &=\mbE_{P_{H}\otimes \mu^{\otimes n}}[\frac{1}{\gamma}\log(\exp(\frac{\gamma}{n}\sum_{i=1}^n \ell(H,Z_i)))]-\mbE_{P_{H}\otimes \mu^{\otimes n}}[\frac{1}{\gamma}\log(\frac{1}{n}\sum_{i=1}^n\exp(\gamma\ell(H,Z_i)))]\\
     &\geq \frac{1}{\gamma}\mbE_{P_{H}\otimes \mu^{\otimes n}}\Big[\exp(\frac{\gamma}{n}\sum_{i=1}^n \ell(H,Z_i))-\frac{1}{n}\sum_{i=1}^n\exp(\gamma\ell(H,Z_i))\Big]\\
     &\geq \frac{-\exp(\gamma M)}{2\gamma} \mbE_{P_{H}\otimes \mu^{\otimes n}}\Bigg[\Big(\frac{1}{n}\sum_{i=1}^n \gamma^2\ell(H,Z_i)^2\Big)-\Big(\frac{1}{n^2}\big(\sum_{i=1}^n \gamma\ell(H,Z_i)\big)^2 \Big)\Bigg ]\\
     &=\frac{-\exp(\gamma M)}{2\gamma } \big(1-\frac{1}{n}\big)\mathbb{E}_{P_H}\big[\var_{\tilde Z\sim\mu}(\gamma\ell(H,\tilde Z))\big]\\
     &=\frac{-\gamma\exp(\gamma M)}{2 } \big(1-\frac{1}{n}\big)\mathbb{E}_{P_H}\big[\var_{\tilde Z\sim\mu}(\ell(H,\tilde Z))\big].
  \end{split}
  \end{equation}
 Similar results also holds for $\gamma<0$. Combining \eqref{eq: v1}, \eqref{eq: lower}  with \eqref{eq: decomp exp} completes the proof. 
\end{proof}
Combining Theorem~\ref{thm: expected bound} and Theorem~\ref{thm: info lower}, we derive an upper bound on the absolute value of the expected tilted generalization error.
\begin{tcolorbox}
    \begin{corollary}\label{cor: expected |gen|}
    Under the same assumptions in Theorem~\ref{thm: expected bound}, the absolute value of the expected titled generalization error satisfies
    \begin{equation} \label{eq: upper info}
        \begin{split}
            | \overline{\mathrm{gen}}_{\gamma}(H,S)|&\leq \frac{(\exp(|\gamma| M)-1)}{|\gamma|}\sqrt{\frac{ I(H;S)}{2n}}+\frac{|\gamma| M^2\exp(|\gamma| M)}{8 }\Big(1-\frac{1}{n}\Big).
        \end{split}
    \end{equation}
\end{corollary}
\end{tcolorbox}
\begin{proof}
    We can derive the upper bound on absolute value of the expected tilted generalization error by combining Theorem~\ref{thm: expected bound} and Theorem~\ref{thm: info lower}.
\end{proof}
\begin{remark}
    In Corollary~\ref{cor: expected |gen|}, we observe that by choosing $\gamma=O(n^{-\beta})$,
    the overall convergence rate of the generalization error upper bound is $\max(O(1/\sqrt{n}),O(n^{-\beta}))$ for bounded $I(H;S)$. For $\beta\geq 1/2$, the convergence rate of \eqref{eq: upper info} is the same as the convergence rate of the expected upper bound in \citep{xu2017information}. In addition, for $\gamma\rightarrow 0$, the upper bound in Corollary~\ref{cor: expected |gen|} converges to the expected upper bound in \citep{xu2017information}.
\end{remark}

Similar to unbounded loss function, the results in this section are non-vacuous for bounded $I(H;S)$. If this assumption is violated, we can apply the individual sample method~\citep{bu2020tightening}, chaining methods \citep{asadi2018chaining}, or conditional mutual information frameworks \citep{steinke2020reasoning} to derive tighter upper bound for the tilted generalization error.

\subsubsection{Individual Sample Bound Discussion}
We can apply previous information-theoretic bounding techniques (e.g., \citep{bu2020tightening}, \citep{asadi2018chaining}, and \citep{steinke2020reasoning}), exploiting the Lipschitz property of the logarithm function (or Lemma~\ref{lem: mean value}) over a bounded support. For example, to derive an upper bound based on the individual sample \citep{bu2020tightening}, using the approach for Proposition~\ref{prop: upper via Mutual} and Lemma~\ref{lem: mean value}, we have for $\gamma>0$, we have
\begin{align}
\overline{\mathrm{R}}_{\gamma}(H,P_{H}\otimes \mu^{\otimes n})-\overline{\mathrm{R}}_{\gamma}(H,P_{H,S}) 
        &\leq \frac{1}{\gamma} \Big(\operatorname{E}_{P_{H}\otimes \mu^{\otimes n}}[\frac{1}{n}\sum_{i=1}^n \exp(\gamma \ell(H,Z_i))]-\operatorname{E}_{P_{H,S}}[\frac{1}{n}\sum_{i=1}^n \exp(\gamma \ell(H,Z_i))]\Big) \nn \\
        &\leq \frac{\big(\exp(\gamma M)-1\big)}{\gamma}\sum_{i=1}^n\frac{1}{n}\sqrt{\frac{I(H;Z_i)}{2}},
\end{align}     
where for the first equality, we applied the Lemma~\ref{lem: mean value} and for the second inequality, we use  that $1 \le \exp(\gamma \ell(H,Z_i)) \le \exp(\gamma M)$ for $\gamma>0$ and the approach in \citep{bu2020tightening} for bounding via individual samples. A similar approach also can be applied to $\gamma<0$. Therefore, the following upper bound holds on the absolute value of the expected tilted generalization error,
 \begin{equation} \nonumber 
        \begin{split}
            | \overline{\mathrm{gen}}_{\gamma}(H,S)|&\leq \frac{(\exp(|\gamma| M)-1)}{|\gamma|n}\sum_{i=1}^n\sqrt{\frac{ I(H;Z_i)}{2n}}+\frac{|\gamma| M^2\exp(|\gamma| M)}{8 }\Big(1-\frac{1}{n}\Big).
        \end{split}
    \end{equation}

\subsection{The KL-Regularized TERM Problem}\label{sec: reg problem}
In this section, similar to unbounded case (Section~\ref{sec: reg problem unbounded}), we study the KL-regularized under bounded loss function for both negative and positive tilts. We consider the same definitions in Section~\ref{sec: reg problem unbounded}. We next provide a parametric upper bound on the tilted generalization error of the tilted Gibbs posterior under bounded loss function.
 \begin{tcolorbox}
  \begin{theorem}\label{thm: tilted gibbs gen}
       Under Assumption~\ref{Ass: bounded loss}, the expected tilted generalization error of the tilted Gibbs posterior satisfies,
        \begin{equation}
            \overline{\mathrm{gen}}_{\gamma}(H,S)\leq \frac{\alpha (\exp(|\gamma| M)-1)^2}{2\gamma^2 n}+\frac{\var(\exp(\gamma \ell(H,\tilde Z)))}{2\gamma}\Big(1/n-\exp(-2\gamma M)\Big).
        \end{equation}
    \end{theorem}
    \end{tcolorbox}
\begin{proof}
    Note that, we have 
    \begin{equation}
    \begin{split}
    \frac{ I(H;S)}{\alpha} 
        &\leq\frac{ I_{\mathrm{SKL}}(H;S)}{\alpha}\\
        &= \RTT(H,P_{H}\otimes \mu)-\RTT(H,P_{H,S})\\
        &\leq \Big|\RTT(H,P_{H}\otimes \mu^{\otimes n})-\RTT(H,P_{H,S})\Big|\\
        &\leq \frac{(\exp(|\gamma| M)-1)}{|\gamma|}\sqrt{\frac{ I(H;S)}{2n}}.
    \end{split}
\end{equation}
Therefore, we have 
\begin{equation}\label{eq: ineq sym}
   \frac{ I(H;S)}{\alpha}\leq  \frac{(\exp(|\gamma| M)-1)}{|\gamma|}\sqrt{\frac{ I(H;S)}{2n}}.
\end{equation}
Solving \eqref{eq: ineq sym}, results in,
\begin{equation}
    \sqrt{I(H;S)}\leq  \alpha\frac{(\exp(|\gamma| M)-1)}{|\gamma|}\sqrt{\frac{1}{2n}} .
\end{equation}
Therefore, we obtain,
\begin{equation}
\Big|\RTT(H,P_{H}\otimes \mu^{\otimes n})-\RTT(H,P_{H,S})\Big| \leq\alpha \frac{(\exp(|\gamma| M)-1)^2}{2\gamma^2 n}.
\end{equation}
Using Theorem~\ref{thm: expected bound}, the final result follows.
\end{proof}
    Similar to Corollary~\ref{cor: expected |gen|}, we derive the following upper bound on the absolute value of the expected tilted generalization error of the tilted generalization error.
    \begin{corollary}\label{Cor: tilted  Gibbs |gen|}
        Under the same assumptions in Theorem~\ref{thm: tilted gibbs gen}, the absolute value of the expected tilted generalization error of the tilted Gibbs posterior satisfies
        \begin{equation}
       |\overline{\mathrm{gen}}_{\gamma}(H,S)|\leq \frac{\alpha (\exp(|\gamma| M)-1)^2}{2\gamma^2 n} +\frac{|\gamma|M^2 \exp(|\gamma| M)}{8}\Big(1-\frac{1}{n}\Big).
        \end{equation}
    \end{corollary}
\begin{remark}[Convergence rate]
    If $\gamma=O(1/n)$, then we obtain a theoretical bound on the convergence rate of $O(1/n)$ for the upper bound on the tilted generalization error of the tilted Gibbs posterior.
\end{remark}
\begin{remark}[Discussion of $\gamma$]
    From the upper bound in Theorem~\ref{thm: tilted gibbs gen}, we can observe that under $\gamma\rightarrow 0$ and Assumption~\ref{Ass: bounded loss}, the upper bound converges to the upper bound on the Gibbs posterior~\citep{aminian2021exact}. For positive tilt $(\gamma>0)$, and sufficient large value of $n$, the upper bound in Theorem~\ref{thm: tilted gibbs gen}, can be tighter than the upper bound on the Gibbs posterior.
\end{remark}

In addition to KL-regularized linear risk minimization, the Gibbs posterior is also the solution to  another problem. For this formulation we recall that  the $\alpha$-{\it R\'enyi divergence} between $P$ and $Q$ is given by $\mathrm{R}_{\alpha}(P\|Q):=\frac{1}{\alpha-1}\log\Big(\int_\mathcal{X} \Big(\frac{\mrd P}{\mrd Q}\Big)^{\alpha}\mrd Q\Big),$ for $\alpha\in(0,1)\cup(1,\infty)$. We also define the {\it conditional R\'enyi divergence}  between $P_{X|Y}$ and $Q_{X|Y}$ as $\mathrm{R}_{\alpha}(P_{X|Y}\|Q_{X|Y}|P_Y):=\frac{1}{\alpha-1}\mathbb{E}_{P_Y}\Big[\log\Big(\int_\mathcal{X} \Big(\frac{\mrd P_{X|Y}}{\mrd Q_{X|Y}}\Big)^{\alpha}\mrd Q_{X|Y}\Big)\Big],$ for $\alpha\in(0,1)\cup(1,\infty)$. 
Here, $P_{X|Y} $ denotes the conditional distribution of $X$ given $Y$.
\begin{proposition}[Gibbs posterior]\label{prop: gibbs algorithm} Suppose that $\gamma=\frac{1}{\alpha}-1$ and $\alpha\in(0,1)\cup(1,\infty)$. Then the solution to the 
minimization problem 
\begin{equation}\label{eq: minimization renyi}
    P_{H|S}^{\alpha}=\mathop{\mathrm{arg}\, \inf}_{P_{H|S}} \left\{ \mbE_{P_S}\Big[\frac{1}{\gamma}\log\Big(\mbE_{P_{H|S}}\Big[\exp\Big(\gamma \hat{\mathrm{R}}(H,S)\Big)\Big]\Big) \Big] + \mathrm{R}_{\alpha}(P_{H|S}\|\pi_H|P_S) \right\} ,
\end{equation}
with $\hat{\mathrm{R}}(H,S)$ the linear empirical risk \eqref{eq: RERM}, 
and the Gibbs posterior,
    \[P_{H|S}^{\alpha}=\frac{\pi_H  [\exp(-\gamma \hat{\mathrm{R}}(H,S))]}{\mbE_{\pi_H }[\exp(-\gamma \hat{\mathrm{R}}(H,S))]},\] 
    where $\pi_H$ is the prior distribution on the space $\mathcal{H}$ of hypotheses.

\end{proposition}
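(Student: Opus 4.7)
The proof proceeds by variational calculus on the regularized functional, exploiting that both the conditional R\'enyi divergence and the expected tilted empirical risk decompose as $P_S$-averages of functionals of $P_{H|s}$ alone. I would therefore fix $s$, write $f(h) := \hat{\mathrm{R}}(h,s)$, and study the pointwise problem
\[
J_s(p) \;=\; \frac{1}{\gamma}\log\!\int p(h)\,e^{\gamma f(h)}\,dh \;+\; \frac{1}{\alpha-1}\log\!\int p(h)^{\alpha}\pi_H(h)^{1-\alpha}\,dh,
\]
minimised over densities $p$ on $\mathcal{H}$ subject to $\int p\,dh = 1$.

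Introducing a Lagrange multiplier $\lambda$ for the normalisation constraint and taking the functional derivative of $J_s$ yields the Euler--Lagrange equation
\[
\frac{e^{\gamma f(h)}}{\gamma\int p\,e^{\gamma f}} \;+\; \frac{\alpha\,p(h)^{\alpha-1}\pi_H(h)^{1-\alpha}}{(\alpha-1)\int p^{\alpha}\pi_H^{1-\alpha}} \;+\; \lambda \;=\; 0 .
\]
Next I would substitute the candidate tilted Gibbs posterior $p^{\ast}(h) = \pi_H(h)e^{-\gamma f(h)}/F_\alpha(s)$, with $F_\alpha(s)=\mbE_{\pi_H}[e^{-\gamma f(H)}]$, and simplify the two integrals using the direct identities $\int p^{\ast}e^{\gamma f} = 1/F_\alpha(s)$ and $\int (p^{\ast})^{\alpha}\pi_H^{1-\alpha} = \mbE_{\pi_H}[e^{-\gamma\alpha f}]/F_\alpha(s)^{\alpha}$. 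The key algebraic step is the assumed relation $\gamma\alpha = 1-\alpha$ (equivalent to $\gamma = 1/\alpha - 1$), which collapses the two $h$-dependent terms in the Euler--Lagrange equation into proportional exponentials of $f(h)$, so they combine into a constant that is absorbed by $\lambda$.

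To upgrade the critical-point verification to a global minimiser I would use the fact that $p\mapsto R_\alpha(p\|\pi_H)$ is convex in its first argument for $\alpha>0$. In the regime $\gamma<0$ (i.e.\ $\alpha>1$), convexity of $p\mapsto \gamma^{-1}\log\mbE_p[e^{\gamma f}]$ is then also immediate, since the negative prefactor $\gamma^{-1}$ turns the concave log of a linear functional into a convex one, and so $J_s$ is strictly convex on the simplex. For $\gamma>0$ (i.e.\ $\alpha\in(0,1)$) I would instead recast
\[
J_s(p) \;=\; \frac{1}{\alpha-1}\log\frac{\int p^{\alpha}\pi_H^{1-\alpha}}{\bigl(\int p\,e^{\gamma f}\bigr)^{\alpha}}
\]
and apply H\"older's inequality with conjugate exponents $(\alpha,\alpha/(\alpha-1))$ to obtain a matching bound on $J_s(p)$ whose equality case identifies the minimiser.

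The main technical obstacle is precisely this uniqueness/optimality step in the non-convex regime $\alpha\in(0,1)$, where the Euler--Lagrange calculation alone only locates stationary points; the H\"older/Donsker--Varadhan-style duality argument above is the cleanest way to upgrade ``stationary'' to ``global minimum,'' and it has the convenient feature that the sign of $\alpha-1$ flips both inequalities simultaneously, so the same calculation covers $\alpha>1$ as well. Once uniqueness is in hand, re-averaging over $P_S$ recovers the conditional statement of the proposition.
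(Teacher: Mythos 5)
Your overall strategy (reduce to a pointwise problem for each $s$, locate a stationary point by Euler--Lagrange, then upgrade to a global minimum via H\"older) is more self-contained than the paper's proof, which simply cites the result of \citet{dvijotham2012unifying} without computation. However, your central verification step does not close. Substituting $p^{\ast}\propto\pi_H e^{-\gamma f}$ into your Euler--Lagrange equation, the first term carries the $h$-dependence $e^{\gamma f(h)}$ while the second carries $(p^{\ast})^{\alpha-1}\pi_H^{1-\alpha}\propto e^{\gamma(1-\alpha)f(h)}$. The relation $\gamma\alpha=1-\alpha$ gives $\gamma(1-\alpha)=\alpha\gamma^{2}$, and $\alpha\gamma^{2}=\gamma$ forces $\alpha\gamma=1$, i.e.\ $1-\alpha=1$, which is impossible on the admissible range. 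So the two exponentials are \emph{not} proportional, their sum is a genuinely non-constant function of $h$, and no constant multiplier $\lambda$ can absorb it: the stated density is not a stationary point of $J_s$.

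Indeed, if you carry out the H\"older step you sketch --- writing $p\,e^{\gamma f}=\bigl(p\,\pi_H^{(1-\alpha)/\alpha}\bigr)\cdot\bigl(\pi_H^{(\alpha-1)/\alpha}e^{\gamma f}\bigr)$ with exponents $\bigl(\alpha,\tfrac{\alpha}{\alpha-1}\bigr)$ and using $\gamma\alpha/(\alpha-1)=-1$ --- you obtain
\begin{equation*}
J_s(p)\;\ge\;-\log \mbE_{\pi_H}\bigl[e^{-f(H)}\bigr],
\end{equation*}
with equality iff $p^{\alpha}\pi_H^{1-\alpha}\propto\pi_H e^{-f}$, i.e.\ iff $p\propto\pi_H e^{-f/\alpha}$; a two-point numerical check with $\alpha=2$, $\gamma=-1/2$ confirms that $\pi_H e^{-f/\alpha}$ strictly beats $\pi_H e^{-\gamma f}$. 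So the minimiser is the Gibbs posterior at inverse temperature $1/\alpha$, not $\gamma$, and your Euler--Lagrange computation and your H\"older computation would have contradicted each other had both been carried to completion. (The paper's own one-line proof states the solution with exponent $-\alpha f$, which disagrees both with the proposition as stated and with the equality case above, so the discrepancy is not entirely of your making; but as written your verification of the stated form does not go through.) A secondary caveat: convexity of $p\mapsto \mathrm{R}_{\alpha}(p\|\pi_H)$ in its first argument is only guaranteed for $\alpha\in(0,1]$, so your ``strictly convex for $\alpha>1$'' claim also needs justification; the H\"older argument sidesteps this entirely and is the right route to global optimality in both regimes.
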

\begin{proof}
    Let us consider the following minimization problem,
    \begin{equation}\label{eq: renyi min}
    \mbox{ find } \arg \min_{P_Y}  \left\{  \frac{1}{\gamma}\log(\mbE_{P_Y}[\exp(\gamma f(Y))]) + R_{\alpha}(P_{Y}\|Q_Y)  \right\} ,\end{equation} 
    where $\gamma=\frac{1}{\alpha}-1$. As shown by \citet{dvijotham2012unifying}, the solution to \eqref{eq: renyi min} is the Gibbs posterior,
    \[P_Y^{\star}=\frac{Q_Y \exp(-\alpha f(Y))}{\mbE_{Q_Y}[\exp(-\alpha f(Y))]}.\]
\end{proof}
If $\alpha\rightarrow 1$, then $\gamma\rightarrow 0$ and \eqref{eq: minimization renyi} converges to the KL-regularized ERM problem. 

The tilted generalization error under the Gibbs posterior can be bounded as follows.
\begin{proposition}\label{prop: gibbs gen}
    Under Assumption~\ref{Ass: bounded loss} when training  with the Gibbs posterior, \eqref{eq: Gibbs Algorithm}, the following upper bound holds on the expected tilted generalization error,
    \begin{equation}\label{eq: Gibbs tilted gen}
       \overline{\mathrm{gen}}_{\gamma}(H,S)\leq \frac{M^2\alpha}{2n}-\frac{\var(\exp(\gamma \ell(H,Z)))}{2\gamma}\exp(-2\gamma M).
    \end{equation}
\end{proposition}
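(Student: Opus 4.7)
The plan is to decompose the expected tilted generalization error as
\begin{equation*}
\overline{\mathrm{gen}}_{\gamma}(H,S) = \underbrace{\mbE_{P_{H,S}}\bigl[\RTrue(H,\mu) - \RERM(H,S)\bigr]}_{T_1} + \underbrace{\mbE_{P_{H,S}}\bigl[\RERM(H,S) - \RTERM(H,S)\bigr]}_{T_2},
\end{equation*}
matching $T_1$ with the first summand $\alpha M^2/(2n)$ and $T_2$ with the variance term in the statement.

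For $T_1$, I would invoke the classical information-theoretic bound on the linear generalization error of the Gibbs posterior. Since the posterior in \eqref{eq: Gibbs Algorithm} is the KL-regularised minimiser of $\RERM$, it satisfies the exact identity $\mbE_{P_{H,S}}[\RTrue(H,\mu) - \RERM(H,S)] = I_{\mathrm{SKL}}(H;S)/\alpha$ (the linear analogue of Proposition~\ref{prop: Gibbs tilted gen}, cf.~\citealp{aminian2021exact}). Applying Lemma~\ref{lem: ragins} sample-by-sample to the bounded loss yields $|\mbE[\RTrue - \RERM]| \le M\sqrt{I(H;S)/(2n)}$, and combining with $I(H;S) \le I_{\mathrm{SKL}}(H;S)$ gives the self-consistent inequality $I_{\mathrm{SKL}}(H;S)/\alpha \le M\sqrt{I_{\mathrm{SKL}}(H;S)/(2n)}$. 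Squaring and rearranging delivers $T_1 \le \alpha M^2/(2n)$, exactly as in the last step of the proof of Theorem~\ref{thm: tilted gibbs gen}.

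For $T_2$, setting $X_i := \exp(\gamma\,\ell(H, Z_i)) \in [\exp(-|\gamma|M), \exp(|\gamma|M)]$ and using the identity
\begin{equation*}
\RERM(H,S) - \RTERM(H,S) = -\frac{1}{\gamma}\Bigl[\log\Bigl(\tfrac{1}{n}\sum_{i=1}^n X_i\Bigr) - \tfrac{1}{n}\sum_{i=1}^n \log X_i\Bigr],
\end{equation*}
I would apply Lemma~\ref{lem:jensen_cap_ln} to the empirical distribution of $X$. A case split on the sign of $\gamma$ is required: for $\gamma<0$ the upper bound of the lemma (with $a = \exp(\gamma M)$) combined with the positive factor $-1/\gamma$ gives the pointwise estimate $\RERM - \RTERM \le -\tfrac{\exp(-2\gamma M)}{2\gamma}\widehat{\var}(X)$; for $\gamma>0$ the same expression follows from the lower bound of the lemma (with $b = \exp(\gamma M)$) once the negative factor $-1/\gamma$ flips the inequality. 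Taking expectation and identifying $\mbE_{P_{H,S}}[\widehat{\var}(X)]$ with the population variance $\var(\exp(\gamma\ell(H,Z)))$ closes the second piece.

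The main obstacle is this last identification: because the Gibbs posterior couples $H$ with $S$, $\mbE_{P_{H,S}}[\widehat{\var}(X)]$ is not automatically equal to $\mbE_{P_H}[\var_{Z\sim\mu}(\exp(\gamma\ell(H,Z)))]$, since under $P_{S\mid H}$ the samples $Z_1,\dots,Z_n$ are exchangeable but not independent. The cleanest way through is to exploit the symmetry of the Gibbs posterior in $Z_1,\dots,Z_n$ together with the symmetry of $\widehat{\var}$ in its arguments, reducing the expected empirical variance to the population variance up to a $(1-1/n)$ factor that is absorbed into the leading constant; verifying that the sign-flip case analysis in the Lemma~\ref{lem:jensen_cap_ln} step delivers a uniform upper bound of the stated form in both branches is a secondary but routine check.
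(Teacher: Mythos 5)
Your proposal follows essentially the same route as the paper: the same decomposition through the linear empirical risk, the bound $\alpha M^2/(2n)$ for the first term via the Gibbs posterior's symmetrized-KL identity (cf.\ \citealp{aminian2021exact}), and Lemma~\ref{lem:jensen_cap_ln} applied to the empirical (uniform) distribution for the second term, with the same sign split on $\gamma$. The empirical-versus-population variance issue you flag at the end is real, but the paper silently makes the same identification; since the proposition is only ever used after bounding the variance by $(1-\exp(\gamma M))^2/4$, which dominates both the empirical and the population variance under Assumption~\ref{Ass: bounded loss}, nothing downstream is affected.
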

\begin{proof}
    Let us consider the following decomposition,
    \begin{equation}
   \overline{\mathrm{gen}}_{\gamma}(H,S)= \RTrue(H, P_{H}\otimes \mu) - \mbE_{P_{H,S}}[\frac{1}{n}\sum_{i=1}^n\ell(H,Z_i)]
    + \mbE_{P_{H,S}}[\frac{1}{n}\sum_{i=1}^n\ell(H,Z_i)]-\RTT(H,P_{H,S}).
\end{equation}
From \citet{aminian2021exact}, for the Gibbs posterior we have 
\[\RTrue(H, P_{H}\otimes \mu) - \mbE_{P_{H,S}}\Big[\frac{1}{n}\sum_{i=1}^n\ell(H,Z_i)\Big]\leq \frac{\alpha M^2}{2n}.\]
In addition, using Lemma~\ref{lem:jensen_cap_ln} for uniform distribution, we have 
\[\frac{1}{n}\sum_{i=1}^n\frac{1}{\gamma}\log\big(\exp(\gamma\ell(H,Z_i))\big)-\frac{1}{\gamma}\log\Big(\frac{1}{n}\sum_{i=1}^n\exp(\gamma\ell(H,Z_i))\Big)\leq \frac{-\var(\exp(\gamma \ell(H,Z)))}{2\gamma}\exp(-2\gamma M).
\]
This completes the proof.
\end{proof}


Furthermore, we can provide an upper bound on the absolute value of the expected tilted generalization error under the Gibbs posterior,
\begin{equation}\label{eq: absolute Gibbs tilted gen}
       \big|\overline{\mathrm{gen}}_{\gamma}(H,S)\big|\leq \frac{M^2\alpha}{2n}+\frac{\max(1,\exp(-2\gamma M))}{8\gamma}(1-\exp(\gamma M))^2.
    \end{equation}
In \eqref{eq: absolute Gibbs tilted gen}, choosing $\gamma=O(1/n)$ we  obtain a proof of a convergence rate of $O(1/n)$ for the upper bound on the absolute value of the expected tilted generalization error of the Gibbs posterior.

\subsection{Other Bounds}\label{app: other bounds}
In this section, we provide upper bounds via Rademacher complexity~\citep{bartlett2002rademacher} and PAC-Bayesian approaches~\citep{alquier2021user} under bounded loss functions assumption.  The results are based on the assumption of bounded loss functions  (Assumption~\ref{Ass: bounded loss}). 
\subsubsection{Rademacher Complexity}\label{sec: rad bound}
Inspired by the work \citep{bartlett2002rademacher}, we provide an upper bound on the tilted generalization error via Rademacher complexity analysis. For this purpose, we need to define the {\it Rademacher complexity}.

As in \citet{bartlett2002rademacher}, for a hypothesis set  $\mathcal{H}$ of functions $h:\mathcal{X}\mapsto \mathcal{Y}$, the  {\it Rademacher complexity}  with respect to the dataset $S$ is 
\begin{equation}
  \mathfrak{R}_{S}(\mathcal{H}):=\mbE_{S,\pmb{\sigma}}\Big[\sup_{h\in\mathcal{H}}\frac{1}{n}\sum_{i=1}^n \sigma_i h(X_i)\Big]\,, \nonumber 
\end{equation}
where $\pmb{\sigma}=\{ \sigma_i\}_{i=1}^n$ are i.i.d {\it Rademacher} random variables; $\sigma_i\in\{-1,1\}$ and $\sigma_i=1$ or $\sigma_i=-1$ with probability 1/2,
 for  $i\in[n]$. 
The {\it empirical Rademacher complexity} $\hat{\mathfrak{R}}_{S}(\mathcal{H})$ with respect to $S$ is defined by 
\begin{equation}
  \hat{\mathfrak{R}}_{S}(\mathcal{H}):=\mbE_{\pmb{\sigma}}\Big[\sup_{h\in\mathcal{H}}\frac{1}{n}\sum_{i=1}^n \sigma_i h(X_i)\Big]\,. \label{eq:emprade}
\end{equation}

To provide an upper bound on the tilted generalization error, first, we apply the uniform bound, Lemma~\ref{lem:uniform_bound}, and Talagrand’s contraction lemma~\citep{talagrand1996new} in order to derive a high-probability upper bound on the tilted generalization error; we employ the notation \eqref{eq:emprade}. 
\begin{tcolorbox}
\begin{proposition}\label{prop: rad tilted gen bound}
    Given Assumptions~\ref{Ass: bounded loss} and assuming the loss function is $M_{\ell'}$-Lipschitz-continuous in a binary classification problem, the tilted generalization error satisfies with probability at least $(1-\delta)$ that 
    \begin{equation} \nonumber 
        \begin{split}
               \widehat{\mathrm{gen}}_{\gamma}(h,S)\leq  2\exp(|\gamma| M)M_{\ell'}\hat{\mathfrak{R}}_{\mathrm{S}}(\mathcal{ H})+  \frac{3(\exp(|\gamma| M)-1)}{|\gamma|} \sqrt{\frac{\log(1/\delta)}{2n}}.
        \end{split}
    \end{equation}
\end{proposition}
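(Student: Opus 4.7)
The plan is to reduce the tilted quantity to a uniform deviation of a linearized loss class, and then apply the Rademacher/Talagrand machinery. Set $A(h):=\mbE_{\tilde Z\sim\mu}[\exp(\gamma\ell(h,\tilde Z))]$ and $B(h):=\tfrac{1}{n}\sum_{i=1}^n\exp(\gamma\ell(h,Z_i))$, so that $\widehat{\mathrm{gen}}_\gamma(h,S)=\gamma^{-1}(\log A(h)-\log B(h))$. Under Assumption~\ref{Ass: bounded loss} both $A(h)$ and $B(h)$ lie in the interval with endpoints $1$ and $\exp(\gamma M)$; hence by the mean value theorem applied to $x\mapsto \gamma^{-1}\log x$,
\[
\widehat{\mathrm{gen}}_\gamma(h,S)\leq L_\gamma\,\bigl|A(h)-B(h)\bigr|,\qquad L_\gamma:=\frac{1}{|\gamma|\min(1,\exp(\gamma M))}.
\]
This reduces the question to bounding $\sup_{h}|A(h)-B(h)|$, the uniform deviation of the recentred class $\mathcal{F}:=\{z\mapsto\exp(\gamma\ell(h,z))-\min(1,\exp(\gamma M)):h\in\mathcal{H}\}$, whose members are non-negative and uniformly bounded by $|\exp(\gamma M)-1|\leq \exp(|\gamma|M)-1$ (using $1-e^{-x}\leq e^x-1$).

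Next I would invoke a one-sided version of the Rademacher uniform bound (Lemma~\ref{lem:uniform_bound}, which yields the $\log(1/\delta)$ in the statement) to obtain, with probability at least $1-\delta$,
\[
\sup_{h\in\mathcal{H}}|A(h)-B(h)|\leq 2\hat{\mathfrak{R}}_S(\mathcal{F})+3\bigl(\exp(|\gamma|M)-1\bigr)\sqrt{\tfrac{\log(1/\delta)}{2n}}.
\]
I would then bound $\hat{\mathfrak{R}}_S(\mathcal{F})$ by Talagrand's contraction lemma (Lemma~\ref{lem:contraction}) applied to the link $\phi(t)=\exp(\gamma t)-\min(1,\exp(\gamma M))$, which is Lipschitz on $[0,M]$ with constant $|\gamma|\max(1,\exp(\gamma M))$, composed with the $M_{\ell'}$-Lipschitz loss (viewed through the underlying scoring function in the binary classification setting):
\[
\hat{\mathfrak{R}}_S(\mathcal{F})\leq |\gamma|\max(1,\exp(\gamma M))\,M_{\ell'}\,\hat{\mathfrak{R}}_S(\mathcal{H}).
\]

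Finally, assembling the three estimates, one verifies a sign-independent cancellation: $L_\gamma\cdot|\gamma|\max(1,\exp(\gamma M))=\exp(|\gamma|M)$ and $L_\gamma\cdot(\exp(|\gamma|M)-1)=(\exp(|\gamma|M)-1)/|\gamma|$, both for $\gamma>0$ and for $\gamma<0$, which reproduces exactly the constants in the statement. The hard part will be precisely this sign-by-sign bookkeeping: the Lipschitz constants of $\gamma^{-1}\log$ on the image interval of $A,B$, of $\phi$ on $[0,M]$, and the recentring shift are all sign-dependent, and their collapse to a clean $\exp(|\gamma|M)$ factor must be verified case by case. A minor additional subtlety is that Talagrand's contraction requires real-valued functions, so in the binary classification context one should work with the underlying scoring function (relative to which $\ell$ is $M_{\ell'}$-Lipschitz) rather than with the discrete $\{-1,+1\}$-valued classifier.
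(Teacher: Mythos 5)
Your proposal follows essentially the same route as the paper's proof: Lipschitz continuity of $x\mapsto\gamma^{-1}\log x$ on the range of $A,B$ to reduce to the uniform deviation $\sup_h|A(h)-B(h)|$, then Lemma~\ref{lem:uniform_bound} on the class $\{z\mapsto\exp(\gamma\ell(h,z))\}$, then two applications of Talagrand's contraction to peel off the exponential link and the loss; your Lipschitz constants $L_\gamma$ and $|\gamma|\max(1,\exp(\gamma M))$ agree with the paper's sign-by-sign values. One slip in your final bookkeeping: for $\gamma<0$ you have $L_\gamma=\exp(|\gamma|M)/|\gamma|$, so the claimed identity $L_\gamma\cdot(\exp(|\gamma|M)-1)=(\exp(|\gamma|M)-1)/|\gamma|$ is false; the cancellation only works if you keep the exact range width $|\exp(\gamma M)-1|=1-\exp(-|\gamma|M)$ (rather than relaxing it to $\exp(|\gamma|M)-1$ before multiplying by $L_\gamma$), since $L_\gamma\cdot\bigl(1-\exp(-|\gamma|M)\bigr)=(\exp(|\gamma|M)-1)/|\gamma|$, which is precisely the paper's computation. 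With that correction the argument matches the paper's proof.
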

\end{tcolorbox}

\begin{proof}
Note that 
$\exp(\gamma M)\leq x\leq 1$ for $\gamma<0$ and $1\leq x\leq \exp(\gamma M)$ for $\gamma>0$. Therefore, we have the Lipschitz constant $\exp(-\gamma M)$ and $1$ for negative and positive $\gamma$, respectively. Similarly, for $\exp(\gamma x)$ and $0<x< M$, we have the Lipschitz constants $\gamma$ and $\gamma\exp(\gamma M)$, for $\gamma<0$ and $\gamma>0$, respectively. For $\gamma<0$, we have 
    \begin{equation}
        \begin{split}
            \widehat{\mathrm{gen}}_{\gamma}(h,S) 
            &=\frac{1}{\gamma}\log\big(\mbE_{Z\sim\mu}[\exp(\gamma\ell(h,Z))]\big)-\frac{1}{\gamma}\log\Big(\frac{1}{n}\sum_{i=1}^n \exp\big(\gamma\ell(\LA,Z_i)\big)\Big)\\
            &\leq |\frac{1}{\gamma}\log\big(\mbE_{Z\sim\mu}[\exp(\gamma\ell(h,Z))]\big)-\frac{1}{\gamma}\log\Big(\frac{1}{n}\sum_{i=1}^n \exp\big(\gamma\ell(\LA,Z_i)\big)\Big)|\\
            &\leq  \frac{1}{|\gamma|}\Big|\log\big(\mbE_{Z\sim\mu}[\exp(\gamma\ell(h,Z))]\big)-\log\bigg(\frac{1}{n}\sum_{i=1}^n \exp\big(\gamma\ell(\LA,Z_i)\big)\bigg)\Big|\\
            &\overset{(a)}{\leq} \frac{\exp(-\gamma M)}{|\gamma|}\Big|\mbE_{Z\sim\mu}[\exp(\gamma\ell(h,Z))]-\frac{1}{n}\sum_{i=1}^n \exp\big(\gamma\ell(h,Z_i)\big)\Big|\\
             &\overset{(b)}{\leq} \frac{\exp(-\gamma M)}{|\gamma|}2\hat{\mathfrak{R}}_{\mathrm{S}}(\mathcal{E \circ L \circ H})+  \frac{3\exp(-\gamma M)(1-\exp(\gamma M))}{|\gamma|} \sqrt{\frac{\log(1/\delta)}{2n}}\\
             &\overset{(c)}{\leq} 2\exp(-\gamma M)\hat{\mathfrak{R}}_{\mathrm{S}}(\mathcal{L \circ H})+  \frac{3\exp(-\gamma M)(1-\exp(\gamma M))}{|\gamma|} \sqrt{\frac{\log(1/\delta)}{2n}}\\
             &\overset{(d)}{\leq} 2\exp(-\gamma M)M_{\ell'}\hat{\mathfrak{R}}_{\mathrm{S}}(\mathcal{ H})+  \frac{3(\exp(-\gamma M)-1)}{|\gamma|} \sqrt{\frac{\log(1/\delta)}{2n}},
        \end{split}
    \end{equation}
    where (a) holds  due to the Lipschitzness of $\log(x)$ in a bounded interval, (b) holds due to  the uniform bound Lemma~\ref{lem:uniform_bound}, (c) and (d) hold due to Talagrand's contraction Lemma~\ref{lem:contraction}). 
    
    Similarly, we can prove for $\gamma>0$, we have 
        \begin{equation}
        \begin{split}
            \widehat{\mathrm{gen}}_{\gamma}(h,S)
            \leq 2\exp(\gamma M)M_{\ell'}\hat{\mathfrak{R}}_{\mathrm{S}}(\mathcal{ H})+  \frac{3(\exp(\gamma M)-1)}{\gamma} \sqrt{\frac{\log(1/\delta)}{2n}}.
             \end{split}
    \end{equation}
\end{proof}

Then, we  obtain an upper bound on the generalization error by combining Proposition~\ref{prop: rad tilted gen bound}, Massart’s lemma \citep{massart2000some} and Lemma~\ref{lem:jensen_cap_ln}.
\begin{tcolorbox}
    \begin{theorem}\label{thm: rad bound}
    Under the same assumptions as in Proposition~\ref{prop: rad tilted gen bound}, assuming a finite hypothesis space, the tilted generalization error satisfies with probability at least $(1-\delta)$ that 
    \begin{equation*}
        \begin{split}
             \mathrm{gen}_{\gamma}(h,S)\leq &\frac{\max(1,\exp(-2\gamma M))}{8\gamma}(\exp(\gamma M)-1)^2+ 2A M_{\ell'}B\frac{\sqrt{2\log(\mathrm{card}(\mathcal{H}))}}{n}\\
             &+  \frac{3(A(\gamma)-1) }{|\gamma|} \sqrt{\frac{\log(1/\delta)}{2n}},
        \end{split}
    \end{equation*}
    where $A(\gamma)=\exp(|\gamma| M)$ and $B^2=\max_{h\in\mathcal{H}}\Big(\sum_{i=1}^n h^2(z_i)\Big)$.
\end{theorem}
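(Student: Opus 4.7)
The plan is to carry over the additive decomposition used throughout Section~\ref{sec: uniform bound hofe}, namely $\mathrm{gen}_\gamma(h,S) = I_1 + I_2$ as in \eqref{eq: gen decom new}, where $I_1 = \mathrm{R}(h,\mu) - \mathrm{R}_\gamma(h,\mu^{\otimes n})$ is the gap between the population risk and the tilted population risk, and $I_2 = \mathrm{R}_\gamma(h,\mu^{\otimes n}) - \widehat{\mathrm{R}}_\gamma(h,S) = \widehat{\mathrm{gen}}_\gamma(h,S)$ is the non-linear generalization error. Bounding each piece separately and summing will yield the three summands in the statement: the first term from $I_1$, the second and third together from $I_2$.

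For $I_1$ I would invoke Proposition~\ref{Prop: True-TT}, which gives
\[
I_1 \leq \frac{-\exp(-2\gamma M)}{2\gamma}\,\mathrm{Var}\bigl(\exp(\gamma \ell(h,Z))\bigr).
\]
Under Assumption~\ref{Ass: bounded loss}, the random variable $\exp(\gamma \ell(h,Z))$ lies in an interval of length $|1-\exp(\gamma M)|$, so by Popoviciu's inequality on variances its variance is at most $(\exp(\gamma M)-1)^2/4$ (this is exactly the step already used inside the proof of Theorem~\ref{thm: VC-bound}). Substituting and writing the prefactor uniformly as $\max(1,\exp(-2\gamma M))/(8|\gamma|)$ in order to cover both signs of $\gamma$ yields the first term of the stated inequality.

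For $I_2$ I would apply Proposition~\ref{prop: rad tilted gen bound} directly, which for a fixed $h$ gives, with probability at least $1-\delta$,
\[
I_2 \leq 2\exp(|\gamma|M)\,M_{\ell'}\,\hat{\mathfrak{R}}_{S}(\mathcal{H}) + \frac{3\bigl(\exp(|\gamma|M)-1\bigr)}{|\gamma|}\sqrt{\frac{\log(1/\delta)}{2n}}.
\]
To remove the data-dependent empirical Rademacher complexity, I would then use the finite-hypothesis assumption and apply Massart's lemma (Lemma~\ref{lem: massart}) to obtain $\hat{\mathfrak{R}}_{S}(\mathcal{H}) \leq B\sqrt{2\log(\mathrm{card}(\mathcal{H}))}/n$, with $B^2 = \max_{h\in\mathcal{H}} \sum_{i=1}^n h^2(z_i)$. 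Plugging this into the Rademacher term produces the remaining two summands with the correct constants.

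The only real obstacle is the sign bookkeeping for $\gamma$: Proposition~\ref{Prop: True-TT} is one-sided and its right-hand side changes sign as $\gamma$ crosses zero, while the Lipschitz constant of $\log$ on the relevant interval appearing in $I_2$ is controlled through $|\gamma|$. These asymmetries are exactly what the $\max(1,\exp(-2\gamma M))$ factor and the $|\gamma|$ in the denominators are designed to absorb, at the cost of a slight looseness for $\gamma>0$. Beyond this, the argument is a direct composition of Proposition~\ref{Prop: True-TT}, Proposition~\ref{prop: rad tilted gen bound}, and Massart's lemma, and requires no new probabilistic step.
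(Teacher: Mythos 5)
Your proposal is correct and follows essentially the same route as the paper's own proof: the decomposition into $I_1+I_2$, Proposition~\ref{Prop: True-TT} (with the variance bound $\operatorname{Var}(\exp(\gamma\ell(h,Z)))\leq(\exp(\gamma M)-1)^2/4$) for the first term, and Proposition~\ref{prop: rad tilted gen bound} combined with Massart's lemma for the second. If anything, you spell out the Popoviciu step and the sign bookkeeping more explicitly than the paper does.
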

\end{tcolorbox}

\begin{proof}
We consider the following decomposition for the Rademacher complexity,
\[\mathrm{gen}_{\gamma}(h,S)=   \RTrue(h,\mu)- \RTET(h,\mu^{\otimes n})+ \RTET(h,\mu^{\otimes n})-\RTERM(h,S),\]
where $\RTrue(h,\mu)- \RTET(h,\mu^{\otimes n})$ can be bounded using Proposition~\ref{Prop: True-TT}. The second term can be bounded by using Proposition~\ref{prop: rad tilted gen bound} and  
   Massart’s lemma (Lemma~\ref{lem: massart}).
\end{proof}

Similar to Remark~\ref{rem: conv rate}, assuming $\gamma= O(1/\sqrt{n})$, we have the convergence rate of $O(1/\sqrt{n})$ for the tilted generalization error. For an infinite 
hypothesis space,  covering number bounds can be applied to the empirical Rademacher complexity, see, e.g., \citep{kakade2008complexity}. 
We note that the VC-dimension and Rademacher complexity bounds are uniform bounds and are independent of the learning algorithms.

\subsubsection{A Stability Bound}\label{sec: stable bound}

In this section, we also study the upper bound on the tilted generalization error from the stability perspective \citep{bousquet2002stability}. In the stability approach, \citep{bousquet2002stability}, the learning algorithm is a deterministic function of $S$. 

For stability analysis, we define the replace-one sample dataset as 
\[S_{(i)}=\{Z_1,\cdots,\tilde{Z}_i,\cdots,Z_n\},\]
where the sample $Z_i$ is replaced by an i.i.d.\,data sample $\tilde{Z}_i$ sampled from $\mu$. To distinguish the hypothesis in the stability approach from the uniform approaches, we consider $h_s:\mathcal{Z}^{n}\mapsto\mathcal{H}$ as the learning algorithm. In the stability approach, the hypothesis is a deterministic function $h_s(S)$ of the dataset. 
We are interested in providing an upper bound on the expected tilted generalization error $\mbE_{P_S}\big[\mathrm{gen}_{\gamma}(h_s(S),S)\big]$.
\begin{tcolorbox}
    \begin{theorem}\label{thm: stable bound}
    Under Assumption~\ref{Ass: bounded loss}, the following upper bound holds with probability at least $(1-\delta)$ under distribution $P_S$,
 \begin{equation}
 \begin{split}
       &\mbE_{P_S}\big[ \mathrm{gen}_{\gamma}(h_s(S),S)\big]\\
       &\leq  \frac{(1-\exp(\gamma M))^2}{8\gamma}\Big(1+\exp(-2\gamma M)\Big) +\exp(|\gamma|M)\mbE_{P_S,\tilde Z}[|\ell(h_s(S),\tilde Z)-\ell(h_s(S_{(i)}),\tilde Z)|].
       \end{split}
        \end{equation}
\end{theorem}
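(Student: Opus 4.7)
The plan is to mirror the uniform decomposition used earlier, namely
\begin{equation*}
\mathrm{gen}_{\gamma}(h_s(S),S) = \underbrace{\RTrue(h_s(S),\mu) - \RTET(h_s(S),\mu^{\otimes n})}_{I_1} + \underbrace{\RTET(h_s(S),\mu^{\otimes n}) - \RTERM(h_s(S),S)}_{I_2},
\end{equation*}
take expectations over $S$, and bound the two pieces separately. The first piece is handled pointwise: Proposition~\ref{Prop: True-TT} gives $I_1 \le \frac{-\exp(-2\gamma M)}{2\gamma}\var(\exp(\gamma\ell(h_s(S),Z)))$, and since $\exp(\gamma\ell)$ lies in an interval of length $|1-\exp(\gamma M)|$, its variance is at most $(1-\exp(\gamma M))^2/4$. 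Adjusting signs across both regimes of $\gamma$ and combining with the trivial bound coming from the other side of Proposition~\ref{Prop: True-TT} produces the first term of the theorem $\frac{(1-\exp(\gamma M))^2}{8\gamma}(1+\exp(-2\gamma M))$.

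The second piece is where the stability argument enters. Writing $A(S):=\mbE_{\tilde Z\sim\mu}[\exp(\gamma\ell(h_s(S),\tilde Z))]$ and $B(S):=\tfrac{1}{n}\sum_i \exp(\gamma\ell(h_s(S),Z_i))$, both take values in the interval $[\exp(-|\gamma|M),\exp(|\gamma|M)]$. Using that $\log$ is $\exp(|\gamma|M)$-Lipschitz on this range gives
\begin{equation*}
|I_2| \le \frac{\exp(|\gamma|M)}{|\gamma|}\,|A(S)-B(S)|.
\end{equation*}
Next, I would pass from the absolute value to a signed quantity via the standard Bousquet--Elisseeff relabeling trick: by the i.i.d.\,assumption and the exchangeability of $Z_i$ and $\tilde Z$,
\begin{equation*}
\mbE_{P_S,\tilde Z}[\exp(\gamma\ell(h_s(S),\tilde Z))] = \mbE_{P_S,\tilde Z}[\exp(\gamma\ell(h_s(S_{(i)}),Z_i))],
\end{equation*}
so that $\mbE_{P_S}[A(S)-B(S)] = \tfrac{1}{n}\sum_i\mbE_{P_S,\tilde Z}[\exp(\gamma\ell(h_s(S_{(i)}),Z_i)) - \exp(\gamma\ell(h_s(S),Z_i))]$. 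Finally, $x\mapsto\exp(\gamma x)$ is $|\gamma|\exp(|\gamma|M)$-Lipschitz on $[0,M]$, so each term is at most $|\gamma|\exp(|\gamma|M)\mbE[|\ell(h_s(S),\tilde Z)-\ell(h_s(S_{(i)}),\tilde Z)|]$. Multiplying by the log-Lipschitz factor $\exp(|\gamma|M)/|\gamma|$ and canceling the $|\gamma|$ from the Lipschitz constant of $\exp$ yields exactly the prefactor $\exp(|\gamma|M)$ in the advertised stability term.

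The main obstacle is the mismatch between bounding $\mbE_{P_S}|A-B|$ (what the Lipschitz step delivers after taking absolute values) and the signed expression $\mbE_{P_S}(A-B)$ (what the relabeling trick handles cleanly). I would resolve this by bounding $\mbE_{P_S}[I_2]$ directly using $\log x\le x-1$ or an equivalent one-sided linearization $\log(A/B)\le (A-B)/\min(A,B)$, which sidesteps absolute values and lets the Bousquet--Elisseeff substitution be applied to the resulting signed difference; the denominator $\min(A,B)\ge\exp(-|\gamma|M)$ then supplies the $\exp(|\gamma|M)$ factor. Adding the bound on $\mbE[I_1]$ completes the proof; the probability qualifier in the statement is vacuous because the left-hand side is an expectation, so the derived inequality in fact holds deterministically.
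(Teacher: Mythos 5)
Your decomposition differs from the paper's in a way that matters. The paper inserts the intermediate quantity $\frac{1}{\gamma}\log\big(\mbE_{P_S}\big[\frac{1}{n}\sum_{i}\exp(\gamma\ell(h_s(S),Z_i))\big]\big)$ --- with the expectation over $S$ \emph{inside} the logarithm --- giving a three-term split. Its middle term is then a difference of logarithms of two \emph{deterministic} scalars, $a=\mbE_{P_S,\mu}[\exp(\gamma\ell(h_s(S),\tilde Z))]$ and $b=\mbE_{P_S}\big[\exp(\gamma\ell(h_s(S),Z_i))\big]$ (the average collapses by symmetry); the Lipschitz bound for $\log$ is applied to these scalars, after which $a-b$ is a single signed expectation to which the replace-one relabeling applies cleanly, and $|a-b|=|\mbE[\cdots]|\le\mbE[|\cdots|]$ delivers the stability term. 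The price of this manoeuvre is the extra Jensen gap between $\frac{1}{\gamma}\log\mbE_{P_S}[\cdot]$ and $\mbE_{P_S}\big[\frac{1}{\gamma}\log(\cdot)\big]$, and that gap is exactly what supplies the ``$1$'' in the factor $(1+\exp(-2\gamma M))$ of the first term. Your two-term split cannot produce that contribution --- Proposition~\ref{Prop: True-TT} accounts only for the $\exp(-2\gamma M)$ portion --- and, more seriously, it leaves you with $\mbE_{P_S}\big[\frac{1}{\gamma}(\log A(S)-\log B(S))\big]$ in which both arguments of the logarithms are random.

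This is where the genuine gap sits. You correctly flag the signed-versus-absolute mismatch, but the proposed fix does not close it. The linearization $\log(A/B)\le (A-B)/\min(A,B)$ (or $\log x\le x-1$) leaves a random denominator whose value is correlated with the sign of $A-B$: since $A-B$ changes sign across realizations of $S$, there is no single constant $c$ with $\frac{A-B}{\gamma\min(A,B)}\le c\,(A-B)$ pointwise, so you cannot pass to $c\,\mbE_{P_S}[A-B]$ and then relabel. Splitting into positive and negative parts reintroduces $\mbE_{P_S}[(A-B)^{+}]$, which the signed replace-one stability quantity does not control --- this is precisely the obstruction that forces Bousquet--Elisseeff-type arguments to work with expectations of signed differences. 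The repair is the paper's: move $\mbE_{P_S}$ inside the logarithm \emph{before} invoking Lipschitzness, and pay for it with the additional Jensen-gap term in the final bound. Two smaller points: the Lipschitz constants you quote ($\exp(|\gamma|M)$ for $\log$ together with $|\gamma|\exp(|\gamma|M)$ for $x\mapsto e^{\gamma x}$) would multiply to $\exp(2|\gamma|M)/|\gamma|\cdot|\gamma|=\exp(2|\gamma|M)$ rather than $\exp(|\gamma|M)$; the correct pairing is $\exp(|\gamma|M)\cdot|\gamma|$ for $\gamma<0$ and $1\cdot|\gamma|\exp(|\gamma|M)$ for $\gamma>0$. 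Your observation that the probability qualifier is vacuous because the left-hand side is a deterministic expectation is fair.
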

\end{tcolorbox}

\begin{proof}
We use the following decomposition of the tilted generalization error;
   \begin{equation}\label{eq: gen decom stable}
    \begin{split}
&\mbE_{P_S}\big[\mathrm{gen}_{\gamma}(h_s(S),S)\big]\\&= \mbE_{P_S}\big[ \RTrue(h_s(S),\mu)- \frac{1}{\gamma}\log(\mbE_{P_S,\mu}[\exp(\gamma \ell(h_s(S),\tilde Z))])\big]
\\&\quad+\mbE_{P_S}\bigg[\frac{1}{\gamma}\log(\mbE_{P_S,\mu}[\exp(\gamma \ell(h_s(S),\tilde Z))]) -\frac{1}{\gamma}\log\Big(\mbE_{P_S}\Big[\frac{1}{n}\sum_{i=1}^n\exp(\gamma \ell(h_s(S),Z_i))\Big]\Big)\bigg]\\
&\quad + \mbE_{P_S}\bigg[\frac{1}{\gamma}\log\Big(\mbE_{P_S}\Big[\frac{1}{n}\sum_{i=1}^n\exp(\gamma \ell(h_s(S),Z_i))\Big]\Big) -\RTERM(h_s(S),S)\bigg].
    \end{split}
\end{equation}
Using Lemma~\ref{lem:jensen_cap_ln}, we have 
\[\begin{split}&\mbE_{P_S}\big[ \RTrue(h_s(S),\mu)- \frac{1}{\gamma}\log(\mbE_{P_S,\mu}[\exp(\gamma \ell(h_s(S),\tilde Z))])\big]\\&\leq \frac{-\exp(-2\gamma M)}{2\gamma}\var_{P_S,\mu}(\exp(\gamma\ell(h_s(S),\tilde{Z}))),
\end{split}\]
and
\[\begin{split}
&\mbE_{P_S}\bigg[\frac{1}{\gamma}\log\Big(\mbE_{P_S}\Big[\frac{1}{n}\sum_{i=1}^n\exp(\gamma \ell(h_s(S),Z_i))\Big]\Big) -\RTERM(h_s(S),S)\bigg]\\ 
&=\frac{1}{\gamma}\log\Big(\mbE_{P_S}\Big[\frac{1}{n}\sum_{i=1}^n\exp(\gamma \ell(h_s(S),Z_i))\Big]\Big) -\mbE_{P_S}\bigg[\frac{1}{\gamma}\log\Big(\frac{1}{n}\sum_{i=1}^n\exp(\gamma \ell(h_s(S),Z_i))\Big)\bigg]\\
&\leq \frac{1}{2\gamma}\var\big(\exp(\gamma\ell(h_s(S),Z_i))\big).
\end{split}\]
Using the Lipschitz property of the log and exponential functions on a closed interval, we have 
\[\begin{split}
    &\Big|\frac{1}{\gamma}\log(\mbE_{P_S,\mu}[\exp(\gamma \ell(h_s(S),\tilde Z))]) -\frac{1}{\gamma}\log\Big(\mbE_{P_S}\Big[\frac{1}{n}\sum_{i=1}^n\exp(\gamma \ell(h_s(S),Z_i))\Big]\Big)\Big|\\
    &=\Big|\frac{1}{\gamma}\log(\mbE_{P_S,\mu}[\exp(\gamma \ell(h_s(S),\tilde Z))]) -\frac{1}{\gamma}\log\Big(\mbE_{P_S}\Big[\exp(\gamma \ell(h_s(S),Z_i))\Big]\Big)\Big|\\
    &\leq \exp(|\gamma|M)\mbE_{P_S,\mu}[|\ell(h_s(S),\tilde Z)-\ell(h_s(S_{(i)}),\tilde Z)|].
\end{split}\]
Finally, we have 
 \begin{equation*}
 \begin{split}
       &\mbE_{P_S}\big[ \mathrm{gen}_{\gamma}(h_s(S),S)\big]\\
       &\leq  \frac{1}{2\gamma}\var\big(\exp(\gamma\ell(h_s(S),Z_i))\big) -\frac{\exp(-2\gamma M)}{2\gamma}\var_{P_S,\mu}(\exp(\gamma\ell(h_s(S),\tilde{Z}))) \\&\qquad+\exp(|\gamma|M)\mbE_{P_S,\mu}[|\ell(h_s(S),\tilde Z)-\ell(h_s(S_{(i)}),\tilde Z)|]\\
       &\leq \frac{(1-\exp(\gamma M))^2}{8\gamma}\Big(1+\exp(-2\gamma M)\Big) +\exp(|\gamma|M)\mbE_{P_S,\mu}[|\ell(h_s(S),\tilde Z)-\ell(h_s(S_{(i)}),\tilde Z)|].
       \end{split}
        \end{equation*}
\end{proof}

 We also consider the uniform stability as in \cite{bousquet2002stability}. 
\begin{definition}[Uniform Stability]
    A learning algorithm is {\it uniform $\beta$-stable} with respect to the loss function if the following holds for all $S\in\mathcal{Z}^n$ and $\tilde{z}_i\in \mathcal{Z}$, 
    \[\big|\ell(h_s(S),\tilde{z}_i)-\ell(h_s(S_{(i)}),\tilde{z}_i)\big|\leq \beta, \quad i\in[n].\]
\end{definition}

\begin{remark}[Uniform Stability]
   Suppose that the learning algorithm is $\beta$-uniform stable with respect to a given loss function. Then, using Theorem~\ref{thm: stable bound}, we have 
   \begin{equation}
 \begin{split}
       &\mbE_{P_S}\big[ \mathrm{gen}_{\gamma}(h_s(S),S)\big]\leq  \frac{(1-\exp(\gamma M))^2}{8|\gamma|}\Big(1+\exp(-2\gamma M)\Big) +\exp(|\gamma|M)\beta.
       \end{split}
        \end{equation}
\end{remark}
Note that for a learning algorithm with uniform $\beta$-stability, where $\beta= O(1/n)$, then with $\gamma$ of order $O(1/n)$, we obtain a guarantee on the convergence rate of $O(1/n)$.
\subsubsection{A PAC-Bayesian Bound}\label{sec: pac}
Inspired by previous works on PAC-Bayesian theory, see, e.g.,\citep{alquier2021user,catoni2003pac}, we derive a high probability bound on the expectation of the tilted generalization error with respect to the posterior distribution over the hypothesis space. 

In the PAC-Bayesian approach, we fix a probability distribution over the hypothesis (parameter) space as prior distribution, denoted as $Q_h$. Then, we are interested in the generalization performance under a data-dependent distribution over the hypothesis space, known as posterior distribution, denoted as $\rho_h$.
\begin{tcolorbox}
    \begin{theorem}\label{thm: pac bound}
    Under Assumption~\ref{Ass: bounded loss}, the following upper bound holds on the conditional expected tilted generalization error with probability at least $(1-\delta)$ under the distribution $P_S$; for any $\eta>0$,
    \begin{equation}
        \begin{split}
            &|\mbE_{\rho_{h}}[\mathrm{gen}_{\gamma}(H,S)]|\leq  \frac{\max(1,\exp(-2\gamma M))(1-\exp(\gamma M))^2}{8|\gamma|} \\&\qquad+ \frac{\eta A^2(\gamma) }{8n} + \frac{(\KLr(\rho_h\|Q_h)+\log(1/\delta))}{\eta},
        \end{split}
    \end{equation}
    where $A(\gamma)=\exp(|\gamma|M)$,  $Q_h$ and $\rho_h$ are prior and posterior distributions over the hypothesis space, respectively.
\end{theorem}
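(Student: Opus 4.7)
The plan is to reuse the decomposition \eqref{eq: gen decom new}, namely $\mathrm{gen}_{\gamma}(h,S)=I_1+I_2$ with $I_1=\RTrue(h,\mu)-\RTET(h,\mu^{\otimes n})$ and $I_2=\RTET(h,\mu^{\otimes n})-\RTERM(h,S)$, take $\mbE_{\rho_h}$ on both sides, bound the two terms separately, and then apply the PAC-Bayesian change-of-measure trick only where it is needed (on $I_2$, which is the only term that depends on the sample $S$). This cleanly separates a deterministic variance-type contribution from a high-probability concentration contribution.

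For $\mbE_{\rho_h}[I_1]$ I would invoke Proposition~\ref{Prop: True-TT} pointwise in $h$, then use the crude bound $\var(\exp(\gamma\ell(h,Z)))\leq (1-\exp(\gamma M))^2/4$ (since $\exp(\gamma\ell(h,Z))$ is confined to an interval of length $|1-\exp(\gamma M)|$), and finally combine both signs of $\gamma$ through the $\max(1,\exp(-2\gamma M))$ factor. This yields exactly the first summand $\max(1,\exp(-2\gamma M))(1-\exp(\gamma M))^2/(8|\gamma|)$ and already handles the sign book-keeping once and for all.

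For $\mbE_{\rho_h}[I_2]$ I would first use the Lipschitzness of $z\mapsto \tfrac{1}{\gamma}\log z$ on the interval $[\exp(-|\gamma|M),\exp(|\gamma|M)]$, with Lipschitz constant $L$, to dominate $|I_2|$ by $L\,|G(h,S)|$, where
\begin{equation*}
G(h,S):=\mbE_{\tilde Z\sim\mu}[\exp(\gamma\ell(h,\tilde Z))]-\tfrac{1}{n}\sum_{i=1}^n\exp(\gamma\ell(h,Z_i)),
\end{equation*}
is a centered sum of $n$ i.i.d.\ bounded summands whose range is controlled by $A=|1-\exp(\gamma M)|$. Then I would apply the Donsker--Varadhan variational representation of KL to obtain, for every $\eta>0$,
\begin{equation*}
\eta\,\mbE_{\rho_h}[G(H,S)]\leq \KLr(\rho_h\|Q_h)+\log\mbE_{Q_h}[\exp(\eta G(H,S))],
\end{equation*}
followed by Markov's inequality in $S$ to replace $\log\mbE_{Q_h}[\exp(\eta G)]$ by $\log(1/\delta)+\log\mbE_S\mbE_{Q_h}[\exp(\eta G)]$ with probability at least $1-\delta$. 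Fubini (legitimate because $Q_h$ is data-free) swaps the expectations, and for each fixed $h$ Hoeffding's lemma applied to the i.i.d.\ bounded summands delivers $\mbE_S[\exp(\eta G(h,S))]\leq \exp(\eta^2 A^2/(8n))$. Rearranging, dividing by $\eta$, multiplying by $L$, and repeating the same argument for $-G$ with $\delta/2$ to cover both directions of the inequality yields the second and third summands. Adding the $I_1$ bound completes the claim.

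The main obstacle is not the concentration step but the bookkeeping of the constants $L$ and $A$ hidden in the statement: one has to verify that the Lipschitz constant of $\tfrac{1}{\gamma}\log(\cdot)$ used on the PAC-Bayes step matches the numerator $\exp(|\gamma|M)/|\gamma|$ worst-case value and that $A$ matches the Hoeffding range of the tilted loss, uniformly for both signs of $\gamma$; a minor additional care is that the Fubini/change-of-measure argument only goes through when the prior $Q_h$ is chosen independently of $S$, which must be stated as a standing hypothesis of the theorem.
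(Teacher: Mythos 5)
Your route is essentially the paper's: the same decomposition into $I_1=\RTrue(h,\mu)-\RTET(h,\mu^{\otimes n})$ and $I_2=\RTET(h,\mu^{\otimes n})-\RTERM(h,S)$, with $I_1$ handled by the Jensen-gap/variance bound (Proposition~\ref{Prop: True-TT}, i.e.\ Lemma~\ref{lem:jensen_cap_ln}, plus $\var(\exp(\gamma\ell))\le(1-\exp(\gamma M))^2/4$) and $I_2$ by Lipschitzness of $\tfrac1\gamma\log$ followed by a PAC-Bayesian concentration step. The only difference is that the paper simply cites Catoni's bound for the second step, whereas you rederive it via Donsker--Varadhan, Markov, Fubini and Hoeffding's lemma; your version has the advantage of making explicit what the undefined constants $L$ and $A$ in the theorem statement must be ($L=\exp(|\gamma|M)/|\gamma|$ and $A=|1-\exp(\gamma M)|$) and that $Q_h$ must be data-free, which the paper leaves implicit.

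One bookkeeping step needs repair. The Lipschitz domination $|I_2(h,S)|\le L\,|G(h,S)|$ is pointwise in $h$, so after integrating against $\rho_h$ you need to control $\mbE_{\rho_h}[|G|]$ (or $\mbE_{\rho_h}[G^+]$ and $\mbE_{\rho_h}[G^-]$ separately), whereas your final step --- running the change-of-measure argument for $G$ and for $-G$ with $\delta/2$ each --- only controls $|\mbE_{\rho_h}[G]|$, which is smaller than $\mbE_{\rho_h}[|G|]$ and does not dominate $|\mbE_{\rho_h}[I_2]|$. The fix is immediate: apply Donsker--Varadhan directly to $\eta|G|$ and use $\mbE_S[\exp(\eta|G|)]\le\mbE_S[\exp(\eta G)]+\mbE_S[\exp(-\eta G)]\le 2\exp(\eta^2A^2/(8n))$, which yields $\mbE_{\rho_h}[|G|]\le \eta A^2/(8n)+(\KLr(\rho_h\|Q_h)+\log(2/\delta))/\eta$ with probability $1-\delta$; this matches the claimed bound up to replacing $\log(1/\delta)$ by $\log(2/\delta)$, a discrepancy already latent in the paper's own two-line sketch. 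With that adjustment the argument is complete.
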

\end{tcolorbox}

\begin{proof}
   We use the following decomposition of the generalization error,
   \[\begin{split}
     \mbE_{\rho_{h}}[\mathrm{gen}_{\gamma}(H,S)]= \mbE_{\rho_h}[\RTrue(H,\mu)-\RTET(H,\mu)+\RTET(H,\mu)-\RTERM(H,S)].
   \end{split}
   \]
   The term $ \mbE_{\rho_h}[\RTrue(H,\mu)-\RTET(H,\mu)]$ can be bounded using Lemma~\ref{lem:jensen_cap_ln}. The second term $\RTET(H,\mu)-\RTERM(H,S)$ can be bounded using the Lipschitz property  of the log function and Catoni's bound \citep{catoni2003pac}.
\end{proof}

\begin{remark}
    Choosing $\eta$ and $\gamma$ such that $\eta^{-1} \asymp 1/\sqrt{n}$ 
    and $\gamma=O(1/\sqrt{n})$ results in a theoretical guarantee on the convergence rate of $O(1/\sqrt{n})$.
\end{remark} 



\end{document}